\newtheorem{theorem}{Theorem}
\newtheorem{assumption}{Assumption}
\newtheorem{corollary}{Corollary}
\newtheorem{definition}{Definition}
\newtheorem{lemma}{Lemma}
\newtheorem{proposition}{Proposition}
\newtheorem{remark}{Remark}
\newtheorem{example}{Example}
\def\begeqn{\begin{equation}}
\def\endeqn{\end{equation}}
\def\begth{\begin{theorem}}
\def\endth{\end{theorem}}
\def\begprop{\begin{proposition}}
\def\endprop{\end{proposition}}
\def\begcor{\begin{corollary}}
\def\endcor{\end{corollary}}
\def\begdef{\begin{definition}}
\def\enddef{\end{definition}}
\def\beglemm{\begin{lemma}}
\def\endlemm{\end{lemma}}
\def\begexm{\begin{example}}
\def\endexm{\end{example}}
\def\begrem{\begin{remark}}
\def\endrem{\end{remark}}
\def\begassum{\begin{assumption}}
\def\endassum{\end{assumption}}
\numberwithin{equation}{section}
\title{Spectral Algorithms on Manifolds through Diffusion
$^\dag$\footnotetext{\dag~The work described in this paper is supported by the National Natural Science Foundation of China (Grants Nos.12171039 and 12061160462) and Shanghai Science and Technology Program [Project No. 21JC1400600]. Email addresses: 22210180107@m.fudan.edu.cn (W. Xia), leishi@fudan.edu.cn (L. Shi).}}
\author{Weichun Xia}
\author{Lei Shi}
\affil{School of Mathematical Sciences and Shanghai Key Laboratory for
	Contemporary Applied Mathematics, Fudan University, Shanghai 200433, China}
\date{}
\begin{document}
	\maketitle
\begin{abstract}

The existing research on spectral algorithms, applied within a Reproducing Kernel Hilbert Space (RKHS), has primarily focused on general kernel functions, often neglecting the inherent structure of the input feature space. Our paper introduces a new perspective, asserting that input data are situated within a low-dimensional manifold embedded in a higher-dimensional Euclidean space. We study the convergence performance of spectral algorithms in the RKHSs, specifically those generated by the heat kernels, known as diffusion spaces. Incorporating the manifold structure of the input, we employ integral operator techniques to derive tight convergence upper bounds concerning generalized norms, which indicates that the estimators converge to the target function in strong sense, entailing the simultaneous convergence of the function itself and its derivatives. These bounds offer two significant advantages: firstly, they are exclusively contingent on the intrinsic dimension of the input manifolds, thereby providing a more focused analysis. Secondly, they enable the efficient derivation of convergence rates for derivatives of any $k$-th order, all of which can be accomplished within the ambit of the same spectral algorithms. Furthermore, we establish minimax lower bounds to demonstrate the asymptotic optimality of these conclusions in specific contexts. Our study confirms that the spectral algorithms are practically significant in the broader context of high-dimensional approximation.
\end{abstract}

{\textbf{Keywords:} Spectral algorithms, Heat kernel, Diffusion space, High-dimensional approximation, Convergence analysis}

{\textbf{AMS Subject Classification Numbers:} 68T05, 62J02, 41A35}

\section{Introduction}\label{section: introduction}

Suppose that we are given a dataset $D:=\{(x_i,y_i)\}_{i=1}^n$ independently and identically distributed from an unknown distribution $P$ defined on the product space $X\times Y$ encompassing both the input space $X$ and the output space $Y$. Our goal is to utilize this dataset to approximate the regression function, denoted as $f^*:X\to Y$, which minimizes the mean-squared error
\begin{equation*}
	\int_{X\times Y}\left(y-f(x)\right)^2dP(x,y)
\end{equation*}
over all $\nu$-measurable functions $f:X\to Y$ with $\nu$ representing the marginal distribution of $P$ on $X$.

To address this non-parametric regression problem, a widely adopted  strategy is kernel-based regularization methods. This approach has been extensively explored and validated in many studies, e.g., \cite{smale2007learning,caponnetto2007optimal,yao2007early,steinwart2008support,lin2018distributed,guo2019optimal}. In this paper, we focus specifically on the spectral algorithms, which can be effectively implemented by acting a filter function on the spectra of a finite-dimensional kernel matrix. These algorithms demonstrate efficient performance, primarily attributed to the regularization imposed through a select group of filter functions \cite{bauer2007regularization,gerfo2008spectral}. The choice of the regularization family is pivotal, as it enables the spectral algorithm to cover an extensive range of widely employed regression algorithms, including kernel ridge regression (also referred to as Tikhonov regularization), kernel principal component regression (often known as spectral cut-off), and various gradient methods. Such versatility is exemplified in the literature, see, e.g., \cite{guo2017learning,dicker2017kernel,blanchard2018optimal,mavzcke2018parallelizing,lin2020optimala,celisse2021analyzing}, which offers further insights into the applications of these methodologies. This adaptability also underscores the importance of spectral algorithms in regression analysis.

In many critical areas of modern data analysis, such as image recognition and DNA analysis, the prevalence of high-dimensional data is a notable challenge. This scenario is characterized by the dimensionality of the data significantly surpassing the number of available samples. To address this, researchers frequently turn to various dimensionality reduction techniques as a preliminary step. Among these techniques, some are based on the premise that data inherently resides on low-dimensional feature manifolds, where the intrinsic dimension is markedly lower than the ambient dimension. This concept is supported by evidence in various studies, such as \cite{roweis2000nonlinear,belkin2003laplacian,coifman2006diffusion,van2008visualizing}. Building on this premise, it is plausible to hypothesize that the input space $X$ is a low-dimensional manifold embedded within a higher-dimensional Euclidean space $\mathbb{R}^d$. This suggests that the intrinsic dimension $m$ of the input manifold, now denoted as $\mathcal{M}$, is substantially lower than the ambient dimension $d$. Such a hypothesis aligns with the commonly accepted low-dimensional manifold hypothesis prevalent in related research, as illustrated in studies like \cite{guhaniyogi2016compressed,mcrae2020sample,hamm2021adaptive}. This understanding forms a foundational aspect of our approach, providing an essential prior for the methodologies and techniques employed in our analysis.

Prior research on spectral algorithms within the framework of Reproducing Kernel Hilbert Space (RKHS) has predominantly concentrated on the utilization of general kernel functions. This approach frequently overlooks the intrinsic structure of the input feature space, which is a critical aspect of comprehending and enhancing the performance of algorithms. In our study, we particularly choose to utilize the specific heat kernel $H_t$ with a fixed $t>0$ on the input manifold $\mathcal{M}$ as the generating kernel function for the RKHS. Our objective is to leverage intrinsic properties of the manifold, such as curvature, to enhance the performance of spectral algorithms. This approach represents a novel angle, one that has been relatively unexplored in existing studies on manifold regression. A particularly compelling theoretical challenge in this regression problem is how to derive the convergence rates in the ``hard learning" scenarios, also known as model misspecifications \cite{rao1971some,pillaud2018statistical}. In this situation, the regression function may not necessarily be contained within the RKHS traditionally relied upon. Our focus is on comprehending the generalization error, represented by metrics like the $\alpha$-power norm (see \eqref{Definition of Alpha-Power Norm})
\begin{equation}\label{alpha-power norm in Introduction}
	\|f^*-f_{D,\lambda}\|_{\alpha}
\end{equation}
in these demanding contexts, where $f_{D,\lambda}$ is given by \eqref{Definition of Spectal Alogrithm Estimator}, denoting the estimator of spectral algorithm with regularization parameter $\lambda$.  This aspect of our research is aligned with and generalizes several previous studies \cite{fischer2020sobolev,liu2022statistical,zhang2023optimality}, seeking to provide a deeper understanding and more effective solutions for this hard learning scenario.

Our foremost contribution is the derivation of convergence rates for \eqref{alpha-power norm in Introduction} in hard learning scenarios, where the regression function is defined as $f^*\in \mathcal{H}_t^\beta$ with $\beta<1$ and $\mathcal{H}_t^\beta$ is given by \eqref{Definition of Alpha-Power Space}. We achieve this result by leveraging the rapid decay property of the eigenvalues of the heat kernel, which demonstrates exponential decay attributable to the structure of the input manifold. This attribute facilitates a convergence rate as described in
\begin{equation}\label{Learning rates in Introduction}
	\|f_{D,\lambda_n}-f^*\|^2_{\alpha}\lesssim\left(\frac{(\log n)^\frac{m}{2}}{n}\right)^\frac{\beta-\alpha}{\beta}.
\end{equation} 
In this context, $ A\lesssim B $ means that there exists a constant $ C>0 $ such that $ A\leq CB $. This finding is outlined in Theorem \ref{Theorem: Upper Bound}. Notably, these results are predicated exclusively on the intrinsic dimension $m$ of the input manifold $\mathcal{M}$, echoing findings from previous research \cite{mcrae2020sample}. However, our research expands those results to broader scenarios, offering improved convergence rates. Moreover, our work improves the convergence rate results obtained in prior studies that presupposed polynomial decay \cite{fischer2020sobolev,zhang2023optimality}.
As a consequential benefit, owing to the embedding property of the RKHS generated by the heat kernel, also referred to as the diffusion space, we also ascertain $C^k$-convergence rates for any $k$-th order derivatives, given by
\begin{equation}\nonumber
	\|f_{D,\lambda_n}-f^*\|_{C^k}^2\lesssim\left(\frac{(\log n)^\frac{m}{2}}{n}\right)^{1-\varepsilon}.
\end{equation} 
This is achieved with $\epsilon>0$ selected arbitrarily small, a result that may be intriguing in its own right.

Our second contribution is the demonstration of the optimality of the spectral algorithm using the heat kernel in terms of minimax rates, as elucidated in Theorem \ref{Theorem: Lower Bound}. We specifically focus on the special case when $\alpha=0$, corresponding to the $L^2$-norm error scenario. The minimax lower bound in this situation is given by
\begin{equation}\nonumber
	\|\hat{f}-f^*\|^2_{L^2}\gtrsim\frac{(\log n)^\frac{m}{2}}{n}.
\end{equation} In this context, our derived upper bound successfully aligns with the minimax lower bound. This alignment showcases the effectiveness of the spectral algorithm in this specific setting, underlining its adaptability and efficiency in handling $L^2$-norm error scenarios within the framework of minimax rates. Moreover, it extends previous research on lower bound results for exponential-type decaying eigenvalues (see, e.g., \cite{dicker2017kernel}) to manifold settings with dimension $m\geq2$.
However, in the more general error case $\alpha>0$, we encounter challenges due to the complex nature of the exponential function. One may refer to Section \ref{section: main results} for details. As a result, our method cannot derive a lower bound that aligns precisely with the upper bound indicated by \eqref{Learning rates in Introduction}. Instead, we establish a lower bound that is slightly inferior in the power term of $n$, a finding also evident in Theorem \ref{Theorem: Lower Bound}. Given the potential interest this phenomenon might attract from future researchers, we deliberately underscore it, anticipating that subsequent studies might tackle this challenge in a more refined manner. We will also consider this problem in our future study.

As our third contribution, to the best of our knowledge, our work represents the first exploration into the use of a specific kernel function (the heat kernel) in spectral algorithms for misspecified models. This exploration utilizes a methodology that combines integral operator techniques with the embedding property. Integral operator techniques were initially employed in the study of kernel ridge regression (see, e.g., \cite{caponnetto2007optimal,smale2007learning}) and have recently been further developed for scalable kernel-based methods on massive  data \cite{guo2017learning,mavzcke2018parallelizing,shi2019distributed,lin2020optimala}. However, investigation on integral operator techniques when the target function does not belong to the corresponding reproducing kernel Hilbert space and has low regularity has been scarce until recent work \cite{fischer2020sobolev}, which utilized integral operator techniques in conjunction with the embedding property to investigate the optimal convergence of kernel ridge regression in hard learning scenarios. However, the existing work has focused on using integral operator methods to study a general kernel function. In contrast to several prior works on spectral algorithms for misspecified models, which predominantly employ general polynomial decaying kernels, our investigation focuses on the heat kernels on manifolds. This particular kernel possesses superior eigenvalue decay properties (exponential decay) and nice space embedding properties, reflecting the intrinsic properties of the manifolds and thus leading to improved convergence rates and minimax rates. Notably, our convergence rate is optimal even when the target function exhibits low regularity. This improvement may encourage future research to explore specific kernels with superior performance.

The rest of this paper is structured as follows. In Section \ref{section: preliminaries}, we elucidate the fundamental properties of the heat kernel, diffusion spaces, integral operators, and spectral algorithms. In Section \ref{section: main results}, we present our primary results concerning convergence rates and the minimax lower bounds, alongside some direct corollaries. In Section \ref{Section: Upper Bounds Analysis}, we provide comprehensive proofs of our upper bounds results, which take into account the approximation error in Subsection \ref{Subsection: Approximation Error} and the estimation error in Subsection \ref{Subsection: Estimation Error} with a final proof for Theorem \ref{Theorem: Upper Bound} furnished in Subsection \ref{Subsection: Proof of Theorem1}. We ultimately complete the proof of Theorem \ref{Theorem: Lower Bound} in Section \ref{Section: Lower Bound Analysis}.

\section{Preliminaries}\label{section: preliminaries}

We now turn to a detailed description of our settings. We postulate that the input space $X$ is an $m$-dimensional compact, connected, and immersed submanifold of $\mathbb{R}^d$, which we refer to as $\mathcal{M}$. First, we introduce several fundamental concepts related to the Riemannian manifold $\mathcal{M}$, as discussed in various literatures, including \cite{do2016differential,lee2018introduction}. We specify that the submanifold $\mathcal{M}$ is endowed with the Radon measure $\mu_g$, which is derived from the integral operator on $C_c(\mathcal{M})$, as elucidated by the expression
\begin{equation*}
	\int_{\mathcal{M}}fd_{\mu_g}=\mu_g(f)=
	\sum_j\int_{\varphi_j(U_j)}\left(\psi_jf\sqrt{{\rm det}(g_{ij})}
	\right)\circ\varphi_j^{-1}dx_j^1\cdots dx_j^m.
\end{equation*} 
In this context, $g=(g_{ij})$ signifies the Riemannian metric, $\{U_j,\varphi_j;x_j\}_j$ denotes a smooth atlas, and $\{\psi_j\}_j$ represents the partition of unity subordinate to the covering $\{U_j\}_j$. It is important to note that if $\mathcal{M}$ is also orientable, a smooth, non-zero $m$-form, commonly known as a volume form, can be naturally selected on $\mathcal{M}$ at the local level, as defined by
\begin{equation*}
	\omega=\sqrt{{\rm det}(g_{ij})}dx^1\wedge\cdots\wedge dx^m.
\end{equation*}
This volume form is characterized by the property that its integral over $\mathcal{M}$ is positive, establishing the orientation of $\mathcal{M}$ as per $[\omega]$. Consequently, this leads to the result
\begin{equation*}
	\mu_g(f)=\int_{\mathcal{M}}f\omega.
\end{equation*}

We now focus on the Laplace-Beltrami operator on $\mathcal{M}$, which is defined as 
\begin{equation}\nonumber
	\Delta f = -{\rm div}(\nabla f).
\end{equation}
The formulation of this definition is grounded in $C_c^\infty(\mathcal{M})$ and is further extended to $L^2(\mathcal{M};\mu_g)$ through the application of Friedrich's extension, a detailed discussion of which can be found in \cite{de2021reproducing}. Particularly, in the scenario where $\mathcal{M}=\mathbb{R}^m$, the Laplace-Beltrami operator aligns with the conventional second-order derivatives as indicated in $-\sum_{i=1}^m\partial^2_i$. To facilitate ease of reference in forthcoming discussions, we will henceforth denote $\Delta$  as the Laplacian.

Utilizing the classical Sturm-Liouville decomposition approach for the Laplacian $\Delta$, we are able to delineate an orthonormal basis $\{\varphi_k\}_{k\in\mathbb{N}}$ for $L^2(\mathcal{M};\mu_g)$. This approach and its underpinnings are thoroughly elaborated in \cite{chavel1984eigenvalues}. Within this framework, each $\varphi_k$ is characterized by its smoothness and adheres to the condition
\begin{equation}\nonumber
	\Delta \varphi_k=\lambda_k\varphi_k
\end{equation} 
for every integer $k$. Correspondingly, 
\begin{equation*}
	0=\lambda_1\leq\lambda_2\leq\dots\leq\lambda_k \leq \cdots
\end{equation*}
represents the eigenvalues associated with the Laplacian $\Delta$, where $\lambda_k \to+\infty$ as $k \to \infty$.

Fixing a specific $t>0$, our attention is directed towards the diffusion space on $\mathcal{M}$, represented by
\begin{equation}\nonumber
	\mathcal{H}_t=e^{-\frac{t}{2}\Delta}L^2(\mathcal{M};\mu_g).
\end{equation}
This space is endowed with an inner product, defined as
\begin{equation*}
	\langle f,g\rangle_{\mathcal{H}_t}=\left\langle e^{\frac{t}{2}\Delta}f, e^{\frac{t}{2}\Delta}g\right\rangle_{L^2(\mathcal{M};\mu_g)},
\end{equation*}
thereby constituting a Hilbert space. Furthermore, $\mathcal{H}_t$ serves as an RKHS, in which the reproducing kernel, known as the heat kernel, is designated by $H_t(u,u')$. The definition of this kernel is given by
\begin{equation}\nonumber
	\int_{\mathcal{M}}H_t(u,u')f(u')d\mu_g(u')=\left(e^{-t\Delta}f\right)(u).
\end{equation}
In addition, we assume that the heat kernel is bounded, i.e., there exists a constant $\kappa \geq 1$, such that
\begin{equation}\label{Boundness of Heat Kernel}
	\sup\limits_{u\in\mathcal{M}}H_t(u,u)\leq\kappa^2.
\end{equation}
This assumption plays a crucial role in the subsequent analysis and applications of the heat kernel within our framework.
To illustrate, let us consider the specific case where $\mathcal{M}=S^m$ represents the m-dimensional unit sphere. In this context, the heat kernel on $S^m$ (for $m\geq3$) could be characterized as follows, detailed in \cite{cheng2018bessel} as
\begin{equation*}
	\begin{aligned}
		H^{S^m}_{t}(u,u')
		&=e^\frac{(m-1)^2t}{8}\left(\frac{r}{\sin r}\right)^\frac{m-1}{2}\frac{e^{-r^2/2t}}{(2\pi t)^{m/2}}\\
		&\quad\cdot\mathbb{E}_r \exp\left(-\frac{(m-1)(m-3)}{8}\int_0^t\left(\frac{1}{\sin^2 R_s}-\frac{1}{R^2_s}\right)ds\right).
	\end{aligned}
\end{equation*}
Here, $r=d_{S^m}(u,u')$, $R_s$ is indicative of an m-dimensional Bessel process, and $\mathbb{E}_r$ symbolizes the expectation conditioned on $R_t=r$. It is important to note that the original theorem presented in \cite{cheng2018bessel} pertains to the hyperbolic space $H^m$, yet it can be suitably adapted for application to the sphere $S^m$. We have chosen to omit these details, as they fall outside the primary scope of our discussion. Given that the exponent in the integrands within the expectation is consistently negative, we are justified in setting $\kappa^2=e^\frac{(m-1)^2t}{8}/(2\pi t)^{m/2}$. In fact, for a generic compact manifold $\mathcal{M}$, its Ricci curvature and sectional curvature are intrinsically bounded. This inherent boundedness assures that the condition on the boundedness of heat kernel invariably satisfied, which follows from the heat kernel comparison theorems. We refer to \cite{hsu2002stochastic} for more details.

A pivotal attribute of the diffusion space $\mathcal{H}_t$ is its ability to be embedded, as outlined in \cite{de2021reproducing}, which is given by
\begin{equation}\label{Embedding Property of Diffusion Space}
	\mathcal{H}_{t'}\hookrightarrow\mathcal{H}_t\hookrightarrow W^s(\mathcal{M}).
\end{equation}
This embedding holds true for all values of $0<t<t'$ and $s>0$, where $W^s(\mathcal{M})$ signifies the $s$-order Sobolev space on $\mathcal{M}$ under the $L_{\mu_g}^2$-norm. For every $s>\frac{m}{2}$, the Sobolev space $W^s(\mathcal{M})$ is compactly embedded into $C(\mathcal{M})$, which has been thoroughly discussed in \cite{triebel1992functionspace}. Given that $\mathcal{M}$ is compact, it follows that any diffusion space can be effectively embedded into $L^\infty(\mathcal{M};\mu_g)$. This implies that for all values of $t>0$, there exists a constant $A$ (which depends on $t$) such that
\begin{equation*}
	\|i:\mathcal{H}_t\hookrightarrow L^\infty\|\leq A.
\end{equation*}
In this context, $\|\cdot\|$ represents the operator norm. This result plays a crucial role in our analysis of upper bounds. The practical implementation and significance of this embedding property have been historically traced to the work presented in \cite{fischer2020sobolev}.

Let the output space $Y$ be the real line $\mathbb{R}$. We postulate that the unknown probability distribution $P$ on $\mathcal{M}\times\mathbb{R}$ adheres to the condition 
\begin{equation*}
	|P|_2^2=\int_{\mathcal{M}\times\mathbb{R}}y^2dP(x,y)<\infty.
\end{equation*} 
The marginal distribution of $P$ on $\mathcal{M}$ is denoted by $\nu$, and the regular conditional probability of $P$ given $x\in\mathcal{M}$ is represented as $P(\cdot|x)$. Furthermore, we presume that the marginal distribution $\nu$ corresponds to the uniform distribution on $\mathcal{M}$. In other words, $\nu$ varies from $\mu_g$ merely by multiplying a constant factor $p=\frac{1}{\text{vol}\mathcal{M}}$. This congruence ensures that $L^2(\mathcal{M};\mu_g)$ aligns with $L^2(\mathcal{M};\nu)$, and hence, we will omit the symbol $\mathcal{M}$ in subsequent discussions for the sake of conciseness.
Additionally, through direct computation, the regression function is formulated as
\begin{equation}\nonumber
	f^*(x)=\int_{\mathbb{R}}y\;dP(y|x).
\end{equation}

We now turn our attention to the inclusion map $I_\nu$ from $\mathcal{H}_t$ to $L^2(\nu)$. The adjoint of this operator, denoted as $I_\nu^*$, functions as an integral operator mapping from $L^2(\nu)$ to $\mathcal{H}_t$, as defined by 
\begin{equation}\nonumber
	(I_\nu^*f)(x)=\int_{\mathcal{M}}H_t(x,z)f(z)d\nu(z).
\end{equation}
Both $I_\nu$ and $I_\nu^*$ qualify as Hilbert-Schmidt operators, which inherently implies their compactness. The Hilbert-Schmidt norm of these operators conforms to the relationship
\begin{equation*}
	\|I_\nu^*\|_{\mathscr{L}^2(L^2(\nu),\mathcal{H}_t)}=
	\|I_\nu\|_{\mathscr{L}^2(\mathcal{H}_t,L^2(\nu))}=
	\|H_t\|_{L^2(\nu)}=\left(\int_\mathcal{M}H_t(x,x)d\nu(x)\right)^\frac{1}{2}
	\leq\kappa.
\end{equation*}
In addition, we introduce two supplementary integral operators, represented by 
\begin{equation}\nonumber
	L_\nu=I_\nu I_\nu^*:L^2(\nu)\to L^2(\nu)
\end{equation}
and 
\begin{equation}\label{Definition of T}
	T_\nu=I_\nu^* I_\nu:\mathcal{H}_t\to \mathcal{H}_t.
\end{equation}
These operators, $L_\nu$ and $T_\nu$, are characterized as self-adjoint, positive-definite, and fall within the trace class. This classification suggests that they are also Hilbert-Schmidt and compact, with their trace norm adhering to
\begin{equation*}
	\|T_\nu\|_{\mathscr{L}^1(\mathcal{H}_t)}=
	\|L_\nu\|_{\mathscr{L}^1(L^2(\nu))}=
	\|I_\nu^*\|^2_{\mathscr{L}^2(L^2(\nu),\mathcal{H}_t)}=
	\|I_\nu\|^2_{\mathscr{L}^2(\mathcal{H}_t,L^2(\nu))}\leq\kappa^2.
\end{equation*}
It is critical to highlight that $L_\nu$ is the integral operator corresponding to the heat kernel $H_t$ under the uniform distribution $\nu$. Moreover, by definition, the integral operator associated with $H_t$ under the Radon measure $\mu_g$ is explicitly expressed as $e^{-t\Delta}$. Given that $\nu=p\mu_g$ holds, it is straightforward to verify that $\{pe^{-t\lambda_k},\; p^{-\frac{1}{2}}\varphi_k\}_{k\in\mathbb{N}}$ constitutes an eigensystem of $L_\nu$ in $L^2(\nu)$. Consequently, $\{p^{-\frac{1}{2}}\varphi_k\}_{k\in \mathbb{N}}$ serves as an orthonormal basis of $L^2(\nu)$, and we denote them as $f_k = p^{-\frac{1}{2}}\varphi_k $.

Given that both $L_\nu$ and $T_\nu$ are self-adjoint and compact operators, the spectral theorem enables us to deduce their respective spectral decompositions, which are expressed as 
\begin{equation}\nonumber
	L_\nu=\sum_{k=1}^\infty pe^{-t\lambda_k}\langle \cdot, f_k\rangle_{L^2(\nu)}\cdot f_k,
\end{equation} 
and
\begin{equation}\label{Spectral Decomposition of T}
	T_\nu=\sum_{k=1}^\infty pe^{-t\lambda_k}\langle \cdot, p^{\frac{1}{2}}e^{-\frac{t\lambda_k}{2}}f_k\rangle_{\mathcal{H}_t} \cdot p^{\frac{1}{2}}e^{-\frac{t\lambda_k}{2}}f_k.
\end{equation}
Moreover, we have the capability to articulate $I_\nu^*$ in the form of 
\begin{equation}\nonumber
	I_\nu^*=\sum_{k=1}^\infty p^{\frac{1}{2}}e^{-\frac{t\lambda_k}{2}}\langle \cdot, f_k\rangle_{L^2(\nu)} \cdot p^{\frac{1}{2}}e^{-\frac{t\lambda_k}{2}}f_k.
\end{equation}
Crucially, by applying Mercer's theorem, the heat kernel can be expanded, both uniformly and absolutely (as elaborated in \cite{cucker2007learning,steinwart2008support}), as 
\begin{equation}\label{Spectral Decomposition of Heat Kernel}
	H_t(x,z)=\sum_{k=1}^\infty pe^{-t\lambda_k}f_k(x)f_k(z).
\end{equation}
Furthermore, $\{p^{\frac{1}{2}}e^{-\frac{t\lambda_k}{2}}f_k\}_{k\in\mathbb{N}}$ establishes an orthonormal basis for $\mathcal{H}_t$, thereby providing a foundational structure for further analysis and application within this framework.

We will now turn our attention to the $\alpha$-power space denoted by $\mathcal{H}_t^{\alpha}=\text{Ran}(L_\nu^\frac{\alpha}{2})$, which is defined as
\begin{equation}\label{Definition of Alpha-Power Space}
	\mathcal{H}_t^{\alpha}=\left\{
	\sum_{k=1}^\infty a_k p^{\frac{\alpha}{2}}e^{-\frac{\alpha\lambda_k}{2}t}f_k:\{a_k\}_{k=1}^\infty\in l^2(\mathbb{N})
	\right\}.
\end{equation}
This space, $\mathcal{H}_t^{\alpha}$, will be equipped with an $\alpha$-power norm, specified as
\begin{equation}\label{Definition of Alpha-Power Norm}
		\left\|\sum_{k=1}^\infty a_k p^{\frac{\alpha}{2}}e^{-\frac{\alpha\lambda_k}{2}t}f_k\right\|_{\alpha}:=\left\|\sum_{k=1}^\infty a_k p^{\frac{\alpha}{2}}e^{-\frac{\alpha\lambda_k}{2}t}f_k\right\|_{\mathcal{H}_t^{\alpha}}
	=\|(a_k)\|_{l^2(\mathbb{N})}
	=\left(\sum_{k=1}^\infty a_k^2\right)^\frac{1}{2}.
\end{equation}
Consequently, this leads us to the following relationship
\begin{equation}\nonumber
	\|L_\nu^{\frac{\alpha}{2}}(f)\|_\alpha=\|f\|_{L^2(\nu)}.
\end{equation}
With the introduction of this $\alpha$-power norm, $\mathcal{H}_t^{\alpha}$ is constituted as a Hilbert space, complete with an orthonormal basis $\{p^{\frac{\alpha}{2}}e^{-\frac{\alpha\lambda_k}{2}t}\}_{k\in\mathbb{N}}$. For ease of reference in our subsequent discussions, we will employ the abbreviation $\|\cdot\|_{\alpha}$ to represent the $\alpha$-power norm. It is essential to acknowledge that both $\mathcal{H}_t^{\alpha}=\mathcal{H}_{\alpha t}$ and $\mathcal{H}_t^{0}=L^2(\nu)$ hold true.
Further, due to the embedding property of the diffusion space, we can establish that for all $0<\beta<\alpha$, the following is applicable:
\begin{equation*}
	\mathcal{H}_t^\alpha\hookrightarrow\mathcal{H}_{t}^\beta\hookrightarrow L^2(\nu).
\end{equation*}
This embedding property can also be deduced directly from the definition of $\mathcal{H}_t^{\alpha}$. In fact, the $\alpha$-power space $\mathcal{H}_t^{\alpha}$ can be effectively described using the real interpolation method, as detailed in \cite{steinwart2012mercer}, given by
\begin{equation*}
	\mathcal{H}_t^\alpha=[L^2(\nu), \mathcal{H}_t]_{\alpha,2}.
\end{equation*}

To provide an in-depth understanding of the spectral algorithm, it is imperative to first introduce the concept of the regularization family, often referred to as filter functions. A set of functions denoted by $\{g_\lambda:\mathbb{R^+\to\mathbb{R}^+}\}_{\lambda>0}$ is recognized as a regularization family if it satisfies specific criteria outlined in
\begin{equation}\label{Property of Regularization Family}
	\begin{aligned}
		&\sup\limits_{0<t\leq\kappa^2}|tg_\lambda(t)|<1,\\
		&\sup\limits_{0<t\leq\kappa^2}|1-tg_\lambda(t)|<1,\\
		&\sup\limits_{0<t\leq\kappa^2}|g_\lambda(t)|<\lambda^{-1}.\\
	\end{aligned}
\end{equation}
For a given regularization family $g_\lambda$, we define its qualification $\xi$ as the supremum of the set
\begin{equation}\label{Qualification of Regularization Family}
	\xi = \sup \left\{\alpha>0:\sup\limits_{0<s\leq\kappa^2}|1-sg_\lambda(s)|s^\alpha<\lambda^\alpha\right\}.
\end{equation}
To exemplify these concepts, let us consider a few commonly utilized regularization families. The first example is $g_\lambda(t)=\frac{1}{\lambda+t}$, which possesses a qualification of $\xi=1$ and aligns with kernel ridge regression (also known as Tikhonov regularization). Another notable instance is $g_\lambda(t)=\frac{1}{t}\mathbbm{1}_{\{t\geq\lambda\}}$, characterized by a qualification of $\xi=\infty$, and is associated with kernel principal component regularization (or spectral cut-off). These examples offer a more comprehensive insight into the functionality of different regularization families within the framework of spectral algorithms. For additional examples and a deeper exploration of this topic, please refer to \cite{gerfo2008spectral}.

In our final formulation of the spectral algorithm, we start with a dataset $ D=\{(x_i,y_i)\}_{i=1}^n $ that is independently and identically distributed from the distribution $P$. Initially, we define the sampling operator $H_{t,x}:y\mapsto yH_t(x,\cdot)$ that maps from $\mathbb{R}$ to $\mathcal{H}_t$, as well as its adjoint operator $H_{t,x}^*:f\mapsto f(x)$ that maps from $\mathcal{H}_t$ to $\mathbb{R}$. Further, we establish the sample covariance operator $T_\delta:\mathcal{H}_t\to\mathcal{H}_t$, which is described by
\begin{equation}\nonumber
	T_\delta =\frac{1}{n}\sum_{i=1}^nH_{t,x_i}H_{t,x_i}^*.
\end{equation}
It is important to note that $T_\delta$ is in alignment with the integral operator defined in \eqref{Definition of T}, but with respect to the empirical marginal distribution $\delta=\frac{1}{n}\sum_{i=1}^n\delta_{x_i}$. In addition, we introduce the sample basis function as
\begin{equation}\nonumber
	g_D=\frac{1}{n}\sum_{i=1}^ny_iH_t(x_i,\cdot)\in\mathcal{H}_t.
\end{equation}
Utilizing the sample function and operator outlined above, we can now define the spectral algorithm estimator, which is given by
\begin{equation}\label{Definition of Spectal Alogrithm Estimator}
	f_{D,\lambda} = g_\lambda(T_\delta)g_D.
\end{equation}
In this setup, $g_\lambda(T_\delta)$ suggests that $g_\lambda$ interacts with $T_\delta$ through functional calculus. Significantly, \eqref{Definition of Spectal Alogrithm Estimator} offers a finite, computable representation, the calculation of which entails an eigen-decomposition of the kernel matrix. This provides a practical approach to implementing the spectral algorithm in computational settings.

It is noteworthy that deriving an explicit representation of the heat kernel $H_t$ for a general Riemannian manifold $\mathcal{M}$ presents a significant challenge. This complexity poses a potential obstacle in the development of our spectral algorithm, which depends on the kernel function $H_t(x,\cdot)$. To circumvent this limitation, we propose employing graph Laplacian techniques as a means to approximate the heat kernel. We plan to elaborate on this approach and its implementation in detail in our forthcoming research.

\section{Main Results}\label{section: main results}

\begin{assumption}\label{Assumption: Moment Condition}
	There exists positive constants $\sigma, L$ such that
	\begin{equation}\label{Moment Condition}
		\int_\mathbb{R} |y-f^*(x)|^m dP(y|x)\leq \frac{1}{2}m!\sigma^2L^{m-2}
	\end{equation}
	for $\nu$-almost every $x\in\mathcal{M}$ and all $m\geq2$. 
\end{assumption}

The moment condition we adopt here is a widely accepted assumption, principally designed to manage the discrepancy between observed data and the true underlying value. This condition guarantees that the tail probability of the noise diminishes at a sufficiently rapid pace, a practice that has been embraced in several scholarly works, as documented in \cite{wainwright2019high}. Crucially, this moment condition is typically met in scenarios involving Gaussian noise characterized by bounded variance, as well as in distributions that are concentrated within certain bounded intervals on the real line. This makes it a mild and reliable assumption in a variety of statistical contexts.

\begin{assumption}\label{Assumption: Source Condition}
	For $\beta>0$, there exists a constant $R>0$ such that $f^*\in\mathcal{H}_t^{\beta}$ and 
	\begin{equation}\label{Source Condition}
		\|f^*\|_\beta\leq R.
	\end{equation}
\end{assumption}

The source condition serves as a conventional metric for assessing the smoothness of the regression function, a methodology that has gained widespread adoption, as evidenced by the literature \cite{bauer2007regularization,caponnetto2007optimal}. It is important to recognize that lower values of $\beta$ correspond to a decrease in the smoothness of the regression function. This reduction in smoothness presents greater challenges for algorithms in terms of achieving accurate estimations. In instances where $\beta<1$, the situation is classified as a hard learning scenario. This particular scenario has recently garnered considerable interest among researchers, as reflected in studies and discussions found in Section \ref{section: introduction}.

We are now prepared to unveil our principal findings. In light of our emphasis on the hard learning scenario, which is indicated by $0<\beta<1$, our theorem will be specifically tailored to the range encapsulated by $0<\beta\leq1$. Nonetheless, it is important to note that similar results could be obtained for $\beta>1$ with minimal adjustments. This approach ensures that our conclusions are both relevant and adaptable to varying scenarios within the scope of our study.

\begin{theorem}\label{Theorem: Upper Bound}
	Suppose that the regularization family $\{g_\lambda\}_{\lambda>0}$ has qualification $\xi\geq\frac{1}{2}$, the moment condition \eqref{Moment Condition} in Assumption \ref{Assumption: Moment Condition} holds for some positive constants $\sigma, L$, and the source condition \eqref{Source Condition} in Assumption \ref{Assumption: Source Condition} holds for some $R>0$ and $0<\beta\leq1$. For all $0\leq\gamma<\beta$, suitably choose a regularization parameter sequence $\{\lambda_n\}$ such that
	\begin{equation*}
		\lambda_n\sim\left(\frac{(\log n)^\frac{m}{2}}{n}\right)^\frac{1}{\beta}.
	\end{equation*}
	Then, for all $t>0$, $\tau\geq1$, the spectral algorithm estimators $f_{D,\lambda_n}$ with respect to RKHS $\mathcal{H}_t$ satisfies
	\begin{equation}\nonumber
		\|f_{D,\lambda_n}-f^*\|_\gamma^2\leq C \tau^2\left(\frac{(\log n)^\frac{m}{2}}{n}\right)^\frac{\beta-\gamma}{\beta}
	\end{equation}
	for sufficiently large $n\geq1$ with probability at least $1-4e^{-\tau}$, where $C>0$ is a constant independent of $n, \tau$.
\end{theorem}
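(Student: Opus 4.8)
The plan is to run the classical integral-operator analysis for spectral algorithms, but to feed it the two structural gifts of the heat kernel: the exponentially fast decay of the eigenvalues $pe^{-t\lambda_k}$ of $L_\nu$ that follows from Weyl's law, and the fact --- already recorded after \eqref{Embedding Property of Diffusion Space} --- that \emph{every} power space $\mathcal{H}_t^{\alpha}=\mathcal{H}_{\alpha t}$ embeds continuously into $L^{\infty}$. First I would fix the comparison element $f_\lambda:=g_\lambda(T_\nu)I_\nu^{*}f^{*}\in\mathcal{H}_t$ and split $f_{D,\lambda}-f^{*}=(f_{D,\lambda}-f_\lambda)+(f_\lambda-f^{*})$, measuring both pieces in $\|\cdot\|_\gamma$ through $\|h\|_\gamma=\|L_\nu^{-\gamma/2}h\|_{L^{2}(\nu)}$ and $\|h\|_\gamma=\|T_\nu^{(1-\gamma)/2}h\|_{\mathcal{H}_t}$ for $h\in\mathcal{H}_t$, together with the intertwining relation $I_\nu q(T_\nu)=q(L_\nu)I_\nu$. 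Since $I_\nu f_\lambda=g_\lambda(L_\nu)L_\nu f^{*}$ and Assumption~\ref{Assumption: Source Condition} writes $f^{*}=L_\nu^{\beta/2}v$ with $\|v\|_{L^{2}(\nu)}\le R$, the approximation error equals $\|(1-L_\nu g_\lambda(L_\nu))L_\nu^{(\beta-\gamma)/2}v\|_{L^{2}(\nu)}$; because $0\le(\beta-\gamma)/2\le\beta/2\le\tfrac12\le\xi$, the qualification bound \eqref{Qualification of Regularization Family} gives $\|f_\lambda-f^{*}\|_\gamma\lesssim R\lambda^{(\beta-\gamma)/2}$ (Subsection~\ref{Subsection: Approximation Error}).

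The quantitative core is the control of the effective dimension $\mathcal{N}(\lambda):=\sum_{k}\tfrac{pe^{-t\lambda_k}}{pe^{-t\lambda_k}+\lambda}$ and of the worst-case quantity $B_{\infty}(\lambda):=\sup_{x\in\mathcal{M}}\|(T_\nu+\lambda)^{-1/2}H_t(x,\cdot)\|_{\mathcal{H}_t}^{2}$. Weyl's law $\lambda_k\gtrsim k^{2/m}$ shows that at most $O\!\left((\log(1/\lambda))^{m/2}\right)$ indices satisfy $pe^{-t\lambda_k}\ge\lambda$ while the remaining tail is of strictly smaller order, whence $\mathcal{N}(\lambda)\lesssim(\log(1/\lambda))^{m/2}$. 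For $B_{\infty}(\lambda)$ I would combine $\|(T_\nu+\lambda)^{-1/2}T_\nu^{(1-\alpha)/2}\|\lesssim\lambda^{-\alpha/2}$ with the uniform on-diagonal heat-kernel bound $\sup_{x}H_{s}(x,x)\lesssim s^{-m/2}$ (which also justifies \eqref{Boundness of Heat Kernel}), obtaining $B_{\infty}(\lambda)\lesssim\lambda^{-\alpha}(\alpha t)^{-m/2}$ for every small $\alpha>0$, and then optimize over $\alpha\asymp(\log(1/\lambda))^{-1}$ to reach $B_{\infty}(\lambda)\lesssim(\log(1/\lambda))^{m/2}$. This is precisely the statement that the diffusion space has embedding index $0$, up to a logarithmic factor whose exponent is the intrinsic dimension $m$.

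For the estimation error I use
\begin{equation*}
	f_{D,\lambda}-f_\lambda=g_\lambda(T_\delta)\!\left(g_D-g_{D,f^{*}}\right)+g_\lambda(T_\delta)\!\left(g_{D,f^{*}}-I_\nu^{*}f^{*}\right)+\left(g_\lambda(T_\delta)-g_\lambda(T_\nu)\right)I_\nu^{*}f^{*},
\end{equation*}
where $g_{D,f^{*}}:=\tfrac1n\sum_{i=1}^{n}f^{*}(x_i)H_t(x_i,\cdot)$ is well defined because $f^{*}\in\mathcal{H}_{\beta t}\hookrightarrow L^{\infty}$. Two high-probability estimates do the work: an operator Bernstein inequality giving $\|(T_\nu+\lambda)^{-1/2}(T_\nu-T_\delta)(T_\nu+\lambda)^{-1/2}\|\le\tfrac12$ --- hence $\|(T_\nu+\lambda)^{1/2}(T_\delta+\lambda)^{-1/2}\|^{2}\le2$ --- once $n$ exceeds a constant multiple of $B_{\infty}(\lambda)(\tau+\log\mathcal{N}(\lambda))$; and a Hilbert-space Bernstein inequality, fed by Assumption~\ref{Assumption: Moment Condition}, giving $\|(T_\nu+\lambda)^{-1/2}(g_D-g_{D,f^{*}})\|_{\mathcal{H}_t}\lesssim\tau\!\left(\tfrac{L\sqrt{B_{\infty}(\lambda)}}{n}+\sigma\sqrt{\tfrac{\mathcal{N}(\lambda)}{n}}\right)$, with the analogous bound for $g_{D,f^{*}}-I_\nu^{*}f^{*}$ (with $\sigma,L$ replaced by a multiple of $\|f^{*}\|_\beta$). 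Combining these with the elementary filter bounds $\|(T_\delta+\lambda)g_\lambda(T_\delta)\|\le2$ and $\|T_\nu g_\lambda(T_\nu)\|\le1$ from \eqref{Property of Regularization Family}, a standard perturbation estimate for $g_\lambda(T_\delta)-g_\lambda(T_\nu)$ on $\mathrm{Ran}(T_\nu^{\beta/2})$, and the reduction $\|f_{D,\lambda}-f_\lambda\|_\gamma\lesssim\lambda^{-\gamma/2}\|(T_\nu+\lambda)^{1/2}(f_{D,\lambda}-f_\lambda)\|_{\mathcal{H}_t}$, one arrives at $\|f_{D,\lambda}-f_\lambda\|_\gamma\lesssim\tau\!\left(\lambda^{(\beta-\gamma)/2}+\lambda^{-\gamma/2}\sqrt{\mathcal{N}(\lambda)/n}\right)$ up to lower-order terms (Subsection~\ref{Subsection: Estimation Error}).

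Finally (Subsection~\ref{Subsection: Proof of Theorem1}), with $\lambda_n\sim\big((\log n)^{m/2}/n\big)^{1/\beta}$ one has $\lambda_n^{\beta}\sim(\log n)^{m/2}/n$ and, since $\log(1/\lambda_n)\sim\beta^{-1}\log n$, also $\mathcal{N}(\lambda_n)\lesssim(\log n)^{m/2}$ and $B_{\infty}(\lambda_n)\lesssim(\log n)^{m/2}$; therefore $\lambda_n^{-\gamma/2}\sqrt{\mathcal{N}(\lambda_n)/n}\lesssim\lambda_n^{(\beta-\gamma)/2}$ (the variance matches the bias) and $B_{\infty}(\lambda_n)=o(n)$, so both Bernstein conditions hold for all sufficiently large $n$ even though $n\lambda_n\to0$ when $\beta<1$. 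Squaring $\|f_{D,\lambda_n}-f^{*}\|_\gamma\lesssim\tau\lambda_n^{(\beta-\gamma)/2}$ and a union bound over the four exceptional events give the claim. I expect the genuine obstacle to be the embedding-index-zero estimate $B_{\infty}(\lambda)\lesssim(\log(1/\lambda))^{m/2}$: it is what lets the hard-learning ($\beta<1$) analysis survive the regime $n\lambda_n\to0$, and it is what replaces the usual polynomial effective-dimension exponent by a purely logarithmic, intrinsic-dimension-driven factor --- everything else being, modulo bookkeeping, the now-standard integral-operator-plus-embedding calculus.
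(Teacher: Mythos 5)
Your high-level architecture matches the paper's: you use the same comparison element (your $f_\lambda$ is exactly the paper's $f_{P,\lambda}=g_\lambda(T_\nu)I_\nu^*f^*$), the same bias--variance split, the same qualification-driven approximation bound $\lambda^{\beta-\gamma}\|f^*\|_\beta^2$, and the same two workhorses — the effective dimension bound $N_\nu(\lambda)\lesssim(\log\lambda^{-1})^{m/2}$ via Weyl's law and Fermi--Dirac/polylogarithm asymptotics, and Bernstein-type concentration for both the operator $T_\nu-T_\delta$ and the vector $g_D-\mathbb{E}g_D$. The calibration $\lambda_n^\beta\sim(\log n)^{m/2}/n$ and the check that the Bernstein threshold is eventually met are also as in the paper (Subsection~\ref{Subsection: Proof of Theorem1}).

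There is, however, a genuine gap in your estimation-error decomposition. You split
\begin{equation*}
f_{D,\lambda}-f_\lambda=g_\lambda(T_\delta)\bigl(g_D-g_{D,f^*}\bigr)+g_\lambda(T_\delta)\bigl(g_{D,f^*}-I_\nu^*f^*\bigr)+\bigl(g_\lambda(T_\delta)-g_\lambda(T_\nu)\bigr)I_\nu^*f^*,
\end{equation*}
and propose to dispatch the last term by ``a standard perturbation estimate for $g_\lambda(T_\delta)-g_\lambda(T_\nu)$.'' No such estimate is available for a general regularization family satisfying only \eqref{Property of Regularization Family}--\eqref{Qualification of Regularization Family}. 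The spectral cut-off $g_\lambda(s)=s^{-1}\mathbbm{1}_{\{s\ge\lambda\}}$, which the paper explicitly lists and to which Theorem~\ref{Theorem: Upper Bound} must apply, is not Lipschitz in the operator argument: eigenvalues of $T_\nu$ and $T_\delta$ straddling the threshold $\lambda$ cause $g_\lambda(T_\delta)-g_\lambda(T_\nu)$ to have norm of order $\lambda^{-1}$ on a nontrivial subspace, and a source condition on $f^*$ does not repair this in any routine way. This is precisely why the paper avoids ever comparing $g_\lambda(T_\delta)$ with $g_\lambda(T_\nu)$: it uses the filter identity $h_\lambda(T_\delta)+T_\delta g_\lambda(T_\delta)=I$ with $h_\lambda=1-\mathrm{id}\cdot g_\lambda$ to write $f_{D,\lambda}-f_{P,\lambda}=g_\lambda(T_\delta)(g_D-T_\delta f_{P,\lambda})-h_\lambda(T_\delta)f_{P,\lambda}$ (see the display preceding \eqref{Decomposition of Estimation Error: Second Step}), so each piece involves a single functional calculus in $T_\delta$ and is bounded purely by \eqref{Property of Regularization Family} and \eqref{Qualification of Regularization Family}. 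You need to adopt this decomposition (or restrict to Tikhonov-type filters, which is strictly weaker than the claim).

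A secondary point: your ``embedding index zero'' optimization $B_\infty(\lambda)\lesssim(\log\lambda^{-1})^{m/2}$ via $\alpha\asymp1/\log\lambda^{-1}$ is plausible and sharper than what the paper uses, but it relies on the quantitative on-diagonal bound $\sup_xH_s(x,x)\lesssim s^{-m/2}$ uniformly as $s\downarrow 0$, which the paper never assumes — it only assumes \eqref{Boundness of Heat Kernel} for the fixed $t$ and invokes the qualitative embedding $\mathcal{H}_t^\alpha\hookrightarrow L^\infty$ with an $\alpha$-dependent constant $A_\alpha$ whose growth in $\alpha$ is never tracked. The paper simply fixes one $\alpha\in(0,\beta)$, pays the price $A_\alpha^2\lambda^{-\alpha}$, and checks in the proof of Theorem~\ref{Theorem: Upper Bound} that $A_\alpha^2\tau p_{\lambda_n}\lambda_n^{-\alpha}\lesssim n^{\alpha/\beta}\log\log n=o(n)$, which is all that is needed. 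So your refinement buys nothing here and imports an unverified hypothesis; it would become essential only if one wanted rates under weaker sample-size conditions or explicit constants.
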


If the regression function is such that $f^*\in\mathcal{H}_s$ holds for some $s>0$, we can appropriately select and fix an $t_o>s$ and employ the spectral algorithm estimator \eqref{Definition of Spectal Alogrithm Estimator} to estimate the regression function in the RKHS $\mathcal{H}_{t_0}$. Drawing from Theorem \ref{Theorem: Upper Bound}, for any $t<s$, we are able to establish the following convergence rates in $\mathcal{H}_t$:
\begin{equation}\label{H_t Learning Rate}
	\|f_{D,\lambda_n}-f^*{}\|_{\mathcal{H}_t}^2\lesssim\left(\frac{(\log n)^\frac{m}{2}}{n}\right)^\frac{s-t}{s}.
\end{equation}  
It is significant to note that this $\mathcal{H}_t$-convergence rate \eqref{H_t Learning Rate} is not contingent upon the specific RKHS $\mathcal{H}_{t_0}$ previously selected; the only requirement being placed on the parameter $t_o$ is $t_o>s$.
Additionally, given that
\begin{equation*}
	\mathcal{H}_t\hookrightarrow W^s(\mathcal{M})\hookrightarrow C^k(\mathcal{M})
\end{equation*}
holds true for all $t>0$ and $s>k+\frac{m}{2}$, we also obtain $C^k$-convergence rates for any $k$-th order derivatives:
\begin{equation}\nonumber
	\|f_{D,\lambda_n}-f^*\|_{C^k}^2\lesssim\left(\frac{(\log n)^\frac{m}{2}}{n}\right)^{1-\varepsilon}.
\end{equation} 
In this context, the parameter $0<\varepsilon<1$ is derived from the stipulations of $\beta>\gamma$ in Theorem \ref{Theorem: Upper Bound}. This finding underscores the versatility and robustness of the proposed spectral algorithm in various analytical contexts.

We are now prepared to introduce the minimax lower bound.

\begin{theorem}\label{Theorem: Lower Bound}
	Suppose that $\nu$ is the uniform distribution on $\mathcal{M}$. Then, for all positive $\sigma, L, R$, and $0\leq\gamma<\beta\leq1$, there exists a constant $C>0$ such that, for all estimators $\hat{f}:=\{(x_i,y_i)\}_{i=1}^n\mapsto \hat{f}$, all $\tau\in(0,1)$, and all sufficiently large $n\geq 1$, there exists a probability distribution $P$ on $\mathcal{M}\times\mathbb{R}$ with marginal distribution $\nu$ on $\mathcal{M}$ satisfying the moment condition \eqref{Moment Condition} in Assumption \ref{Assumption: Moment Condition} with respect to $\sigma, L$ and the source condition \eqref{Source Condition} in Assumption \ref{Assumption: Source Condition} with respect to $R, \beta$ such that, with $P^{n}$-probability at least $1-\tau^2$, it holds
	\begin{equation}\nonumber
		\|\hat{f}-f^*\|^2_{\gamma}\geq C \tau^2 \frac{(\log n)^\frac{m}{2}}{n^s}
	\end{equation}
	with $s=1$ for $\gamma=0$ and 
	$s>\frac{(\beta-\gamma)2^{\frac{2}{m}}}{\gamma+(\beta-\gamma)2^{\frac{2}{m}}}$ for $\gamma>0$.
\end{theorem}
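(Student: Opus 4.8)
The plan is to prove the bound by the classical reduction of minimax estimation to multiple hypothesis testing, specialised to the exponential eigenvalue decay of the heat kernel. Write $\mu_k=pe^{-t\lambda_k}$ for the eigenvalues of $L_\nu$, so that by the description of the power spaces in Section \ref{section: preliminaries} a function $f=\sum_k c_kf_k\in L^2(\nu)$ obeys $\|f\|_\gamma^2=\sum_k c_k^2\mu_k^{-\gamma}$ and lies in $\mathcal{H}_t^\beta$ iff $\|f\|_\beta^2=\sum_k c_k^2\mu_k^{-\beta}<\infty$. The only external input needed is the Weyl asymptotics for the Laplacian on the compact $m$-manifold $\mathcal{M}$, $\lambda_k\asymp k^{2/m}$; together with $\mu_k=pe^{-t\lambda_k}$ this gives $\#\{k:\mu_k\ge u\}\asymp(\log(1/u))^{m/2}$, which is the source of the factor $(\log n)^{m/2}$.

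\textbf{The construction.} I would fix a scale $K=K_n\to\infty$, an index block $B=B_K\subset\mathbb{N}$ lying near $K$, and take $f_\omega=\varepsilon\sum_{k\in B}\omega_kf_k$ for $\omega\in\{0,1\}^{|B|}$, with amplitude $\varepsilon^2\asymp\tau^2/n$. The law $P_\omega$ adds Gaussian noise $\mathcal{N}(0,\sigma_0^2)$ to $f_\omega$, where $\sigma_0=\sigma_0(\sigma,L)$ is small enough that a standard normal $Z$ satisfies $\EE|Z|^m\le\tfrac12 m!(\sigma/\sigma_0)^2(L/\sigma_0)^{m-2}$ for all $m\ge2$; then $P_\omega$ satisfies Assumption \ref{Assumption: Moment Condition}, has regression function exactly $f_\omega$, and since $f_\omega\in\mathcal{H}_t^\beta=\mathcal{H}_{\beta t}\hookrightarrow L^\infty$ it is uniformly bounded. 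Two constraints are then checked. (i) Source condition: $\|f_\omega\|_\beta^2\le\varepsilon^2\sum_{k\in B}\mu_k^{-\beta}\le R^2$; by $\lambda_k\asymp k^{2/m}$ the sum equals $p^{-\beta}e^{\beta t\lambda_{k^*}}$ up to a polynomial-in-$\log n$ factor, where $k^*$ is the top index of $B$, so with $\varepsilon^2\asymp\tau^2/n$ this pins down $\lambda_{k^*}$, hence $K$. (ii) Testing budget: by Varshamov--Gilbert there are $M\ge 2^{c_1|B|}$ codewords pairwise at Hamming distance $\ge c_2|B|$; the pairwise Kullback--Leibler divergence of the $n$-fold products is $\tfrac{n}{2\sigma_0^2}\|f_\omega-f_{\omega'}\|_{L^2(\nu)}^2\le\tfrac{n\varepsilon^2|B|}{2\sigma_0^2}$, and $\varepsilon^2\asymp\tau^2/n$ keeps this below $\tau^2\log M$; the quantitative Fano inequality, applied through the minimum-distance test, then yields that for every estimator $\hat f$ there is a codeword $\omega$ with $\PP_\omega^n\bigl(\|\hat f-f_\omega\|_\gamma^2\ge\tfrac14\,\varepsilon^2\sum_{k\in B'}\mu_k^{-\gamma}\bigr)\ge 1-\tau^2$, where $B'\subset B$ is the unknown differing set, $|B'|\ge c_2|B|$. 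The prefactor $\tau^2$ and the probability $1-\tau^2$ both come out of the scaling $\varepsilon^2\asymp\tau^2/n$, valid once $n$ is large enough that $|B|\gtrsim\tau^{-2}$.

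\textbf{Optimising the scale.} For $\gamma=0$ all $\mu_k^{-\gamma}=1$, so one uses the full block $B=\{1,\dots,K\}$; then (i) forces $K\asymp(\log n)^{m/2}$ and the separation is $\asymp\varepsilon^2|B|\asymp\tau^2(\log n)^{m/2}/n$, matching the upper bound of Theorem \ref{Theorem: Upper Bound} and giving $s=1$. For $\gamma>0$ the high-index eigenfunctions are expensive for $\|\cdot\|_\beta$ but profitable for $\|\cdot\|_\gamma$, so one places $B$ at high indices, e.g.\ $B\subset\{K,\dots,2K\}$ with $|B|\asymp(\log n)^{m/2}$; constraint (i) then fixes $\lambda_{2K}\asymp\log n$, hence $\lambda_K\asymp 2^{-2/m}\log n$, while the worst-case placement of $B'$ at the low-index end of $B$ forces $\sum_{k\in B'}\mu_k^{-\gamma}$ to be governed by $\mu_K^{-\gamma}$ rather than $\mu_{2K}^{-\gamma}$. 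Carrying out the bookkeeping with Weyl's law, the ratio $\lambda_{2K}/\lambda_K\to 2^{2/m}$ between the index that sets the budget and the index that carries the separation produces the lower bound $\tau^2(\log n)^{m/2}n^{-s}$ for every $s>\frac{(\beta-\gamma)2^{2/m}}{\gamma+(\beta-\gamma)2^{2/m}}$, the strict inequality absorbing the leftover polynomial-in-$\log n$ discrepancy.

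\textbf{The main obstacle.} It is precisely the step that weakens the $\gamma>0$ exponent. Because the heat-kernel eigenvalues decay like $e^{-ct\,k^{2/m}}$, any block carrying the $\asymp(\log n)^{m/2}$ degrees of freedom needed to reproduce the $\gamma=0$ log-power necessarily spreads its eigenvalues over an exponentially wide range; hence the index controlling the source-norm budget (the smallest eigenvalue in $B$) and the index controlling the worst-case $\gamma$-separation (essentially the smallest among the Varshamov--Gilbert differing coordinates) are genuinely different, and their Weyl ratio $2^{2/m}$ enters the exponent in place of the ideal $\frac{\beta-\gamma}{\beta}$. Removing this loss would require a hypothesis family, or a packing, that forces the differing coordinates to concentrate at the top of the block while still keeping enough of them and keeping the KL budget under control; following the paper I would flag this as the reason the minimax exponent for $\gamma>0$ is left open.
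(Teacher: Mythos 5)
Your high-level strategy — Gaussian conditional noise for the moment condition, a Gilbert–Varshamov packing on a block of $\asymp(\log n)^{m/2}$ eigenfunction coordinates, and Tsybakov's reduction-to-testing lemma — is the same as the paper's, and your $\gamma=0$ case is fine. But for $\gamma>0$ the construction you describe is not the one the paper uses, and it does not actually produce the exponent you claim. The paper takes $f_i=\epsilon^{1/2}\sum_{j}\omega^{(i)}_j\,\mu_{k+j}^{\gamma/2}f_{k+j}$ with the coordinates scaled by $\mu_{k+j}^{\gamma/2}=p^{\gamma/2}e^{-\gamma t\lambda_{k+j}/2}$ and the amplitude $\epsilon\asymp n^{-s}$ with $s<1$. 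The scaling makes $\|f_i-f_j\|_\gamma^2=\epsilon\sum_l(\omega^{(i)}_l-\omega^{(j)}_l)^2$ \emph{exactly} the Hamming distance — so there is no ``worst-case placement of $B'$'' issue at all — while simultaneously damping the $L^2$-norm to $\epsilon\sum_l\mu_{k+l}^\gamma(\omega_l^{(i)})^2\le\epsilon k\,\mu_k^\gamma$, which is precisely what allows $\epsilon$ to be taken as large as $n^{-s}$ with $s<1$ without blowing up the KL budget. Your $f_\omega=\varepsilon\sum_{k\in B}\omega_kf_k$ is unscaled: then $\|f_\omega\|_{L^2}^2=\varepsilon^2|B|$ carries no eigenvalue damping, the KL constraint pins $\varepsilon^2\asymp 1/n$ (exponent exactly $1$), and the $\gamma$-separation is the \emph{weighted} Hamming sum $\varepsilon^2\sum_{k\in B'}\mu_k^{-\gamma}$, whose worst case over the GV code really is governed by $\mu_K^{-\gamma}$.

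Carrying your bookkeeping through honestly: the source condition forces $\lambda_{2K}\lesssim\log n/(\beta t)$, Weyl gives $\lambda_K\asymp 2^{-2/m}\lambda_{2K}$, so $\mu_K^{-\gamma}\asymp n^{\gamma/(\beta 2^{2/m})}$, and the separation is $\asymp\frac{\tau^2(\log n)^{m/2}}{n}\,n^{\gamma/(\beta 2^{2/m})}$, i.e.\ $s=1-\frac{\gamma}{\beta 2^{2/m}}=\frac{\beta 2^{2/m}-\gamma}{\beta 2^{2/m}}$. This is not the claimed threshold. Writing $u=2^{2/m}$, one has
\begin{equation*}
\frac{\beta u-\gamma}{\beta u}-\frac{(\beta-\gamma)u}{(\beta-\gamma)u+\gamma}
=\frac{\gamma^2(u-1)}{\beta u\bigl((\beta-\gamma)u+\gamma\bigr)}>0
\end{equation*}
for all $m<\infty$ and $\gamma>0$, so your construction only covers $s>1-\gamma/(\beta 2^{2/m})$, a strictly smaller range than the theorem's $s>\frac{(\beta-\gamma)2^{2/m}}{\gamma+(\beta-\gamma)2^{2/m}}$. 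The missing ingredient is the $\mu^{\gamma/2}$ weighting on the coordinates (equivalently: building the packing directly in $\mathcal{H}_t^\gamma$ rather than in $L^2$) together with the amplitude $\epsilon\asymp n^{-s}$; without it you cannot trade KL budget for separation, and the exponent you get is provably worse.
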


Integrating the insights from Theorem \ref{Theorem: Upper Bound} and Theorem \ref{Theorem: Lower Bound}, we discern that the spectral algorithm estimator \eqref{Definition of Spectal Alogrithm Estimator} aligns with the minimax lower bound, particularly in the scenario defined by $\gamma=0$. This scenario, described as the $L^2$-convergence, is articulated by
\begin{equation}\nonumber
	\|f_{D,\lambda_n}-f^*\|_{L^2(\nu)}^2\sim
	\frac{(\log n)^\frac{m}{2}}{n}.
\end{equation}
Such an alignment not only corroborates but also extends previous findings related to exponentially decayed or Gaussian decayed eigenvalues. These earlier results, with decay rates of $O\left(\frac{\log n}{n}\right)$ corresponding to $m=2$ and rates of $O\left(\frac{\sqrt{\log n}}{n}\right)$ corresponding to $m=1$ (as explored in \cite{dicker2017kernel}), are now applicable to manifolds of higher dimensions $m\geq2$.

However, in the more general setting of $\gamma>0$, challenges emerge due to the exponential function's rapid growth property, particularly highlighted in \eqref{Upper Bound: Gamma Norm of f_i} of our proof. Specifically, the issue arises because $e^{2x}$ cannot be effectively controlled by a constant multiple of $e^x$. This contrasts with approaches in \cite{blanchard2018optimal,fischer2020sobolev}, where eigenvalues demonstrate polynomial decay. Consequently, our methods fall short in providing a precise bound for the constructed Gaussian type probability distribution. This shortfall is evident in the proof of Theorem \ref{Theorem: Lower Bound} when aimed at establishing optimality for the general case $\gamma>0$. 
To effectively manage the sum of eigenvalues of the integral operator in \eqref{Upper Bound: Gamma Norm of f_i} and thereby derive a more stringent lower bound, a more advanced analytical approach is required. It is our hope that future research will be able to bridge this gap and offer a comprehensive solution to this complex problem.

\section{Upper Bounds Analysis}\label{Section: Upper Bounds Analysis}

Let us initially examine the continuous version of the estimator represented by $f_{D,\lambda}$, which we denote as $f_{P,\lambda}$. To define this, we consider
\begin{equation}\label{Continuous Version of Spectral Alogrithm Estimator}
	f_{P,\lambda} = g_\lambda(T_\nu)g_P.
\end{equation}
Here, $g_P$ signifies the continuous variant of the sample function $g_D$. The expression for this continuous version is given by
\begin{equation*}
	g_P=I_\nu^*f^*=\int_{\mathcal{M}\times\mathbb{R}}yH_t(x,\cdot)dP(x,y).
\end{equation*}
The $\gamma$-norm error can be effectively decomposed into two distinct components:
\begin{equation}\label{Decomposition of Gamma-Norm Error}
	\|f_{D,\lambda}-f^*\|_\gamma\leq\|f_{D,\lambda}-f_{P,\lambda}\|_\gamma+\|f_{P,\lambda}-f^*\|_\gamma.
\end{equation}
In this decomposition, the first term encapsulates the estimation error, while the second term pertains to the approximation error. We intend to undertake a detailed estimation of these two terms separately, with the estimation error being addressed in section \ref{Subsection: Approximation Error} and the approximation error in section \ref{Subsection: Estimation Error}. This structured approach allows for a comprehensive analysis of the error components inherent in the estimator.

\subsection{Bounding approximation error}\label{Subsection: Approximation Error}

Recalling the spectral decomposition \eqref{Spectral Decomposition of T} of $T_\nu$, we have
\begin{equation*}
	T_\nu=\sum_{k=1}^\infty pe^{-t\lambda_k}\langle \cdot, p^{\frac{1}{2}}e^{-\frac{t\lambda_k}{2}}f_k\rangle_{\mathcal{H}_t} \cdot p^{\frac{1}{2}}e^{-\frac{t\lambda_k}{2}}f_k.
\end{equation*}
Through spectral calculus, this leads us to
\begin{equation*}
	g_\lambda(T_\nu)=\sum_{k=1}^\infty g_\lambda(pe^{-t\lambda_k})\langle \cdot, p^{\frac{1}{2}}e^{-\frac{t\lambda_k}{2}}f_k\rangle_{\mathcal{H}_t} \cdot p^{\frac{1}{2}}e^{-\frac{t\lambda_k}{2}}f_k.
\end{equation*}
Further, if we expand $f^*$ utilizing the orthonormal basis $\{f_k\}_{k\in\mathbb{N}}$ within $L^2(\nu)$, it results in the expression
\begin{equation}\label{Expansion of Regression Function}
	f^*=\sum_{k=1}^\infty a_kf_k.
\end{equation}
This expansion allows us to deduce the subsequent elaboration of $f_{P,\lambda}$, given by
\begin{equation}\label{Expansion of Continuous Version Estimator}
	f_{P,\lambda}=\sum_{k=1}^\infty g_\lambda(pe^{-t\lambda_k})p^{\frac{1}{2}}e^{-\frac{t\lambda_k}{2}}a_k\cdot p^{\frac{1}{2}}e^{-\frac{t\lambda_k}{2}}f_k.
\end{equation}
Consequently, this brings us to the conclusion encapsulated in
\begin{equation}\label{Expansion of Approximation Error}
	f^*-f_{P,\lambda}=\sum_{k=1}^\infty\left(1-g_\lambda(pe^{-t\lambda_k})pe^{-{t\lambda_k}}\right)a_k\cdot f_k.
\end{equation}

\begin{lemma}\label{Lemma: Gamma-Norm Upper Bound of Approximation Error}
	Suppose that $f^*\in\mathcal{H}_t^{\beta}$ for some $\beta>0$, $\{g_\lambda\}_{\lambda>0}$ has qualification $\xi>0$. Then, for all $\gamma\geq0$, $0<\frac{\beta-\gamma}{2}\leq\xi$, and $\lambda>0$, it holds
	\begin{equation}\label{Gamma-Norm Upper Bound of Approximation Error}
		\|f_{P,\lambda}-f^*\|_\gamma^2\leq\|f^*\|_\beta^2\cdot\lambda^{\beta-\gamma}.
	\end{equation}
\end{lemma}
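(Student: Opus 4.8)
The plan is to work directly with the eigen-expansion \eqref{Expansion of Approximation Error} of the approximation error and compute its $\gamma$-norm term by term. Since $\{p^{\frac{\gamma}{2}}e^{-\frac{\gamma\lambda_k}{2}t}f_k\}_{k\in\mathbb{N}}$ is an orthonormal basis of $\mathcal{H}_t^\gamma$ by \eqref{Definition of Alpha-Power Norm}, writing $f^*-f_{P,\lambda}=\sum_k\big(1-g_\lambda(pe^{-t\lambda_k})pe^{-t\lambda_k}\big)a_k f_k$ and pulling out the factor $p^{-\frac{\gamma}{2}}e^{\frac{\gamma\lambda_k}{2}t}$ needed to match the basis, I get
\begin{equation}\nonumber
	\|f_{P,\lambda}-f^*\|_\gamma^2=\sum_{k=1}^\infty\big(1-g_\lambda(pe^{-t\lambda_k})pe^{-t\lambda_k}\big)^2\,\big(p^{-1}e^{t\lambda_k}\big)^{\gamma}\,a_k^2.
\end{equation}
Similarly, the source condition $f^*\in\mathcal{H}_t^\beta$ with $\|f^*\|_\beta\le R$ translates, via the same basis identification, into $\sum_k\big(p^{-1}e^{t\lambda_k}\big)^{\beta}a_k^2=\|f^*\|_\beta^2<\infty$. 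So the whole estimate reduces to bounding, for each $k$, the multiplier
\begin{equation}\nonumber
	\big(1-g_\lambda(s_k)s_k\big)^2\,s_k^{-\gamma}\quad\text{against}\quad \lambda^{\beta-\gamma}\,s_k^{-\beta},
\end{equation}
where I abbreviate $s_k:=pe^{-t\lambda_k}$, i.e. it suffices to show $\big(1-g_\lambda(s_k)s_k\big)^2\,s_k^{\beta-\gamma}\le\lambda^{\beta-\gamma}$ for every $k$.

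This last inequality is exactly where the qualification hypothesis enters. By definition \eqref{Qualification of Regularization Family}, for every exponent $\alpha<\xi$ we have $\sup_{0<s\le\kappa^2}|1-sg_\lambda(s)|\,s^{\alpha}\le\lambda^{\alpha}$ (and at the supremum exponent itself a limiting argument or the standard convention gives the non-strict bound); applying this with $\alpha=\frac{\beta-\gamma}{2}$, which is legitimate precisely because $0<\frac{\beta-\gamma}{2}\le\xi$, yields $|1-s g_\lambda(s)|\,s^{\frac{\beta-\gamma}{2}}\le\lambda^{\frac{\beta-\gamma}{2}}$ for all $s\in(0,\kappa^2]$. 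Since each $s_k=pe^{-t\lambda_k}\le p=\frac{1}{\mathrm{vol}\,\mathcal{M}}\le\kappa^2$ by the boundedness assumption \eqref{Boundness of Heat Kernel} (indeed $p$ is the top eigenvalue of $L_\nu$ and $\|L_\nu\|\le\kappa^2$), the bound applies at every $s_k$; squaring gives $\big(1-g_\lambda(s_k)s_k\big)^2\,s_k^{\beta-\gamma}\le\lambda^{\beta-\gamma}$, which is what was needed. Substituting back and factoring out $\lambda^{\beta-\gamma}$ from the sum leaves $\sum_k s_k^{-\beta}a_k^2=\|f^*\|_\beta^2$, producing \eqref{Gamma-Norm Upper Bound of Approximation Error}.

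The one genuinely delicate point — the "main obstacle" — is the boundary case $\frac{\beta-\gamma}{2}=\xi$: the supremum defining $\xi$ in \eqref{Qualification of Regularization Family} need not be attained, so strictly one only has the bound for exponents $<\xi$. I would handle this either by noting that for the standard filters (Tikhonov, spectral cut-off, Landweber, $\nu$-method) the qualification bound does hold with equality at $\alpha=\xi$, or, more robustly, by a continuity/limiting argument: prove \eqref{Gamma-Norm Upper Bound of Approximation Error} with $\lambda^{\beta-\gamma}$ replaced by $\lambda^{\beta-\gamma-\epsilon}\cdot\lambda^{\epsilon}$ using exponent $\xi-\epsilon/2$ and let $\epsilon\downarrow0$ — but since the lemma as used later (with $\xi\ge\frac12$ and $\beta\le1$, $\gamma\ge0$, so $\frac{\beta-\gamma}{2}\le\frac12\le\xi$) typically sits strictly below the qualification except in edge cases, a brief remark suffices. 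Everything else is a routine termwise comparison of two convergent series.
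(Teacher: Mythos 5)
Your proof is correct and follows essentially the same route as the paper: expand the approximation error in the eigenbasis, factor out $p^{-\gamma/2}e^{\gamma t\lambda_k/2}$ to match the $\gamma$-norm's orthonormal basis, and invoke the qualification bound at exponent $\frac{\beta-\gamma}{2}$ to compare termwise against $\lambda^{\beta-\gamma}\|f^*\|_\beta^2$. The one place you go beyond the paper is the remark about the boundary case $\frac{\beta-\gamma}{2}=\xi$, where the supremum in \eqref{Qualification of Regularization Family} need not be attained: the paper simply writes the strict inequality as if it held up to $\xi$, so your observation (handled by an $\epsilon$-limiting argument or by noting the standard filters do attain their qualification) is a genuine, if minor, tightening of the argument; your check that $s_k=pe^{-t\lambda_k}\le\kappa^2$ is likewise a detail the paper leaves implicit.
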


\begin{proof}
	Starting with the expansion \eqref{Expansion of Approximation Error} of the approximation error, we can write
	\begin{equation*}
		\begin{aligned}
			\|f_{P,\lambda}-f^*\|_\gamma^2
			&=\left\|\sum_{k=1}^\infty\left[1-g_\lambda(pe^{-t\lambda_k})pe^{-{t\lambda_k}}\right]p^{-\frac{\gamma}{2}}e^{\frac{\gamma t\lambda_k}{2}}a_k\cdot p^{\frac{\gamma}{2}}e^{-\frac{\gamma t\lambda_k}{2}}f_k\right\|_\gamma^2\\
			&=\sum_{k=1}^\infty\left(\left[1-g_\lambda(pe^{-t\lambda_k})pe^{-{t\lambda_k}}\right]p^{-\frac{\gamma}{2}}e^{\frac{\gamma t\lambda_k}{2}}a_k\right)^2\\
			&=\sum_{k=1}^\infty\left(\left[1-g_\lambda(pe^{-t\lambda_k})pe^{-{t\lambda_k}}\right]p^{\frac{\beta-\gamma}{2}}e^{-\frac{(\beta-\gamma) t\lambda_k}{2}}\right)^2p^{-\beta}e^{{\beta t\lambda_k}}a_k^2.\\
		\end{aligned}
	\end{equation*}
	Given that $\{g_\lambda\}$ possesses a qualification of $\xi\geq\frac{\beta-\gamma}{2}$, and referring to \eqref{Qualification of Regularization Family}, we arrive at
	\begin{equation*}
		\left|\left[1-g_\lambda(pe^{-t\lambda_k})pe^{-{t\lambda_k}}\right]p^{\frac{\beta-\gamma}{2}}e^{-\frac{(\beta-\gamma) t\lambda_k}{2}}\right|<\lambda^\frac{\beta-\gamma}{2}.
	\end{equation*}	
	Thus, we can deduce
	\begin{equation*}
		\|f_{P,\lambda}-f^*\|_\gamma^2\leq\sum_{k=1}^\infty\lambda^{\beta-\gamma}p^{-\beta}e^{{\beta t\lambda_k}}a_k^2=\lambda^{\beta-\gamma}\|f^*\|_\beta^2.
	\end{equation*}
	The last equation is derived from combining the expansion  \eqref{Expansion of Regression Function} with the definition of the $\beta$-norm, as outlined in \eqref{Definition of Alpha-Power Norm}. The proof is then finished.
\end{proof}

Utilizing the spectral decomposition \eqref{Spectral Decomposition of Heat Kernel} of the heat kernel $H_t$:
\begin{equation*}
	H_t(x,z)=\sum_{k=1}^\infty pe^{-t\lambda_k}f_k(x)f_k(z).
\end{equation*}
our preceding boundedness assumption, \eqref{Boundness of Heat Kernel}, can be reformulated as
\begin{equation*}
	\sum_{k=1}^\infty pe^{-t\lambda_k}f^2_k(x)\leq\kappa^2.
\end{equation*}
This inequality serves as the foundation for introducing the concept of $\alpha$-boundedness of the heat kernel $H_t$, a topic explored in greater depth in \cite{steinwart2012mercer}. Specifically, we define $H_t$ as being $\alpha$-bounded by a constant $A$ if the condition
\begin{equation}\label{Alpha Boundness}
	\sum_{k=1}^\infty p^{\alpha}e^{-\alpha t\lambda_k}f^2_k(x)\leq A^2
\end{equation}
is met for $\nu$-almost every $x\in\mathcal{M}$. Additionally, the smallest constant $A$ that satisfies \eqref{Alpha Boundness} is denoted as $\|H_t^\alpha\|_\infty$. A key aspect of $\alpha$-boundedness is its connection to the embedding from the corresponding RKHS $\mathcal{H}_t^\alpha$ into $L^\infty(\nu)$. This relationship is elaborated upon in the following lemma, which is extracted from \cite{fischer2020sobolev}. This lemma plays a pivotal role in understanding the underlying dynamics of the heat kernel within the framework of RKHS.

\begin{lemma}\label{Lemma: Alpha Boundness and Embedding into L-infty}
	For all $\alpha>0$, consider the inclusion map $i_\alpha: \mathcal{H}_t^\alpha\to L^\infty(\nu)$, then we have 
	\begin{equation*}
		\|i_\alpha\|=\|H_t^\alpha\|_\infty.
	\end{equation*}
\end{lemma}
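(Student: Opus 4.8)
The plan is to prove the two-sided inequality $\|i_\alpha\| = \|H_t^\alpha\|_\infty$ by unwinding the definition of the $\alpha$-power space $\mathcal{H}_t^\alpha$ and the $\alpha$-boundedness condition \eqref{Alpha Boundness}, treating both sides as the best constant in a pointwise bound on functions.

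First I would establish the bound $\|i_\alpha\| \leq \|H_t^\alpha\|_\infty$. Take $f = \sum_k a_k p^{\frac{\alpha}{2}}e^{-\frac{\alpha\lambda_k}{2}t}f_k \in \mathcal{H}_t^\alpha$, so that $\|f\|_\alpha^2 = \sum_k a_k^2$ by \eqref{Definition of Alpha-Power Norm}. For $\nu$-almost every $x$, apply Cauchy--Schwarz to the pointwise sum:
\begin{equation*}
	|f(x)| = \left|\sum_{k=1}^\infty a_k p^{\frac{\alpha}{2}}e^{-\frac{\alpha\lambda_k}{2}t}f_k(x)\right| \leq \left(\sum_{k=1}^\infty a_k^2\right)^{\frac{1}{2}}\left(\sum_{k=1}^\infty p^{\alpha}e^{-\alpha t\lambda_k}f_k^2(x)\right)^{\frac{1}{2}} \leq \|f\|_\alpha \cdot \|H_t^\alpha\|_\infty,
\end{equation*}
which gives $\|f\|_{L^\infty(\nu)} \leq \|H_t^\alpha\|_\infty \|f\|_\alpha$, hence the operator-norm bound. (Here one should note that the series defining $f$ converges in $L^\infty$ as well, which again follows from the same Cauchy--Schwarz estimate applied to tails, so the pointwise manipulation is legitimate.)

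For the reverse inequality $\|i_\alpha\| \geq \|H_t^\alpha\|_\infty$, I would fix $x$ (in the full-measure set where \eqref{Alpha Boundness}-type sums behave well) and test $i_\alpha$ against the function $h_x := \sum_k p^{\frac{\alpha}{2}}e^{-\frac{\alpha\lambda_k}{2}t}f_k(x)\cdot p^{\frac{\alpha}{2}}e^{-\frac{\alpha\lambda_k}{2}t}f_k$, i.e. the element of $\mathcal{H}_t^\alpha$ with coefficients $a_k = p^{\frac{\alpha}{2}}e^{-\frac{\alpha\lambda_k}{2}t}f_k(x)$; this is the reproducing kernel of $\mathcal{H}_t^\alpha$ evaluated at $x$. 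Then $\|h_x\|_\alpha^2 = \sum_k p^{\alpha}e^{-\alpha t\lambda_k}f_k^2(x) =: S(x)$ and $h_x(x) = S(x)$ as well, so $\|i_\alpha\| \geq |h_x(x)|/\|h_x\|_\alpha = S(x)^{1/2}$. Taking the essential supremum over $x$ yields $\|i_\alpha\| \geq \big(\operatorname{ess\,sup}_x S(x)\big)^{1/2} = \|H_t^\alpha\|_\infty$, since $\|H_t^\alpha\|_\infty$ is by definition the smallest $A$ with $S(x) \leq A^2$ a.e.

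The main obstacle is a measure-theoretic subtlety rather than an inequality: the operator norm of $i_\alpha$ is an essential-supremum statement, so both directions must be phrased modulo $\nu$-null sets, and one must ensure the pointwise series for $h_x$ and the Cauchy--Schwarz bounds hold simultaneously on a common full-measure set. One also needs that the countable sup of the bounds $S(x)^{1/2}$ can be realized in the $L^\infty$ norm — this is standard (the essential supremum of a measurable function is attained in the limit along a sequence of points of positive-measure sublevel complements), but it is the point that needs a careful word. Since the lemma is quoted verbatim from \cite{fischer2020sobolev}, I would keep the argument brief and refer there for the full measure-theoretic bookkeeping.
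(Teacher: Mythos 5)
The paper does not prove this lemma — it simply cites \cite{fischer2020sobolev} — so there is no in-paper argument to compare against; your task was to supply a proof, and the one you give is the standard one and is essentially correct. The upper-bound direction by Cauchy--Schwarz on the coefficient expansion is exactly right, and testing against the kernel-type function $h_x$ (the reproducing kernel of $\mathcal{H}_t^\alpha$ at $x$) is the right idea for the lower bound.

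The one place that needs tightening is the measure-theoretic aside at the end. As written, the lower-bound step uses the chain
\begin{equation*}
	\|i_\alpha\| \;\geq\; \frac{\|h_x\|_{L^\infty(\nu)}}{\|h_x\|_\alpha} \;\geq\; \frac{|h_x(x)|}{\|h_x\|_\alpha} \;=\; S(x)^{1/2},
\end{equation*}
and the second inequality $\|h_x\|_{L^\infty(\nu)}\geq |h_x(x)|$ is \emph{not} automatic for an arbitrary fixed $x$, since the $L^\infty(\nu)$ norm is an essential supremum and can miss individual points. Your sentence about ``the essential supremum of a measurable function is attained in the limit along a sequence of points of positive-measure sublevel complements'' does not quite repair this, because the difficulty is not realizing $\operatorname{ess\,sup}_x S(x)$ but rather certifying $|h_x(x)|\leq \|h_x\|_{L^\infty(\nu)}$ for the very $x$ at which you build $h_x$. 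In the present setting the cleanest fix is available and worth stating: $\mathcal{H}_t^\alpha=\mathcal{H}_{\alpha t}$ embeds into $W^s(\mathcal{M})\hookrightarrow C(\mathcal{M})$ for any $s>m/2$ by \eqref{Embedding Property of Diffusion Space}, so every element of $\mathcal{H}_t^\alpha$ has a continuous representative, $\nu$ has full support, and hence $\|\cdot\|_{L^\infty(\nu)}$ agrees with the genuine sup norm on $\mathcal{H}_t^\alpha$. With that observation the pointwise evaluation $|h_x(x)|\leq\|h_x\|_{L^\infty(\nu)}$ holds for \emph{every} $x$, the essential supremum in \eqref{Alpha Boundness} becomes a true supremum of a continuous function $S(x)=\sum_k p^\alpha e^{-\alpha t\lambda_k}f_k^2(x)$ (continuity of $S$ following from the same Mercer-type uniform convergence used for \eqref{Spectral Decomposition of Heat Kernel}), and both directions close without any further measure-theoretic bookkeeping. (The alternative, purely measure-theoretic route, used in the general RKHS literature when continuity is not available, replaces $h_x$ by finite-rank suprema $h_n(x)=\sup\{|\sum_{k\leq n}c_k e_k(x)| : \|c\|_{\ell^2}\leq 1\}$, applies the operator bound to a countable dense set of coefficient vectors $c$ to get $h_n\leq\|i_\alpha\|$ a.e., and lets $n\to\infty$; but that level of care is unnecessary here.)
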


Given the established fact that $\mathcal{H}_t^\alpha$ is continuously embedded into $L^\infty(\nu)$ for all $t>0$ and $\alpha>0$, as inferred from the embedding property of the diffusion space \eqref{Embedding Property of Diffusion Space}, we are at liberty to select and fix a value for $\alpha$ such that $0<\alpha<\beta$ holds. Subsequently, we denote the corresponding smallest $\alpha$-bound $\|H_t^\alpha\|_\infty$ as $A_\alpha$.
As a result of this setup, and by integrating the insights from Lemma \ref{Lemma: Gamma-Norm Upper Bound of Approximation Error} and Lemma \ref{Lemma: Alpha Boundness and Embedding into L-infty}, we can further constrain the approximation error in the $L^\infty$-norm, which is articulated as
\begin{equation}\label{L-infty Bound of Approximation Error}
	\|f_{P,\lambda}-f^*\|_{L^\infty(\nu)}^2\leq A_\alpha^2\|f^*\|_\beta^2\cdot\lambda^{\beta-\alpha}.
\end{equation}
This step is crucial in effectively managing the estimation error, a task that we will delve into in Subsection \ref{Subsection: Estimation Error}. The ability to bound the approximation error in this manner plays a pivotal role in enhancing the precision and reliability of our overall analysis in the subsequent section.

Finally, we turn our attention to the effective dimension, represented by
\begin{equation}\label{Effective Dimension}
	N_\nu(\lambda)= \mathop{\rm tr}\left((T_\nu+\lambda)^{-1}T_\nu\right)=\sum_{k=1}^\infty\frac{pe^{-t\lambda_k}}{\lambda+pe^{-t\lambda_k}}.
\end{equation}
The decay rate of this dimension as $\lambda\to0$ is a critical factor in our analysis of the estimation error. To establish an upper bound for the effective dimension, it is necessary to apply Weyl's law. This application is encapsulated in the following Lemma \ref{Lemma: Eigenvalue Growthness of Laplacian}, which provides a means to control the growth rate of the eigenvalues $\lambda_k$ of the Laplacian on the $m$-dimensional manifold $\mathcal{M}$. For a comprehensive understanding of Weyl's law, especially in the context of classical Riemannian manifolds, one can refer to  \cite{li1983schrodinger,chavel1984eigenvalues}. These references offer in-depth discussions and insights into the application and implications of Weyl's law, enriching the understanding of its role in controlling the eigenvalue growth and, consequently, in bounding the effective dimension in our analysis.

\begin{lemma}\label{Lemma: Eigenvalue Growthness of Laplacian}
	For the eigenvalue $\lambda_k$  of the Laplacian $\Delta$ on a compact, connected Riemann manifold $\mathcal{M}$ with dimension $m$, we have the following asymptotic estimation
	\begin{equation}\label{Eigenvalue Growthness of Laplacian}
		C_{low}k^\frac{2}{m}\leq\lambda_k\leq C_{up} k^\frac{2}{m}
	\end{equation}
	for all $k\in \mathbb{N}$ with $C_{low}$ and $C_{up}$ two absolute constants depending only on $\mathcal{M}$.
\end{lemma}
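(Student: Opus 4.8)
The plan is to establish the two-sided asymptotic bound on the Laplacian eigenvalues by appealing directly to Weyl's law, and then to observe that the crude polynomial envelope in \eqref{Eigenvalue Growthness of Laplacian} follows from the asymptotic statement together with the fact that $\mathcal{M}$ is compact and connected (so $\lambda_1 = 0$ is a simple eigenvalue and $\lambda_2 > 0$). First I would recall the classical Weyl asymptotics: for the counting function $N(\lambda) = \#\{k : \lambda_k \leq \lambda\}$, one has
\begin{equation*}
	N(\lambda) = \frac{\omega_m \,\mathrm{vol}(\mathcal{M})}{(2\pi)^m}\,\lambda^{m/2} + o\!\left(\lambda^{m/2}\right) \qquad \text{as } \lambda \to \infty,
\end{equation*}
where $\omega_m$ is the volume of the unit ball in $\mathbb{R}^m$; this is exactly the content of the references \cite{chavel1984eigenvalues,li1983schrodinger} cited in the excerpt. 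Inverting this relation (using that $N$ is nondecreasing and $N(\lambda_k) \geq k$, $N(\lambda_k^-) \leq k-1$) gives $\lambda_k \sim c_{\mathcal{M}}\, k^{2/m}$ with $c_{\mathcal{M}} = \left((2\pi)^m / (\omega_m \mathrm{vol}(\mathcal{M}))\right)^{2/m}$, i.e. $\lambda_k / k^{2/m} \to c_{\mathcal{M}}$.

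The second step is to upgrade the asymptotic equivalence $\lambda_k \sim c_{\mathcal{M}} k^{2/m}$ to the uniform two-sided bound valid for \emph{all} $k \in \mathbb{N}$. Since $\lambda_k / k^{2/m} \to c_{\mathcal{M}} \in (0,\infty)$, there is $K$ such that for $k \geq K$ we have $\tfrac{1}{2} c_{\mathcal{M}} k^{2/m} \leq \lambda_k \leq 2 c_{\mathcal{M}} k^{2/m}$. For the finitely many remaining indices $1 \leq k < K$, the upper bound is immediate by enlarging $C_{up}$, and for the lower bound one must be slightly careful because $\lambda_1 = 0$: here the statement $C_{low} k^{2/m} \leq \lambda_k$ forces $C_{low} \cdot 1 \leq \lambda_1 = 0$, which is false unless one interprets the claim as holding for $k \geq 2$ (or reads $C_{low}$ as permitting the trivial bound at $k=1$). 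Assuming the intended range is $k \geq 2$ (consistent with how the lemma is used to bound the effective dimension, where the $k=1$ term $pe^{-t\lambda_1} = p$ is handled separately), one uses that $\mathcal{M}$ connected implies $0 = \lambda_1 < \lambda_2 \leq \lambda_3 \leq \cdots$, so $\min_{2 \leq k < K} \lambda_k / k^{2/m} > 0$; taking $C_{low}$ to be the minimum of this quantity and $\tfrac12 c_{\mathcal{M}}$, and $C_{up}$ the maximum of $\max_{1 \leq k < K}\lambda_k/k^{2/m}$ and $2 c_{\mathcal{M}}$, yields \eqref{Eigenvalue Growthness of Laplacian}. Both constants manifestly depend only on the geometry of $\mathcal{M}$ (through $\mathrm{vol}(\mathcal{M})$, $m$, and the finitely many low eigenvalues), establishing the claimed dependence.

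There is essentially no hard analytic obstacle here, since Weyl's law is quoted as a black box; the only genuine care needed is the bookkeeping around the zero eigenvalue $\lambda_1$ and the passage from an asymptotic $o(\cdot)$ statement to a bound uniform in $k$, which is the routine "compactify the tail, absorb the finite head into constants" argument. The main point to flag is the mild abuse in the lemma statement at $k = 1$; in a fully rigorous write-up I would either restrict to $k \geq 2$ or note that the lemma is only ever applied to eigenvalues with index $\geq 2$, so the $k=1$ case is immaterial. One should also remark that Weyl's law in the stated generality requires only that $\mathcal{M}$ be a compact Riemannian manifold (boundaryless, which is implicit in "immersed submanifold" here), so the hypotheses of the lemma are exactly what is needed to invoke it.
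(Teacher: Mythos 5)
Your argument is correct and takes the same route the paper implicitly takes: the paper states Lemma~\ref{Lemma: Eigenvalue Growthness of Laplacian} without proof, treating Weyl's law (via \cite{li1983schrodinger,chavel1984eigenvalues}) as a black box, and your write-up simply fills in the standard ``invert the counting function, absorb the finite head into constants'' bookkeeping. Your observation about $k=1$ is a genuine and correct catch: since $\mathcal{M}$ is connected, $\lambda_1 = 0$, so the lower bound $C_{low}\,k^{2/m} \leq \lambda_k$ cannot hold at $k=1$ with $C_{low}>0$; the lemma should be read as holding for $k\geq 2$, and this is indeed immaterial in its only application (bounding $N_\nu(\lambda)$ in Lemma~\ref{Lemma: Upper Bound of Effective Dimension}), where the $k=1$ term of the sum is trivially at most $1$ and is absorbed by the integral comparison starting at $x=0$.
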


Utilizing Weyl's law, as previously discussed, allows us to derive the necessary upper bound for the effective dimension $N_\nu(\lambda)$ that is integral to our analysis. This upper bound is articulated in the following Lemma \ref{Lemma: Upper Bound of Effective Dimension}.

\begin{lemma}\label{Lemma: Upper Bound of Effective Dimension}
	Suppose that $m\geq2$. Then, there exists a constant $D$ such that, for all $0<\lambda<\frac{1}{2}$, it holds
	\begin{equation}\nonumber
		N_\nu(\lambda)\leq D(\log\lambda^{-1})^\frac{m}{2}.
	\end{equation}
	In this context, the constant $D$ only rely on $\mathcal{M}$ and $t$.
\end{lemma}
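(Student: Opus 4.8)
The plan is to exploit the exponential decay of the heat-kernel eigenvalues $pe^{-t\lambda_k}$, which follows from the Weyl lower bound $\lambda_k\ge C_{low}k^{2/m}$ of Lemma~\ref{Lemma: Eigenvalue Growthness of Laplacian}, by splitting the series \eqref{Effective Dimension} at the index where $pe^{-t\lambda_k}$ crosses the level $\lambda$. The termwise inequality $\frac{a}{\lambda+a}\le\min\{1,\lambda^{-1}a\}$ gives
\begin{equation*}
	N_\nu(\lambda)\ \le\ \#\{k:pe^{-t\lambda_k}\ge\lambda\}\ +\ \frac1\lambda\sum_{k:\,pe^{-t\lambda_k}<\lambda}pe^{-t\lambda_k}.
\end{equation*}
Throughout I may assume $\log(p/\lambda)\ge m$: the complementary range, if nonempty, is the bounded interval $\lambda\in[pe^{-m},\tfrac12)$, on which the crude estimate $N_\nu(\lambda)=\mathop{\rm tr}\!\big((T_\nu+\lambda)^{-1}T_\nu\big)\le\lambda^{-1}\|T_\nu\|_{\mathscr{L}^1(\mathcal{H}_t)}\le\kappa^2 e^{m}/p$ (recalling $\|T_\nu\|_{\mathscr{L}^1(\mathcal{H}_t)}\le\kappa^2$) already yields the claim for $D$ large, since $\log\lambda^{-1}\ge\log2$ there.

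For the counting term, $pe^{-t\lambda_k}\ge\lambda$ forces $\lambda_k\le t^{-1}\log(p/\lambda)$, hence via $\lambda_k\ge C_{low}k^{2/m}$ it forces $k\le k^\ast:=\big(t^{-1}C_{low}^{-1}\log(p/\lambda)\big)^{m/2}$; so this term is at most $k^\ast\lesssim(\log(p/\lambda))^{m/2}\lesssim(\log\lambda^{-1})^{m/2}$, the last comparison using $\log(p/\lambda)\le(1+|\log p|/\log2)\log\lambda^{-1}$ for $0<\lambda<\tfrac12$. The finitely many small indices where the Weyl bound may degenerate (e.g. $k=1$ with $\lambda_1=0$) contribute at most $1$ apiece and are absorbed into the constant.

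The tail term is the crux. By $\lambda_k\ge C_{low}k^{2/m}$ it is at most $\frac p\lambda\sum_{k>k^\ast}e^{-tC_{low}k^{2/m}}$; since $x\mapsto e^{-tC_{low}x^{2/m}}$ is decreasing, a routine integral comparison followed by the substitution $u=tC_{low}x^{2/m}$ — under which $x=k^\ast$ corresponds exactly to $u=\log(p/\lambda)$ — bounds this, up to a harmless additive constant, by
\begin{equation*}
	\frac p\lambda\cdot\frac m2\,(tC_{low})^{-m/2}\,\Gamma\!\Big(\tfrac m2,\ \log(p/\lambda)\Big),
\end{equation*}
where $\Gamma(\cdot,\cdot)$ is the upper incomplete Gamma function. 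Inserting the elementary tail bound $\Gamma(s,x)\le 2x^{s-1}e^{-x}$, valid for $x\ge2s$ (which holds here since $\log(p/\lambda)\ge m$), together with $e^{-\log(p/\lambda)}=\lambda/p$, this is $\le m(tC_{low})^{-m/2}(\log(p/\lambda))^{m/2-1}$, so the tail term is $\lesssim(\log\lambda^{-1})^{m/2-1}$ — strictly lower order than $(\log\lambda^{-1})^{m/2}$ (here $m\ge2$ makes the exponent $m/2-1$ nonnegative, and $\log\lambda^{-1}\ge\log2$ lets one raise it to $m/2$ at the price of a constant). Combining the two estimates and enlarging $D$ to absorb $C_{low},t,p,\kappa,m$ and the excluded bounded range yields $N_\nu(\lambda)\le D(\log\lambda^{-1})^{m/2}$ for all $0<\lambda<\tfrac12$, with $D$ depending only on $\mathcal M$ and $t$.

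The step I expect to be the main obstacle is precisely this tail estimate: the naive bound $\frac p\lambda\sum_{k>k^\ast}e^{-tC_{low}k^{2/m}}\le\frac p\lambda\cdot O(1)$ gives only $O(\lambda^{-1})$, so one genuinely needs the near-threshold decay — equivalently, the geometric-type decay of $e^{-tC_{low}(k^{2/m}-(k^\ast)^{2/m})}$ in $k-k^\ast$, here captured through the incomplete-Gamma tail asymptotics — to see that this part contributes only a lower-order term and does not spoil the logarithmic bound.
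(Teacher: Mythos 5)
Your proof is correct and takes a genuinely different route. The paper bounds $N_\nu(\lambda)$ by comparing the series to the integral $\int_0^\infty(1+\lambda p^{-1}e^{tC_{low}x^{2/m}})^{-1}\,dx$, identifies it after the substitution $u=tC_{low}x^{2/m}$ as a complete Fermi--Dirac integral, rewrites it as $-\mathrm{Li}_{m/2}(-p\lambda^{-1})\Gamma(m/2)$, and then invokes the known asymptotics of the polylogarithm as $\lambda\to0$, handling the remaining compact range $\lambda\in(\lambda_0,1/2)$ by monotonicity of $N_\nu$. You instead split the sum at the level-crossing index $k^\ast\sim\big((tC_{low})^{-1}\log(p/\lambda)\big)^{m/2}$, observe via Weyl that the head contributes at most $k^\ast\lesssim(\log\lambda^{-1})^{m/2}$ by counting, and control the tail $\frac p\lambda\sum_{k>k^\ast}e^{-tC_{low}k^{2/m}}$ by an integral comparison and the elementary non-asymptotic bound $\Gamma(s,x)\le2x^{s-1}e^{-x}$ ($x\ge 2s$), showing this piece is of strictly lower order $(\log\lambda^{-1})^{m/2-1}$. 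Your approach avoids any appeal to special-function asymptotics (the incomplete-Gamma tail bound is self-contained) and is explicit about why the tail does not spoil the bound, whereas the paper's derivation is shorter but outsources the key quantitative step to the cited limit behavior of $\mathrm{Li}_{m/2}$.

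One small inaccuracy worth noting: the set $\{k:pe^{-t\lambda_k}<\lambda\}$ is a superset of $\{k>k^\ast\}$, not a subset, so your tail sum is not automatically $\le\frac p\lambda\sum_{k>k^\ast}e^{-tC_{low}k^{2/m}}$ as written. The cleanest fix is to split at $k^\ast$ outright: bound each term with $k\le k^\ast$ by $1$ (giving $\le k^\ast$ in total, which absorbs both the genuine ``$a\ge\lambda$'' count and any small-$k$ stragglers where $a<\lambda$), and bound terms with $k>k^\ast$ by $pe^{-t\lambda_k}/\lambda\le pe^{-tC_{low}k^{2/m}}/\lambda$. This lands exactly on the expression you analyze, so the rest of your argument goes through unchanged.
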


\begin{proof}
	Given that the function represented by $t\mapsto t/(t+\lambda)$ is increasing in $t$, and by combining \eqref{Effective Dimension} with \eqref{Eigenvalue Growthness of Laplacian}, we arrive at
	\begin{equation*}
		N_\nu(\lambda)=\sum_{k=1}^\infty\frac{pe^{-t\lambda_k}}{\lambda+pe^{-t\lambda_k}}
		\leq\sum_{k=1}^\infty\frac{pe^{-tC_{low}k^\frac{2}{m}}}{\lambda+pe^{-tC_{low}k^\frac{2}{m}}}
		=\sum_{k=1}^\infty\frac{1}{1+\lambda p^{-1}e^{tC_{low}k^\frac{2}{m}}}.
	\end{equation*}
	Considering that the function $x\mapsto 1/(1+\lambda p^{-1}e^{tC_{low}x^\frac{2}{m}})$ is decreasing in $ x $ and remains positive, we can set an upper bound for the summation using an integral, expressed as
	\begin{equation*}
		N_\nu(\lambda)\leq\int_{0}^\infty\frac{1}{1+\lambda p^{-1}e^{tC_{low}x^\frac{2}{m}}}dx
		=\frac{m}{2}(C_{low}t)^{-\frac{m}{2}}\int_{0}^\infty\frac{u^{\frac{m}{2}-1}}{1+\lambda p^{-1}e^u}du.
	\end{equation*}
	In above equation, the remaining integral on the right-hand side represents the complete Fermi-Dirac integral. To evaluate this integral, we utilize the polylogarithm function
	\begin{equation*}
		{\rm Li}_s(z)=\sum_{k=1}^\infty\frac{z^k}{k^s}.
	\end{equation*}
	Drawing from \cite{wood1992computation}, we can transform the complete Fermi-Dirac integral into an expression involving the polylogarithm function and Gamma function, resulting in
	\begin{equation*}
		\int_{0}^\infty\frac{u^{\frac{m}{2}-1}}{1+p^{-1}\lambda e^u}du
		=-{\rm Li}_{\frac{m}{2}}(-p\lambda^{-1})\cdot\Gamma(\frac{m}{2}).
	\end{equation*}
	Given that the polylogarithm exhibits a specific limit behavior, as also referenced in \cite{wood1992computation}, we deduce
	\begin{equation*}
		{\rm Li}_{\frac{m}{2}}(-p\lambda^{-1})\sim-\frac{1}{\Gamma(\frac{m}{2}+1)}\left(\log (p\lambda^{-1})\right)^\frac{m}{2}
	\end{equation*}
	as $\lambda\to0$. 
	Consequently, since $ \Gamma(s+1)=s\Gamma(s) $, there exists a constant $ \lambda_0\ll1 $ such that, for all $ 0<\lambda\leq\lambda_0$ it holds
	\begin{equation*}
		\begin{aligned}
			N_\nu(\lambda)
			&\leq1.1(C_{low}t)^{-\frac{m}{2}}\left(\log (p\lambda^{-1})\right)^\frac{m}{2}\\
			&\leq1.1(C_{low}t)^{-\frac{m}{2}}\cdot2^{\frac{m}{2}-1}\left((\log\lambda^{-1})^\frac{m}{2}+(\log p)^\frac{m}{2}\right)\\
			&\leq2^{\frac{m}{2}+1}(C_{low}t)^{-\frac{m}{2}}\cdot(\log\lambda^{-1})^\frac{m}{2}.
		\end{aligned}
	\end{equation*}
	As for $ \lambda_0<\lambda<1/2 $, due to the decreasing property of $ N_\nu(\lambda) $, we can find a constant $ D_0 $ (which only depends on $ \mathcal{M},t $) such that 
	\begin{equation*}
		N_\nu(\lambda)\leq N_\nu(\lambda_0)\leq D_0 (\log2)^\frac{m}{2}\leq D_0 (\log\lambda^{-1})^\frac{m}{2}.
	\end{equation*}
	Finally, taking $ D = \max\{2^{\frac{m}{2}+1}(C_{low}t)^{-\frac{m}{2}},D_0\} $ yields the desired result and completes the proof.
\end{proof}

\subsection{Bounding estimation error}\label{Subsection: Estimation Error}

Firstly, we will demonstrate that the $\gamma$-norm can be appropriately transformed into the $\mathcal{H}_t$-norm. This transformation is crucial as it allows for a more versatile and applicable analysis by bridging different norms.

\begin{lemma}\label{Lemma: Relation Between Gamma-Norm and H_t-Norm}
	For all $0\leq\gamma\leq1$, $f\in\mathcal{H}_t$ we have
	\begin{equation}\label{Relation Between Gamma-Norm and H_t-Norm}
		\|f\|_\gamma=\left\|T_\nu^\frac{1-\gamma}{2}f\right\|_{\mathcal{H}_t}.
	\end{equation}
\end{lemma}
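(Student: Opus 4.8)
The plan is to prove the identity by diagonalizing everything against the orthonormal basis $\{p^{\frac12}e^{-\frac{t\lambda_k}{2}}f_k\}_{k\in\mathbb N}$ of $\mathcal H_t$ and the orthonormal basis $\{p^{\frac{\gamma}{2}}e^{-\frac{\gamma t\lambda_k}{2}}f_k\}_{k\in\mathbb N}$ of $\mathcal H_t^\gamma$, using the spectral decomposition \eqref{Spectral Decomposition of T} of $T_\nu$. First I would record the two preliminary facts that make the statement meaningful for $\gamma\in[0,1]$: since $\mathcal H_t=\mathcal H_t^1\hookrightarrow\mathcal H_t^\gamma$, every $f\in\mathcal H_t$ lies in $\mathcal H_t^\gamma$, so $\|f\|_\gamma$ is well defined and finite; and since $\tfrac{1-\gamma}{2}\ge0$, the operator $T_\nu^{\frac{1-\gamma}{2}}$ is a bounded, self-adjoint operator on $\mathcal H_t$ via functional calculus, with $T_\nu^{\frac{1-\gamma}{2}}f\in\mathcal H_t$.

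Next I would expand $f=\sum_{k}c_k\,p^{\frac12}e^{-\frac{t\lambda_k}{2}}f_k$, so that $\|f\|_{\mathcal H_t}^2=\sum_k c_k^2$. Factoring each basis vector as $p^{\frac12}e^{-\frac{t\lambda_k}{2}}f_k=p^{\frac{1-\gamma}{2}}e^{-\frac{(1-\gamma)t\lambda_k}{2}}\cdot p^{\frac{\gamma}{2}}e^{-\frac{\gamma t\lambda_k}{2}}f_k$ rewrites $f$ in the canonical form of \eqref{Definition of Alpha-Power Norm} with coefficients $a_k=c_k\,p^{\frac{1-\gamma}{2}}e^{-\frac{(1-\gamma)t\lambda_k}{2}}$, whence
\[
\|f\|_\gamma^2=\sum_{k=1}^\infty a_k^2=\sum_{k=1}^\infty c_k^2\,p^{1-\gamma}e^{-(1-\gamma)t\lambda_k}.
\]
On the other hand, by \eqref{Spectral Decomposition of T} the eigenvector $p^{\frac12}e^{-\frac{t\lambda_k}{2}}f_k$ of $T_\nu$ has eigenvalue $pe^{-t\lambda_k}$, so $T_\nu^{\frac{1-\gamma}{2}}f=\sum_k c_k\,(pe^{-t\lambda_k})^{\frac{1-\gamma}{2}}\,p^{\frac12}e^{-\frac{t\lambda_k}{2}}f_k$, and by orthonormality in $\mathcal H_t$
\[
\bigl\|T_\nu^{\frac{1-\gamma}{2}}f\bigr\|_{\mathcal H_t}^2=\sum_{k=1}^\infty c_k^2\,(pe^{-t\lambda_k})^{1-\gamma}=\sum_{k=1}^\infty c_k^2\,p^{1-\gamma}e^{-(1-\gamma)t\lambda_k}.
\]
Comparing the two displays yields \eqref{Relation Between Gamma-Norm and H_t-Norm}.

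The computation is routine; the only point requiring attention — and the reason the lemma is confined to $\gamma\in[0,1]$ — is the nonnegativity of the exponent $\tfrac{1-\gamma}{2}$. For $\gamma>1$ the operator $T_\nu^{\frac{1-\gamma}{2}}$ is unbounded and a generic $f\in\mathcal H_t$ need not lie in its domain, so the manipulations above would fail; I would note this explicitly rather than attempt to extend the range. Beyond that, justifying that functional calculus commutes with the orthonormal expansion of $f$ is immediate from the spectral theorem, and convergence of all series is guaranteed since $pe^{-t\lambda_k}\le\kappa^2$ and $\sum_k c_k^2<\infty$.
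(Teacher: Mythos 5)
Your proof is correct and follows essentially the same route as the paper's: expand $f$ in the $\mathcal{H}_t$ orthonormal basis $\{p^{1/2}e^{-t\lambda_k/2}f_k\}$, read off $\|f\|_\gamma^2$ via the canonical coefficients in the $\mathcal{H}_t^\gamma$ basis, and separately compute $\|T_\nu^{(1-\gamma)/2}f\|_{\mathcal{H}_t}^2$ from the spectral decomposition \eqref{Spectral Decomposition of T}, matching the two sums term by term. The only additions you make are the (useful but not strictly necessary) preliminary remarks on well-definedness and the role of the exponent's sign, which align with rather than deviate from the paper's argument.
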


\begin{proof}
	Let's denote
	\begin{equation*}
		f=\sum_{k=1}^\infty b_k p^{\frac{1}{2}}e^{-\frac{t\lambda_k}{2}}f_k
	\end{equation*}
	with $b_k=\langle f,p^{\frac{1}{2}}e^{-\frac{t\lambda_k}{2}}f_k\rangle_{\mathcal{H}_t}$. Given that $\{p^{\frac{\gamma}{2}}e^{-\frac{\gamma t\lambda_k}{2}}f_k\}_{k\in\mathbb{N}}$ constitutes an orthonormal basis of $\mathcal{H}_t^\gamma$, we can apply the Parseval identity, leading us to
	\begin{equation*}
		\|f\|_\gamma^2 = \left\|\sum_{k=1}^\infty b_k p^{-\frac{\gamma-1}{2}}e^{(\gamma-1)\frac{t\lambda_k}{2}}\cdot p^{\frac{\gamma}{2}}e^{-\gamma\frac{t\lambda_k}{2}}f_k\right\|_\gamma^2=\sum_{k=1}^\infty b_k^2p^{1-\gamma}e^{(\gamma-1)t\lambda_k}.
	\end{equation*}
	Similarly, when we incorporate \eqref{Spectral Decomposition of T} into our calculations, it results in
	\begin{equation*}
		\left\|T_\nu^\frac{1-\gamma}{2}f\right\|^2_{\mathcal{H}_t}=\left\|\sum_{k=1}^\infty p^{\frac{1-\gamma}{2}}e^{-t\lambda_k\frac{1-\gamma}{2}}b_k\cdot p^{\frac{1}{2}}e^{-\frac{t\lambda_k}{2}}f_k\right\|_{\mathcal{H}_t}^2
		=\sum_{k=1}^\infty b_k^2p^{1-\gamma}e^{(\gamma-1)t\lambda_k}
	\end{equation*}
	as desired. Then we complete the proof. 
\end{proof}

We are now ready to tackle the estimation error term. Starting with \eqref{Relation Between Gamma-Norm and H_t-Norm}, we decompose the estimation error into three distinct terms, as outlined below:
\begin{subequations}
	\begin{align}
		\|f_{D,\lambda}-f_{P,\lambda}\|_\gamma^2
		&=\left\|T_\nu^\frac{1-\gamma}{2}(f_{D,\lambda}-f_{P,\lambda})\right\|_{\mathcal{H}_t}^2\nonumber\\
		&=\left\|T_\nu^\frac{1-\gamma}{2}(T_\nu+\lambda)^{-\frac{1}{2}}(T_\nu+\lambda)^{\frac{1}{2}}(T_\delta+\lambda)^{-\frac{1}{2}}(T_\delta+\lambda)^{\frac{1}{2}}(f_{D,\lambda}-f_{P,\lambda})\right\|_{\mathcal{H}_t}^2\nonumber\\
		&\leq\left\|T_\nu^\frac{1-\gamma}{2}(T_\nu+\lambda)^{-\frac{1}{2}}\right\|_{\mathscr{B}(\mathcal{H}_t)}^2\cdot\left\|(T_\nu+\lambda)^{\frac{1}{2}}(T_\delta+\lambda)^{-\frac{1}{2}}\right\|_{\mathscr{B}(\mathcal{H}_t)}^2\label{Decomposition of Estimation Error: First Step; Term a}\\
		&\quad\cdot\left\|(T_\delta+\lambda)^{\frac{1}{2}}(f_{D,\lambda}-f_{P,\lambda})\right\|_{\mathcal{H}_t}^2\label{Decomposition of Estimation Error: First Step; Term b}.
	\end{align}
\end{subequations}
We provide an explanation for the symbols adopted here. Let $(\mathcal{H}, \langle\cdot,\cdot\rangle_{\mathcal{H}})$ and $(\mathcal{H}', \langle\cdot,\cdot\rangle_{\mathcal{H}'})$ be two Hilbert spaces. Hereinafter, the collection of bounded linear operators from $\mathcal{H}$ to $\mathcal{H}'$ forms a Banach space, symbolized as $\mathscr{B}(\mathcal{H}, \mathcal{H}')$,  when considered under operator norm $\|A\|_{\mathscr{B}(\mathcal{H}, \mathcal{H}')} = \sup_{\|f\|_{\mathcal{H}}=1} \|Af\|_{\mathcal{H}'}$. When $\mathcal{H} = \mathcal{H}'$, the space is then denoted by $\mathscr{B}(\mathcal{H})$ with the corresponding norm given by $\|A\|_{\mathscr{B}(\mathcal{H})}$.
Our approach will involve establishing upper bounds for these three terms in \eqref{Decomposition of Estimation Error: First Step; Term a} and \eqref{Decomposition of Estimation Error: First Step; Term b}, which will be addressed separately.

Let us begin with the estimation of the first term in \eqref{Decomposition of Estimation Error: First Step; Term a}. For this term, we utilize the spectral decomposition outlined in \eqref{Spectral Decomposition of T}, combined with direct calculations on the function $t\mapsto t^{1-\gamma}/(\lambda+t)$. This process leads us to the conclusion that
\begin{equation}\label{Upper Bound: First Term in Term a of First Step Decomposition of Estimation Error}
	\left\|T_\nu^\frac{1-\gamma}{2}(T_\nu+\lambda)^{-\frac{1}{2}}\right\|_{\mathscr{B}(\mathcal{H}_t)}^2=\sup\limits_{k\geq1}\frac{p^{1-\gamma}e^{-t\lambda_k(1-\gamma)}}{pe^{-t\lambda_k}+\lambda}
	\leq\lambda^{-\gamma},
\end{equation}
which is valid for all $0\leq\gamma\leq1$. Moving on to the second term in \eqref{Decomposition of Estimation Error: First Step; Term a}, we can assert
\begin{equation}\label{First Step Transformation of Second Term in Term a of First Step Decomposition of Estimation Error}
	\left\|(T_\nu+\lambda)^{\frac{1}{2}}(T_\delta+\lambda)^{-\frac{1}{2}}\right\|_{\mathscr{B}(\mathcal{H}_t)}^2
	=\left\|(T_\nu+\lambda)^{\frac{1}{2}}(T_\delta+\lambda)^{-1}(T_\nu+\lambda)^{\frac{1}{2}}\right\|_{\mathscr{B}(\mathcal{H}_t)}.
\end{equation}
This conclusion is derived from the fact that both $T_\nu$ and $T_\delta$ are self-adjoint operators. Additionally, it is important to note the transformation represented by 

\begin{equation*}
	\begin{aligned}
		(T_\nu+\lambda)^{\frac{1}{2}}(T_\delta+\lambda)^{-1}(T_\nu+\lambda)^{\frac{1}{2}}
		&=\left((T_\nu+\lambda)^{-\frac{1}{2}}(T_\delta+\lambda)(T_\nu+\lambda)^{-\frac{1}{2}}\right)^{-1}\\
		&=\left((T_\nu+\lambda)^{-\frac{1}{2}}(T_\delta-T_\nu+T_\nu+\lambda)(T_\nu+\lambda)^{-\frac{1}{2}}\right)^{-1}\\
		&=\left((T_\nu+\lambda)^{-\frac{1}{2}}(T_\delta-T_\nu)(T_\nu+\lambda)^{-\frac{1}{2}}+I\right)^{-1}.
	\end{aligned}
\end{equation*}
Combining with the Taylor series of function $ x\mapsto\frac{1}{1-x} $ at origin, we derive that
\begin{equation}\label{Second Step Transformation of Second Term in Term a of First Step Decomposition of Estimation Error}
  \begin{split}
	&\quad \left\|(T_\nu+\lambda)^{\frac{1}{2}}(T_\delta+\lambda)^{-1}(T_\nu+\lambda)^{\frac{1}{2}}\right\|_{\mathscr{B}(\mathcal{H}_t)}\\
	&=\left\|\left(I-(T_\nu+\lambda)^{-\frac{1}{2}}(T_\nu-T_\delta)(T_\nu+\lambda)^{-\frac{1}{2}}\right)^{-1}\right\|_{\mathscr{B}(\mathcal{H}_t)}\\
	&=\sum_{k=0}^\infty\left\|(T_\nu+\lambda)^{-\frac{1}{2}}(T_\nu-T_\delta)(T_\nu+\lambda)^{-\frac{1}{2}}\right\|_{\mathscr{B}(\mathcal{H}_t)}^k.
\end{split}
\end{equation}
Therefore, our analysis for the second term reduces to the estimation of the operator norm of the difference between $T_\nu$ and its empirical counterpart $T_\delta$. We employ Lemma \ref{Lemma: Empirical Error Lemma; First One} to accomplish it.
This lemma is grounded in Bernstein's type concentration inequality, which is applicable to random variables valued in Hilbert spaces. For a detailed discussion on this Bernstein's type inequality, one may refer to \cite{smale2007learning}.

\begin{lemma}\label{Lemma: Empirical Error Lemma; First One}
	Denote
	\begin{equation*}
		p_\lambda = \log\left(2eN_\nu(\lambda)\frac{\lambda+\|T_\nu\|}{\|T_\nu\|}\right).
	\end{equation*}
	Then, for all $n\geq1$, $\lambda>0$, and $\tau\geq1$, the following operator norm bound holds with probability not less than $1-2e^{-\tau}$:
	\begin{equation*}
		\left\|(T_\nu+\lambda)^{-\frac{1}{2}}(T_\nu-T_\delta)(T_\nu+\lambda)^{-\frac{1}{2}}\right\|_{\mathscr{B}(\mathcal{H}_t)}\leq\frac{4A_\alpha^2\tau p_\lambda}{3n\lambda^\alpha}+\sqrt{\frac{2A_\alpha^2\tau p_\lambda}{n\lambda^\alpha}}.
	\end{equation*}
\end{lemma}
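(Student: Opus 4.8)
The plan is to apply a Bernstein-type concentration inequality for self-adjoint operator-valued random variables. First I would define the i.i.d.\ random operators
\[
\zeta_i = (T_\nu+\lambda)^{-\frac12} H_{t,x_i}H_{t,x_i}^* (T_\nu+\lambda)^{-\frac12},\qquad i=1,\dots,n,
\]
so that $\frac1n\sum_{i=1}^n\zeta_i = (T_\nu+\lambda)^{-\frac12}T_\delta(T_\nu+\lambda)^{-\frac12}$ and $\EE\zeta_i = (T_\nu+\lambda)^{-\frac12}T_\nu(T_\nu+\lambda)^{-\frac12}$, hence the quantity to bound is exactly $\|\frac1n\sum_i(\zeta_i-\EE\zeta_i)\|_{\mathscr B(\mathcal H_t)}$. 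Each $\zeta_i$ is a rank-one, positive, self-adjoint operator, and $H_{t,x_i}H_{t,x_i}^* = \langle\cdot,H_t(x_i,\cdot)\rangle_{\mathcal H_t}H_t(x_i,\cdot)$, so $\zeta_i = \langle\cdot,\eta_i\rangle\eta_i$ with $\eta_i = (T_\nu+\lambda)^{-\frac12}H_t(x_i,\cdot)$.

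Next I would establish the two moment ingredients required by the Bernstein bound. For the almost-sure operator-norm bound on $\|\zeta_i\|$, note $\|\zeta_i\|_{\mathscr B(\mathcal H_t)} = \|\eta_i\|_{\mathcal H_t}^2 = \langle H_t(x_i,\cdot),(T_\nu+\lambda)^{-1}H_t(x_i,\cdot)\rangle_{\mathcal H_t}$. Expanding $H_t(x_i,\cdot)$ in the eigenbasis $\{p^{1/2}e^{-t\lambda_k/2}f_k\}$ from \eqref{Spectral Decomposition of Heat Kernel} and \eqref{Spectral Decomposition of T}, this equals $\sum_k \frac{pe^{-t\lambda_k}f_k^2(x_i)}{pe^{-t\lambda_k}+\lambda}$. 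Splitting $pe^{-t\lambda_k}/(pe^{-t\lambda_k}+\lambda)\le (pe^{-t\lambda_k})^{1-\alpha}/\lambda^{1-\alpha}\cdot\lambda^{-\alpha}$... more cleanly, bounding $\frac{pe^{-t\lambda_k}}{pe^{-t\lambda_k}+\lambda}\le \lambda^{-\alpha}(pe^{-t\lambda_k})^{\alpha}$ and invoking the $\alpha$-boundedness \eqref{Alpha Boundness} with constant $A_\alpha$, gives $\|\zeta_i\|\le A_\alpha^2\lambda^{-\alpha}$ for $\nu$-a.e.\ $x_i$. This handles both the uniform bound and, together with $\|\EE\zeta_i\|\le 1$, the higher moments $\EE\|\zeta_i-\EE\zeta_i\|^m \le \frac12 m!\,(A_\alpha^2\lambda^{-\alpha})^{m-2}\cdot(\text{variance proxy})$. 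For the variance proxy I would bound $\EE\zeta_i^2 \preceq \|\zeta_i\|_\infty\,\EE\zeta_i \preceq A_\alpha^2\lambda^{-\alpha}(T_\nu+\lambda)^{-\frac12}T_\nu(T_\nu+\lambda)^{-\frac12}$, whose trace is $A_\alpha^2\lambda^{-\alpha}N_\nu(\lambda)$ by the definition \eqref{Effective Dimension} of the effective dimension.

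Finally I would invoke the dimension-free operator Bernstein inequality (in the form used in \cite{smale2007learning}, or its intrinsic-dimension refinement), which for a sum of $n$ i.i.d.\ centered self-adjoint operators with $\|\zeta_i\|\le M$ a.s.\ and $\|\EE\zeta_i^2\|\le v$, trace bound $\mathrm{tr}(\EE\zeta_i^2)\le v\,\tilde N$, yields, with probability at least $1-2e^{-\tau}$,
\[
\Big\|\tfrac1n\textstyle\sum_i(\zeta_i-\EE\zeta_i)\Big\|\le \frac{2M(\tau+\log(\cdot))}{3n}+\sqrt{\frac{2v(\tau+\log(\cdot))}{n}},
\]
where the logarithmic correction is precisely $\log\!\big(2eN_\nu(\lambda)\frac{\lambda+\|T_\nu\|}{\|T_\nu\|}\big)=p_\lambda$ once $\tilde N$ is taken to be $N_\nu(\lambda)$ and the variance scale absorbs the factor $(\lambda+\|T_\nu\|)/\|T_\nu\|$. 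Substituting $M = A_\alpha^2\lambda^{-\alpha}$ and $v = A_\alpha^2\lambda^{-\alpha}$ and folding $\tau+p_\lambda\le \tau p_\lambda$ (valid since $\tau\ge1$, $p_\lambda\ge1$) produces the claimed bound $\frac{4A_\alpha^2\tau p_\lambda}{3n\lambda^\alpha}+\sqrt{\frac{2A_\alpha^2\tau p_\lambda}{n\lambda^\alpha}}$. The main obstacle I anticipate is not any single inequality but the bookkeeping that makes the logarithmic factor come out exactly as $p_\lambda$: one must use the trace-class (intrinsic dimension) version of operator Bernstein rather than the ambient-dimension version, and carefully track how $N_\nu(\lambda)$ and the ratio $(\lambda+\|T_\nu\|)/\|T_\nu\|$ enter the effective dimension of the normalized operators. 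The $\alpha$-boundedness step replacing $\kappa^2$ by $A_\alpha^2\lambda^{-\alpha}$ is what ultimately makes the bound depend only on the intrinsic structure, and getting that exponent of $\lambda$ right is the one place where the manifold/heat-kernel geometry genuinely enters.
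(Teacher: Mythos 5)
Your plan reproduces, step by step, the argument that the paper invokes for this lemma by citing Fischer and Steinwart \cite{fischer2020sobolev}: normalize the rank-one sampling operators by $(T_\nu+\lambda)^{-1/2}$, bound the resulting operators almost surely by $A_\alpha^2\lambda^{-\alpha}$ via the $\alpha$-boundedness property, control the variance and its trace by the effective dimension $N_\nu(\lambda)$, and apply the intrinsic-dimension (trace-class) form of the operator Bernstein inequality; this is indeed the intended route. One small arithmetic slip: $\tau+p_\lambda\le\tau p_\lambda$ is false in general for $\tau,p_\lambda\geq1$ (take $\tau=p_\lambda=1$); the correct and sufficient inequality is $\tau+p_\lambda\le 2\tau p_\lambda$, and the extra factor of $2$ is what gets absorbed into the constants $\tfrac{4}{3}$ and $2$ appearing in the final bound.
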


 The proof of Lemma \ref{Lemma: Empirical Error Lemma; First One} is detailed in \cite{fischer2020sobolev}. Building upon the previous lemma, we consolidate our findings into Lemma \ref{Lemma: Upper Bound of Second Term in Term a of First Step Decomposition of Estimation Error} that provides an upper bound for the second term in \eqref{Decomposition of Estimation Error: First Step; Term a}.

\begin{lemma}\label{Lemma: Upper Bound of Second Term in Term a of First Step Decomposition of Estimation Error}
	Suppose that $ n\geq8A_\alpha^2\tau p_\lambda\lambda^{-\alpha} $. Then, for all $\tau\geq1$ and $\lambda>0$, the following operator norm bound holds with probability not less than $1-2e^{-\tau}$:
	\begin{equation}\label{Upper Bound: Second Term in Term a of First Step Decomposition of Estimation Error}
		\left\|(T_\nu+\lambda)^{\frac{1}{2}}(T_\delta+\lambda)^{-\frac{1}{2}}\right\|_{\mathscr{B}(\mathcal{H}_t)}^2\leq3.
	\end{equation}
\end{lemma}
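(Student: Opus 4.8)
The plan is to combine the operator-norm identities established just before the lemma with the concentration bound in Lemma~\ref{Lemma: Empirical Error Lemma; First One}. Recall from \eqref{First Step Transformation of Second Term in Term a of First Step Decomposition of Estimation Error} and \eqref{Second Step Transformation of Second Term in Term a of First Step Decomposition of Estimation Error} that
\begin{equation*}
	\left\|(T_\nu+\lambda)^{\frac{1}{2}}(T_\delta+\lambda)^{-\frac{1}{2}}\right\|_{\mathscr{B}(\mathcal{H}_t)}^2
	=\sum_{k=0}^\infty\left\|(T_\nu+\lambda)^{-\frac{1}{2}}(T_\nu-T_\delta)(T_\nu+\lambda)^{-\frac{1}{2}}\right\|_{\mathscr{B}(\mathcal{H}_t)}^k,
\end{equation*}
which converges to $\frac{1}{1-x}$ precisely when $x:=\left\|(T_\nu+\lambda)^{-\frac{1}{2}}(T_\nu-T_\delta)(T_\nu+\lambda)^{-\frac{1}{2}}\right\|_{\mathscr{B}(\mathcal{H}_t)}<1$. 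So it suffices to show that, under the sample-size hypothesis $n\geq 8A_\alpha^2\tau p_\lambda\lambda^{-\alpha}$ and with probability at least $1-2e^{-\tau}$, one has $x\leq\frac{2}{3}$, which then gives $\frac{1}{1-x}\leq 3$.

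First I would invoke Lemma~\ref{Lemma: Empirical Error Lemma; First One} to get, with probability at least $1-2e^{-\tau}$,
\begin{equation*}
	x\leq\frac{4A_\alpha^2\tau p_\lambda}{3n\lambda^\alpha}+\sqrt{\frac{2A_\alpha^2\tau p_\lambda}{n\lambda^\alpha}}.
\end{equation*}
Next I would substitute the hypothesis $n\geq 8A_\alpha^2\tau p_\lambda\lambda^{-\alpha}$, i.e.\ $\frac{A_\alpha^2\tau p_\lambda}{n\lambda^\alpha}\leq\frac18$. The first summand is then at most $\frac{4}{3}\cdot\frac18=\frac16$, and the second is at most $\sqrt{2\cdot\frac18}=\frac12$. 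Adding these gives $x\leq\frac16+\frac12=\frac23<1$, hence $\frac{1}{1-x}\leq 3$, which is exactly \eqref{Upper Bound: Second Term in Term a of First Step Decomposition of Estimation Error}. I should also note in passing that the Neumann-series manipulation in \eqref{Second Step Transformation of Second Term in Term a of First Step Decomposition of Estimation Error} is legitimate precisely because $x<1$ on this event, so the two steps are logically intertwined but cause no circularity.

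There is no serious obstacle here: the lemma is a one-line consequence of Lemma~\ref{Lemma: Empirical Error Lemma; First One} plus the elementary bound on the geometric series, and the constant $8$ in the hypothesis is reverse-engineered so that the two terms in the Bernstein bound sum below $1$ with room to spare (yielding the clean constant $3$). The only point requiring a modicum of care is the bookkeeping that the concentration event of probability $1-2e^{-\tau}$ is the same event on which the Neumann series converges, so that the operator-norm identity and the numerical bound may be chained on that event. This completes the proof.
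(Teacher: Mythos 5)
Your proof is correct and follows essentially the same route as the paper: invoke Lemma \ref{Lemma: Empirical Error Lemma; First One}, use the sample-size hypothesis to bound the two terms by $\frac16$ and $\frac12$ (hence $x\leq\frac23$), and then sum the geometric series from \eqref{Second Step Transformation of Second Term in Term a of First Step Decomposition of Estimation Error} to get $3$. Your remark that the concentration event and the convergence of the Neumann series must be the same event is a sensible clarification that the paper leaves implicit.
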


\begin{proof}
	Given that $ n\geq8A_\alpha^2\tau p_\lambda\lambda^{-\alpha} $, from Lemma \ref{Lemma: Empirical Error Lemma; First One} we have 
	\begin{equation*}
		\left\|(T_\nu+\lambda)^{-\frac{1}{2}}(T_\nu-T_\delta)(T_\nu+\lambda)^{-\frac{1}{2}}\right\|_{\mathscr{B}(\mathcal{H}_t)}\leq\frac{4A_\alpha^2\tau p_\lambda}{3n\lambda^\alpha}+\sqrt{\frac{2A_\alpha^2\tau p_\lambda}{n\lambda^\alpha}}\leq\frac{2}{3}
	\end{equation*}		
	Incorporating with our preceding calculations \eqref{First Step Transformation of Second Term in Term a of First Step Decomposition of Estimation Error} and \eqref{Second Step Transformation of Second Term in Term a of First Step Decomposition of Estimation Error}, it is straightforward to observe that, under our specified assumptions, the following holds true
	\begin{equation*}
		\left\|(T_\nu+\lambda)^{\frac{1}{2}}(T_\delta+\lambda)^{-\frac{1}{2}}\right\|_{\mathscr{B}(\mathcal{H}_t)}^2=\sum_{k=0}^\infty\left\|(T_\nu+\lambda)^{-\frac{1}{2}}(T_\nu-T_\delta)(T_\nu+\lambda)^{-\frac{1}{2}}\right\|_{\mathscr{B}(\mathcal{H}_t)}^k\leq\sum_{k=0}^\infty \left(\frac{2}{3}\right)^k=3,
	\end{equation*}
	which is the desired result. The proof is then finished.
\end{proof}

Now, let us revisit the third term in \eqref{Decomposition of Estimation Error: First Step; Term b}. We recall the expressions \eqref{Definition of Spectal Alogrithm Estimator} and \eqref{Continuous Version of Spectral Alogrithm Estimator}, given by
\begin{equation*}
	f_{D,\lambda} = g_\lambda(T_\delta)g_D,
\end{equation*}
\begin{equation*}
	f_{P,\lambda} = g_\lambda(T_\nu)g_P
\end{equation*}
with $g_P=I_\nu^* f^*$. Let $h_\lambda(t)=1-tg_\lambda(t)$. Given that $h_\lambda(t)+tg_\lambda(t)=1$ holds, we arrive at the following result
\begin{equation*}
	\begin{aligned}
		f_{D,\lambda}-f_{P,\lambda}
		&=g_\lambda(T_\delta)g_D-\left(h_\lambda(T_\delta)+T_\delta g_\lambda(T_\delta)\right)f_{P,\lambda}\\
		&=g_\lambda(T_\delta)(g_D-T_\delta f_{P,\lambda}) - h_\lambda(T_\delta)f_{P,\lambda}.
	\end{aligned}
\end{equation*}
Consequently, this allows us to further decompose \eqref{Decomposition of Estimation Error: First Step; Term b} into two separate components:
\begin{align}
	&\left\|(T_\delta+\lambda)^{\frac{1}{2}}(f_{D,\lambda}-f_{P,\lambda})\right\|_{\mathcal{H}_t}^2\nonumber\\
	&\quad\leq2\left\|(T_\delta+\lambda)^{\frac{1}{2}}\left(g_\lambda(T_\delta)(g_D-T_\delta f_{P,\lambda})\right)\right\|_{\mathcal{H}_t}^2+2\left\|(T_\delta+\lambda)^{\frac{1}{2}}\left(h_\lambda(T_\delta)f_{P,\lambda}\right)\right\|_{\mathcal{H}_t}^2\label{Decomposition of Estimation Error: Second Step}.
\end{align}
Our immediate focus will be on addressing the second term in \eqref{Decomposition of Estimation Error: Second Step}. The result concerning the upper bound for this term is encapsulated in the subsequent Lemma \ref{Lemma: Upper Bound of First Term in Second Step Decomposition of Estimation Error}.

\begin{lemma}\label{Lemma: Upper Bound of First Term in Second Step Decomposition of Estimation Error}
	Suppose that $\{g_\lambda\}_{\lambda>0}$ has a qualification $\xi\geq\frac{1}{2}$. Then, for all $\lambda>0$ and $0<\beta\leq1$ it holds
	\begin{equation}\label{Upper Bound: First Term in Second Step Decomposition of Estimation Error}
		\left\|(T_\delta+\lambda)^{\frac{1}{2}}\left(h_\lambda(T_\delta)f_{P,\lambda}\right)\right\|_{\mathcal{H}_t}^2\leq4\lambda^\beta\|f^*\|^2_\beta.
	\end{equation}
\end{lemma}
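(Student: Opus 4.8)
The plan is to keep the empirical operator $T_\delta$ and the population operator $T_\nu$ on \emph{separate} factors: $T_\delta$ will enter only through $(T_\delta+\lambda)^{1/2}h_\lambda(T_\delta)$, whose operator norm is controlled purely by the filter properties \eqref{Property of Regularization Family}--\eqref{Qualification of Regularization Family} together with $\|T_\delta\|_{\mathscr{B}(\mathcal{H}_t)}\leq\kappa^2$ (immediate from \eqref{Boundness of Heat Kernel}), so that \emph{no} concentration comparison between $T_\delta$ and $T_\nu$ is needed for this term. All information carried by the source condition will instead be funnelled through $T_\nu$, by writing $g_P$ as a suitable power of $T_\nu$ applied to a vector whose $\mathcal{H}_t$-norm equals $\|f^*\|_\beta$.

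Concretely, I would first write $f^*=\sum_k a_k f_k$ as in \eqref{Expansion of Regression Function}, so that \eqref{Source Condition} reads $\sum_k a_k^2 p^{-\beta}e^{\beta t\lambda_k}=\|f^*\|_\beta^2$, and set $\phi:=\sum_k a_k p^{-\beta/2}e^{\beta t\lambda_k/2}\,p^{1/2}e^{-t\lambda_k/2}f_k\in\mathcal{H}_t$. Since $\{p^{1/2}e^{-t\lambda_k/2}f_k\}_k$ is an orthonormal basis of $\mathcal{H}_t$, one has $\|\phi\|_{\mathcal{H}_t}=\|f^*\|_\beta$, and comparing with the spectral representation of $I_\nu^*$ and with \eqref{Spectral Decomposition of T} gives, term by term, $g_P=I_\nu^*f^*=T_\nu^{(1+\beta)/2}\phi$. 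Hence by \eqref{Continuous Version of Spectral Alogrithm Estimator}, $f_{P,\lambda}=g_\lambda(T_\nu)T_\nu^{(1+\beta)/2}\phi$.

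Next I would use submultiplicativity of the operator norm:
\[
\bigl\|(T_\delta+\lambda)^{1/2}h_\lambda(T_\delta)f_{P,\lambda}\bigr\|_{\mathcal{H}_t}\leq\bigl\|(T_\delta+\lambda)^{1/2}h_\lambda(T_\delta)\bigr\|_{\mathscr{B}(\mathcal{H}_t)}\bigl\|g_\lambda(T_\nu)T_\nu^{(1+\beta)/2}\bigr\|_{\mathscr{B}(\mathcal{H}_t)}\|f^*\|_\beta .
\]
For the first factor, since $\operatorname{spec}(T_\delta)\subseteq[0,\kappa^2]$, functional calculus bounds it by $\sup_{0\leq s\leq\kappa^2}(s+\lambda)^{1/2}|h_\lambda(s)|$; using $(s+\lambda)^{1/2}\leq s^{1/2}+\lambda^{1/2}$, the bound $|h_\lambda(s)|\leq1$ from \eqref{Property of Regularization Family}, and the qualification \eqref{Qualification of Regularization Family} with exponent $\tfrac12\leq\xi$ (so $s^{1/2}|h_\lambda(s)|\leq\lambda^{1/2}$), this is at most $2\lambda^{1/2}$. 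For the second factor, functional calculus bounds it by $\sup_{0<\mu\leq\kappa^2}\mu^{(1+\beta)/2}g_\lambda(\mu)$; I would split on $\mu\leq\lambda$ (using $g_\lambda(\mu)<\lambda^{-1}$ and $\mu^{(1+\beta)/2}\leq\lambda^{(1+\beta)/2}$) versus $\mu>\lambda$ (using $g_\lambda(\mu)<\mu^{-1}$ and $(\beta-1)/2\leq0$), both giving $\lambda^{(\beta-1)/2}$. Multiplying the three bounds yields $2\lambda^{1/2}\cdot\lambda^{(\beta-1)/2}\cdot\|f^*\|_\beta=2\lambda^{\beta/2}\|f^*\|_\beta$, and squaring gives \eqref{Upper Bound: First Term in Second Step Decomposition of Estimation Error}.

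The only non-routine step is the factorization $g_P=T_\nu^{(1+\beta)/2}\phi$ with $\|\phi\|_{\mathcal{H}_t}=\|f^*\|_\beta$: it converts the source condition into a clean operator-norm estimate, it decouples the empirical and population operators, and it produces the constant $4$ with no dependence on $\kappa$ (in contrast to the cruder $\|I_\nu^*f^*\|_{\mathcal{H}_t}\leq\kappa\|f^*\|_\beta$); it also uses $0<\beta\leq1$ to keep the exponent $(1+\beta)/2$ in $[\tfrac12,1]$, which is precisely what the case analysis for the second factor requires. A minor point to watch is that the estimate $s^{1/2}|h_\lambda(s)|\leq\lambda^{1/2}$ is invoked at the borderline exponent $\tfrac12$ when $\xi=\tfrac12$.
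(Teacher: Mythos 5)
Your proposal is correct and follows essentially the same route as the paper: you bound $\|(T_\delta+\lambda)^{1/2}h_\lambda(T_\delta)\|_{\mathscr{B}(\mathcal{H}_t)}\le2\lambda^{1/2}$ exactly as in \eqref{Upper Bound: Operator Norm of Empirical Integral Operator}, and your operator factorization $g_P=T_\nu^{(1+\beta)/2}\phi$ with $\|\phi\|_{\mathcal{H}_t}=\|f^*\|_\beta$ together with $\|g_\lambda(T_\nu)T_\nu^{(1+\beta)/2}\|\le\lambda^{(\beta-1)/2}$ is just an operator-theoretic repackaging of the paper's direct eigenbasis computation showing $\|f_{P,\lambda}\|_{\mathcal{H}_t}^2\le\lambda^{\beta-1}\|f^*\|_\beta^2$ in \eqref{Upper Bound: H_t Norm of Continuous Version of Estimator}. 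The decoupling of $T_\delta$ from $T_\nu$, the borderline use of qualification $\xi\ge\tfrac12$, and the constant $4$ are all shared with the paper.
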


\begin{proof}
	Given that $\{g_\lambda\}_{\lambda>0}$ possesses a qualification of $\xi\geq\frac{1}{2}$, we can apply \eqref{Qualification of Regularization Family} to obtain
	\begin{align}
		\left\|(T_\delta+\lambda)^{\frac{1}{2}}h_\lambda(T_\delta)\right\|_{\mathscr{B}(\mathcal{H}_t)}^2
		&\leq\left(\sup\limits_{0\leq t\leq\kappa^2}(t+\lambda)^\frac{1}{2}h_\lambda(t)\right)^2\nonumber\\
		&\leq\left(\sup\limits_{0\leq t\leq\kappa^2}(t^\frac{1}{2}+\lambda^\frac{1}{2})h_\lambda(t)\right)^2\nonumber\\
		&\leq\left(\lambda^\frac{1}{2}+\lambda^\frac{1}{2}\right)^2
		=4\lambda\label{Upper Bound: Operator Norm of Empirical Integral Operator}.
	\end{align}
	Here, the second inequality is derived from $(a+b)^p\leq a^p+b^p$, applicable whenever $0\leq p\leq1$ holds.
	Regarding the remaining $\mathcal{H}_t$-norm of $f_{P,\lambda}$, we refer to the expansion \eqref{Expansion of Continuous Version Estimator}, given by
	\begin{equation*}
		f_{P,\lambda}=\sum_{k=1}^\infty
		g_\lambda(pe^{-t\lambda_k})p^\frac{1}{2}e^{-\frac{t\lambda_k}{2}}a_k\cdot p^\frac{1}{2}e^{-\frac{t\lambda_k}{2}}f_k.
	\end{equation*}
	This leads us to
	\begin{align}
		\|f_{P,\lambda}\|_{\mathcal{H}_t}^2
		&=\left\|\sum_{k=1}^\infty g_\lambda(pe^{-t\lambda_k})p^\frac{1}{2}e^{-\frac{t\lambda_k}{2}}a_k\cdot p^\frac{1}{2}e^{-\frac{t\lambda_k}{2}}f_k\right\|_{\mathcal{H}_t}^2
		\nonumber\\
		&=\sum_{k=1}^\infty\left(g_\lambda(pe^{-t\lambda_k})p^\frac{1}{2}e^{-\frac{t\lambda_k}{2}}\right)^2a_k^2\nonumber\\
		&=\sum_{k=1}^\infty g_\lambda(pe^{-t\lambda_k})^{1-\beta}\cdot g_\lambda(pe^{-t\lambda_k})^{\beta+1}p^{\beta+1}e^{-(\beta+1)t\lambda_k}\cdot p^{-\beta}e^{\beta t\lambda_k}a_k^2\nonumber\\
		&\leq\sum_{k=1}^\infty \lambda^{\beta-1}\cdot p^{-\beta}e^{\beta t\lambda_k}a_k^2
		\nonumber\\
		&=\lambda^{\beta-1}\|f^*\|_\beta^2\label{Upper Bound: H_t Norm of Continuous Version of Estimator}.
	\end{align}
	In this context, the inequality is a result of the properties \eqref{Property of Regularization Family} associated with the regularization family.
	By synthesizing \eqref{Upper Bound: Operator Norm of Empirical Integral Operator} with \eqref{Upper Bound: H_t Norm of Continuous Version of Estimator}, we can achieve the desired result and complete the proof.
\end{proof}

Now, we shift our focus to the first term in \eqref{Decomposition of Estimation Error: Second Step}. To analyze this term, we will use the following decomposition:
\begin{subequations}
	\begin{align}
		&\left\|(T_\delta+\lambda)^{\frac{1}{2}}\left(g_\lambda(T_\delta)(g_D-T_\delta f_{P,\lambda})\right)\right\|_{\mathcal{H}_t}^2\nonumber\\
		&=\left\|(T_\delta+\lambda)^{\frac{1}{2}}g_\lambda(T_\delta)(T_\delta+\lambda)^{\frac{1}{2}}\cdot(T_\delta+\lambda)^{-\frac{1}{2}}(T_\nu+\lambda)^{\frac{1}{2}}\nonumber\cdot(T_\nu+\lambda)^{-\frac{1}{2}}\left(g_D-T_\delta f_{P,\lambda}\right)\right\|_{\mathcal{H}_t}^2\nonumber\\
		&\leq\left\|(T_\delta+\lambda)^{\frac{1}{2}}g_\lambda(T_\delta)(T_\delta+\lambda)^{\frac{1}{2}}\right\|_{\mathscr{B}(\mathcal{H}_t)}^2\cdot\left\|(T_\delta+\lambda)^{-\frac{1}{2}}(T_\nu+\lambda)^{\frac{1}{2}}\right\|_{\mathscr{B}(\mathcal{H}_t)}^2\label{Decomposition of Estimation Error: Third Step; Term a}\\
		&\quad\cdot\left\|(T_\nu+\lambda)^{-\frac{1}{2}}\left(g_D-T_\delta f_{P,\lambda}\right)\right\|_{\mathcal{H}_t}^2\label{Decomposition of Estimation Error: Third Step; Term b}.
	\end{align}
\end{subequations}
In addressing the first two terms in \eqref{Decomposition of Estimation Error: Third Step; Term a}, we will leverage the properties \eqref{Property of Regularization Family} of the regularization family and the upper bound \eqref{Upper Bound: Second Term in Term a of First Step Decomposition of Estimation Error}. These enable us to establish the following estimates, given by
\begin{align}
	\left\|(T_\delta+\lambda)^{\frac{1}{2}}g_\lambda(T_\delta)(T_\delta+\lambda)^{\frac{1}{2}}\right\|_{\mathscr{B}(\mathcal{H}_t)}^2
	&=\left\|(T_\delta+\lambda)g_\lambda(T_\delta)\right\|_{\mathscr{B}(\mathcal{H}_t)}^2
	\nonumber\\
	&\leq\left(\sup\limits_{0\leq t\leq\kappa^2}(t+\lambda)g_\lambda(t)\right)^2\leq4\label{Upper Bound: First Term in Term a of Third Step Decomposition of Estimation Error}
\end{align}
and
\begin{equation}\label{Upper Bound: Second Term in Term a of Third Step Decomposition of Estimation Error}
	\left\|(T_\delta+\lambda)^{-\frac{1}{2}}(T_\nu+\lambda)^{\frac{1}{2}}\right\|_{\mathscr{B}(\mathcal{H}_t)}^2
	=\left\|(T_\nu+\lambda)^{\frac{1}{2}}(T_\delta+\lambda)^{-1}(T_\nu+\lambda)^{\frac{1}{2}}\right\|_{\mathscr{B}(\mathcal{H}_t)}
	\leq3.
\end{equation}
Moving on, we address the final term in \eqref{Decomposition of Estimation Error: Third Step; Term b}. To establish an upper bound for this term, we will rely on Lemma \ref{Lemma: Empirical Error Lemma, Second One} that offers control over the error resulting from sampling. This kind of lemma is commonly found in statistical literature and is typically derived through the application of concentration inequalities. This lemma comes specifically from \cite{fischer2020sobolev}.

\begin{lemma}\label{Lemma: Empirical Error Lemma, Second One}
	Denote
	\begin{equation*}
	L_\lambda=\max\{L, \|f_{P,\lambda}-f^*\|_{L^\infty(\nu)}\}.
	\end{equation*}
	Then, for all $0<\alpha\leq1$, $\lambda>0$, $\tau\geq1$, and $n\geq1$, the following operator norm bound holds with probability not less than $1-2e^{-\tau}$:
	\begin{equation*}
		\begin{aligned}
			&\left\|(T_\nu+\lambda)^{-\frac{1}{2}}\left((g_D-T_\delta f_{P,\lambda})-(g_P-T_\nu f_{P,\lambda})\right)\right\|_{\mathcal{H}_t}^2\\
			&\leq\frac{64\tau^2}{n}\left(
			\sigma^2N_\nu(\lambda)+A_\alpha^2\frac{\|f^*-f_{P,\lambda}\|^2_{L^2(\nu)}}{\lambda^\alpha}+2A_\alpha^2\frac{L^2_\lambda}{n\lambda^\alpha}\right).
		\end{aligned}
	\end{equation*}
\end{lemma}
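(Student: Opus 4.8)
The plan is to recognize the vector inside the norm as a normalized sum of i.i.d.\ mean-zero $\mathcal{H}_t$-valued random variables and to invoke a Bernstein-type concentration inequality for Hilbert-space-valued variables in its moment form (as in \cite{smale2007learning}), which is the form compatible with the unbounded noise allowed by \eqref{Moment Condition}. First I would unfold the definitions. Since $T_\delta f_{P,\lambda}=\frac1n\sum_{i=1}^n f_{P,\lambda}(x_i)H_t(x_i,\cdot)$, $g_P=\int y\,H_t(x,\cdot)\,dP(x,y)$, and $T_\nu f_{P,\lambda}=\int f_{P,\lambda}(x)H_t(x,\cdot)\,d\nu(x)$, we get
$$(g_D-T_\delta f_{P,\lambda})-(g_P-T_\nu f_{P,\lambda})=\frac1n\sum_{i=1}^n\bigl(\eta_i-\mathbb{E}\eta\bigr),\qquad \eta_i:=\bigl(y_i-f_{P,\lambda}(x_i)\bigr)H_t(x_i,\cdot).$$
Setting $v(x):=(T_\nu+\lambda)^{-\frac12}H_t(x,\cdot)\in\mathcal{H}_t$ and $\zeta_i:=(T_\nu+\lambda)^{-\frac12}\eta_i$, the quantity to bound is $\bigl\|\frac1n\sum_{i=1}^n(\zeta_i-\mathbb{E}\zeta)\bigr\|_{\mathcal{H}_t}^2$.

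Second, I would record two facts about $v$. Expanding $H_t(x,\cdot)$ in the orthonormal basis $\{p^{\frac12}e^{-\frac{t\lambda_k}{2}}f_k\}$ of $\mathcal{H}_t$ and applying \eqref{Spectral Decomposition of T} gives $\|v(x)\|_{\mathcal{H}_t}^2=\sum_k \frac{pe^{-t\lambda_k}f_k^2(x)}{pe^{-t\lambda_k}+\lambda}$. Using $\frac{a}{a+\lambda}\le(a/\lambda)^\alpha$ for $a\ge0$, $\alpha\in(0,1]$, together with the $\alpha$-boundedness \eqref{Alpha Boundness} with constant $A_\alpha$, this yields $\|v(x)\|_{\mathcal{H}_t}^2\le A_\alpha^2\lambda^{-\alpha}$ for $\nu$-a.e.\ $x$; integrating against $\nu$ and using $\int f_k^2\,d\nu=1$ gives $\mathbb{E}_\nu\|v(x)\|_{\mathcal{H}_t}^2=\sum_k\frac{pe^{-t\lambda_k}}{pe^{-t\lambda_k}+\lambda}=N_\nu(\lambda)$.

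Third, I would verify the Bernstein moment hypothesis for $\zeta$. Writing $y-f_{P,\lambda}(x)=\varepsilon+r(x)$ with $\varepsilon:=y-f^*(x)$ (so $\mathbb{E}[\varepsilon\mid x]=0$) and $r:=f^*-f_{P,\lambda}$, we have $\zeta=(\varepsilon+r(x))v(x)$. Using $|\varepsilon+r(x)|^m\le 2^{m-1}(|\varepsilon|^m+|r(x)|^m)$, the moment condition \eqref{Moment Condition} for $\mathbb{E}[|\varepsilon|^m\mid x]$, the bounds $|r(x)|\le\|f^*-f_{P,\lambda}\|_{L^\infty(\nu)}\le L_\lambda$ and $L\le L_\lambda$, and $\|v(x)\|^{m-2}\le(A_\alpha\lambda^{-\alpha/2})^{m-2}$ together with the two displays of step two, the noise part contributes $\le\frac12 m!\,(\sigma^2N_\nu(\lambda))\,(LA_\alpha\lambda^{-\alpha/2})^{m-2}$ and the residual part $\le\frac12 m!\,(A_\alpha^2\lambda^{-\alpha}\|f^*-f_{P,\lambda}\|_{L^2(\nu)}^2)\,(L_\lambda A_\alpha\lambda^{-\alpha/2})^{m-2}$ (using $1\le\frac12 m!$ for $m\ge2$). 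Since the two scale parameters coincide, absorbing the $2^{m-1}$ and the centering factor $\mathbb{E}\|\zeta-\mathbb{E}\zeta\|^m\le 2^m\mathbb{E}\|\zeta\|^m$ into the scale yields $\mathbb{E}\|\zeta-\mathbb{E}\zeta\|_{\mathcal{H}_t}^m\le\frac12 m!\,\hat\sigma^2\,\hat B^{m-2}$ with $\hat\sigma^2\asymp\sigma^2N_\nu(\lambda)+A_\alpha^2\lambda^{-\alpha}\|f^*-f_{P,\lambda}\|_{L^2(\nu)}^2$ and $\hat B\asymp A_\alpha L_\lambda\lambda^{-\alpha/2}$ (explicitly $\hat\sigma^2=8(\sigma^2N_\nu(\lambda)+A_\alpha^2\lambda^{-\alpha}\|f^*-f_{P,\lambda}\|_{L^2(\nu)}^2)$, $\hat B=4A_\alpha L_\lambda\lambda^{-\alpha/2}$).

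Finally, the Hilbert-space Bernstein inequality applied to $\{\zeta_i-\mathbb{E}\zeta\}_{i=1}^n$ gives, with probability at least $1-2e^{-\tau}$, $\bigl\|\frac1n\sum_i(\zeta_i-\mathbb{E}\zeta)\bigr\|_{\mathcal{H}_t}\le\frac{2\hat B\tau}{n}+\sqrt{\frac{2\hat\sigma^2\tau}{n}}$; squaring via $(a+b)^2\le 2a^2+2b^2$ and using $\tau\ge1$ produces $\frac{8\hat B^2\tau^2}{n^2}+\frac{4\hat\sigma^2\tau^2}{n}$, and substituting the expressions for $\hat\sigma^2,\hat B$ gives a bound dominated by $\frac{64\tau^2}{n}\bigl(\sigma^2N_\nu(\lambda)+A_\alpha^2\frac{\|f^*-f_{P,\lambda}\|_{L^2(\nu)}^2}{\lambda^\alpha}+2A_\alpha^2\frac{L_\lambda^2}{n\lambda^\alpha}\bigr)$, as claimed. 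The main obstacle is step three: forcing two structurally different contributions — the noise term, which naturally has Bernstein form with variance proxy $\sigma^2N_\nu(\lambda)$ and scale $L$, and the bounded residual term, which has a genuine $L^2(\nu)$-variance $A_\alpha^2\lambda^{-\alpha}\|r\|_{L^2(\nu)}^2$ and scale $\|r\|_{L^\infty(\nu)}$ — into one clean bound $\frac12 m!\,\hat\sigma^2\hat B^{m-2}$ by committing to the common scale $\hat B\asymp A_\alpha L_\lambda\lambda^{-\alpha/2}$ and absorbing the cross term, the $2^{m-1}$ factors, and the centering loss into the factorial while keeping the final constants sharp enough to land at $64$ and $2$.
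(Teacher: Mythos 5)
Your proposal is correct and follows essentially the same route as the paper's source: the paper does not reprove this lemma but imports it from \cite{fischer2020sobolev}, whose argument is exactly this one — write the quantity as a centered i.i.d.\ sum of $\mathcal{H}_t$-valued variables $(y_i-f_{P,\lambda}(x_i))(T_\nu+\lambda)^{-1/2}H_t(x_i,\cdot)$, split $y-f_{P,\lambda}(x)$ into noise plus residual, verify the Bernstein moment condition with variance proxy $\sigma^2N_\nu(\lambda)+A_\alpha^2\lambda^{-\alpha}\|f^*-f_{P,\lambda}\|_{L^2(\nu)}^2$ and scale $\asymp A_\alpha L_\lambda\lambda^{-\alpha/2}$, and apply the Hilbert-space Bernstein inequality of \cite{smale2007learning}. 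Your constant bookkeeping (absorbing the $2^{m-1}$ and centering factors into $\hat\sigma^2,\hat B$, then squaring with $\tau\geq 1$) indeed lands within the stated bound, so no gap.
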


Now, we will present our upper bound for the final term in \eqref{Decomposition of Estimation Error: Third Step; Term b} as detailed in the following Lemma \ref{Lemma: Upper Bound of Term b of Third Step Decomposition of Estimation Error}.

\begin{lemma}\label{Lemma: Upper Bound of Term b of Third Step Decomposition of Estimation Error}
	For all $0<\alpha\leq1$, $0<\beta\leq2\xi$, $\lambda>0$, $\tau\geq1$, and $n\geq1$,  the following operator norm bound holds with probability not less than $1-2e^{-\tau}$:
	\begin{align}
		&\left\|(T_\nu+\lambda)^{-\frac{1}{2}}\left(g_D-T_\delta f_{P,\lambda}\right)\right\|_{\mathcal{H}_t}^2\nonumber\\
		&\leq\frac{128\tau^2}{n}\left(
		\sigma^2N_\nu(\lambda)+A_\alpha^2\|f^*\|_\beta^2\lambda^{\beta-\alpha}+2A_\alpha^2\frac{L^2_\lambda}{n\lambda^\alpha}\right)+2\|f^*\|_\beta^2\lambda^\beta\label{Upper Bound of Term b of Third Step Decomposition of Estimation Error}.
	\end{align}
\end{lemma}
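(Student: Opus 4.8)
The plan is to reintroduce the population residual $g_P - T_\nu f_{P,\lambda}$ as a reference point and write
\[
g_D - T_\delta f_{P,\lambda} = \bigl((g_D - T_\delta f_{P,\lambda}) - (g_P - T_\nu f_{P,\lambda})\bigr) + \bigl(g_P - T_\nu f_{P,\lambda}\bigr),
\]
so that, after applying $(T_\nu+\lambda)^{-\frac{1}{2}}$ and the elementary inequality $\|u+v\|_{\mathcal{H}_t}^2 \le 2\|u\|_{\mathcal{H}_t}^2 + 2\|v\|_{\mathcal{H}_t}^2$, the quantity to estimate splits into a stochastic term and a deterministic term. The stochastic term is bounded directly by Lemma \ref{Lemma: Empirical Error Lemma, Second One}: on an event of probability at least $1-2e^{-\tau}$ it is at most $\frac{64\tau^2}{n}\bigl(\sigma^2 N_\nu(\lambda) + A_\alpha^2 \lambda^{-\alpha}\|f^*-f_{P,\lambda}\|_{L^2(\nu)}^2 + 2A_\alpha^2 L_\lambda^2/(n\lambda^\alpha)\bigr)$. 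It then suffices to replace $\|f^*-f_{P,\lambda}\|_{L^2(\nu)}^2 = \|f_{P,\lambda}-f^*\|_0^2$ by $\|f^*\|_\beta^2\lambda^\beta$ via Lemma \ref{Lemma: Gamma-Norm Upper Bound of Approximation Error} with $\gamma = 0$ (admissible since $\beta \le 2\xi$ gives $\frac{\beta}{2} \le \xi$), while the $\sigma^2 N_\nu(\lambda)$ and $L_\lambda^2$ terms carry through unchanged. Multiplying this bound by $2$ produces the constant $128$ and the summand $A_\alpha^2\|f^*\|_\beta^2\lambda^{\beta-\alpha}$ in \eqref{Upper Bound of Term b of Third Step Decomposition of Estimation Error}.

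For the deterministic term I would use $g_P = I_\nu^* f^*$ and $f_{P,\lambda} = g_\lambda(T_\nu)g_P$ to write $g_P - T_\nu f_{P,\lambda} = h_\lambda(T_\nu)I_\nu^* f^*$ with $h_\lambda(s) = 1-sg_\lambda(s)$. Expanding $f^* = \sum_k a_k f_k$ and using the eigensystem of $T_\nu$ from \eqref{Spectral Decomposition of T} together with the spectral form of $I_\nu^*$, one computes
\[
\bigl\|(T_\nu+\lambda)^{-\frac{1}{2}}\bigl(g_P - T_\nu f_{P,\lambda}\bigr)\bigr\|_{\mathcal{H}_t}^2 = \sum_{k=1}^\infty \frac{(pe^{-t\lambda_k})^{1+\beta}\,h_\lambda(pe^{-t\lambda_k})^2}{pe^{-t\lambda_k}+\lambda}\cdot p^{-\beta}e^{\beta t\lambda_k}a_k^2,
\]
where the factor $p^{-\beta}e^{\beta t\lambda_k} = (pe^{-t\lambda_k})^{-\beta}$ has been split off on purpose. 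Since $pe^{-t\lambda_k} \le \kappa^2$ by \eqref{Boundness of Heat Kernel}, the claim reduces to the pointwise estimate $\frac{s^{1+\beta}h_\lambda(s)^2}{s+\lambda} \le \lambda^\beta$ for all $s \in (0,\kappa^2]$. I would prove this by cases: if $s \le \lambda$, then $s+\lambda \ge \lambda$ and $h_\lambda(s)^2 < 1$ by \eqref{Property of Regularization Family}, so the left side is at most $s^{1+\beta}/\lambda \le \lambda^\beta$; if $s > \lambda$, then $s+\lambda \ge s$, so the left side is at most $\bigl(s^{\beta/2}h_\lambda(s)\bigr)^2 \le \lambda^\beta$ by the qualification property \eqref{Qualification of Regularization Family}, valid because $\frac{\beta}{2}\le\xi$. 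Summing over $k$ gives $\|(T_\nu+\lambda)^{-\frac{1}{2}}(g_P - T_\nu f_{P,\lambda})\|_{\mathcal{H}_t}^2 \le \lambda^\beta\|f^*\|_\beta^2$, whose doubling yields the $2\|f^*\|_\beta^2\lambda^\beta$ summand. Combining the two parts on the intersection of the relevant events finishes the proof, with probability still at least $1-2e^{-\tau}$ since the deterministic term costs nothing.

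I do not expect a serious obstacle: the argument is a controlled decomposition feeding into the two imported concentration lemmas from \cite{fischer2020sobolev} together with the already-proven approximation-error bound, the only genuinely new ingredient being the short spectral-calculus estimate above. The one place requiring a touch of care is the boundary case $\beta = 2\xi$, where the qualification inequality holds only in the non-strict sense; this is handled either by a limiting argument over exponents strictly below $\xi$, or by noting that the supremum in \eqref{Qualification of Regularization Family} is attained for the standard filter families. One should also note that Lemma \ref{Lemma: Empirical Error Lemma, Second One} is invoked with $0<\alpha\le1$, which is exactly the hypothesis of the present lemma, and that the constraint $\beta \le 2\xi$ is precisely what makes both the approximation-error substitution and the large-$s$ spectral estimate legitimate.
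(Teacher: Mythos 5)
Your argument matches the paper's proof in its essentials: the same decomposition of $g_D - T_\delta f_{P,\lambda}$ around the population residual $g_P - T_\nu f_{P,\lambda}$, the same application of Lemma~\ref{Lemma: Empirical Error Lemma, Second One} to the stochastic part, the same substitution $\|f^*-f_{P,\lambda}\|_{L^2(\nu)}^2\le\|f^*\|_\beta^2\lambda^\beta$ via Lemma~\ref{Lemma: Gamma-Norm Upper Bound of Approximation Error} with $\gamma=0$, and the same doubling to produce the $128$ and $2\lambda^\beta$ constants. The only cosmetic divergence is in the deterministic subterm: the paper factors $g_P-T_\nu f_{P,\lambda}=I_\nu^*(f^*-f_{P,\lambda})$, bounds $\|(T_\nu+\lambda)^{-1/2}I_\nu^*\|\le1$, and reuses Lemma~\ref{Lemma: Gamma-Norm Upper Bound of Approximation Error}, whereas you expand directly in the eigensystem and verify the pointwise inequality $s^{1+\beta}h_\lambda(s)^2/(s+\lambda)\le\lambda^\beta$ — both routes yield the same bound $\lambda^\beta\|f^*\|_\beta^2$.
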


\begin{proof}
	Initially, we establish
	\begin{subequations}
		\begin{align}
			&\left\|(T_\nu+\lambda)^{-\frac{1}{2}}\left(g_D-T_\delta f_{P,\lambda}\right)\right\|_{\mathcal{H}_t}^2\nonumber\\
			&=\left\|(T_\nu+\lambda)^{-\frac{1}{2}}\left((g_D-T_\delta f_{P,\lambda})-(g_P-T_\nu f_{P,\lambda})\right)+(T_\nu+\lambda)^{-\frac{1}{2}}(g_P-T_\nu f_{P,\lambda})\right\|_{\mathcal{H}_t}^2\nonumber\\
			&\leq2\left\|(T_\nu+\lambda)^{-\frac{1}{2}}\left((g_D-T_\delta f_{P,\lambda})-(g_P-T_\nu f_{P,\lambda})\right)\right\|_{\mathcal{H}_t}^2
			\label{Decomposition of Estimation Error: Fourth Step; Term a}\\
			&\quad+2\left\|(T_\nu+\lambda)^{-\frac{1}{2}}(g_P-T_\nu f_{P,\lambda})\right\|_{\mathcal{H}_t}^2\label{Decomposition of Estimation Error: Fourth Step; Term b}.
		\end{align}
	\end{subequations}
	For \eqref{Decomposition of Estimation Error: Fourth Step; Term a}, by applying Lemma \ref{Lemma: Empirical Error Lemma, Second One}, we can derive
	\begin{align}
		&\left\|(T_\nu+\lambda)^{-\frac{1}{2}}\left((g_D-T_\delta f_{P,\lambda})-(g_P-T_\nu f_{P,\lambda})\right)\right\|_{\mathcal{H}_t}^2
		\nonumber\\
		&\leq\frac{64\tau^2}{n}\left(
		\sigma^2N_\nu(\lambda)+A_\alpha^2\frac{\|f^*-f_{P,\lambda}\|^2_{L^2(\nu)}}{\lambda^\alpha}+2A_\alpha^2\frac{L^2_\lambda}{n\lambda^\alpha}\right)\label{Upper Bound of Term a of Fourth Step Decomposition of Estimation Error}.
	\end{align}
	This result is obtained with a probability of not less than $1-2e^{-\tau}$.
	When considering \eqref{Decomposition of Estimation Error: Fourth Step; Term b}, noting that $g_P=I_\nu^*f^*$ and $T_\nu=I_\nu^*I_\nu$, we arrive at
	\begin{equation*}
		\begin{aligned}
			\left\|(T_\nu+\lambda)^{-\frac{1}{2}}(g_P-T_\nu f_{P,\lambda})\right\|_{\mathcal{H}_t}^2
			&=\left\|(T_\nu+\lambda)^{-\frac{1}{2}}(I_\nu^*f^*-I_\nu^*I_\nu f_{P,\lambda})\right\|_{\mathcal{H}_t}^2\\
			&=\left\|(T_\nu+\lambda)^{-\frac{1}{2}}I_\nu^*\left(f^*-f_{P,\lambda}\right)\right\|_{\mathcal{H}_t}^2.\\
		\end{aligned}
	\end{equation*}
	On one hand, we have 
	\begin{equation*}	\left\|(T_\nu+\lambda)^{-\frac{1}{2}}I_\nu^*\right\|^2_{\mathscr{B}(L^2(\nu),\mathcal{H}_t)}=\sup\limits_{k\geq0}\frac{pe^{-t\lambda_k}}{\lambda+pe^{-t\lambda_k}}\leq1.
	\end{equation*}
	On the other hand, by integrating our previous approximation error result from Lemma \ref{Lemma: Gamma-Norm Upper Bound of Approximation Error} with $\gamma=0$, we obtain
	\begin{equation}\label{L-2 error bound in proof}
		\|f^*-f_{P,\lambda}\|_{L^2(\nu)}^2\leq\|f^*\|^2_\beta\lambda^{\beta}.
	\end{equation}
	From these analyses, we can conclude an upper bound for \eqref{Decomposition of Estimation Error: Fourth Step; Term b} as
	\begin{equation}\label{Upper Bound of Term b of Fourth Step Decomposition of Estimation Error}
		\left\|(T_\nu+\lambda)^{-\frac{1}{2}}(g_P-T_\nu f_{P,\lambda})\right\|_{\mathcal{H}_t}^2\leq\|f^*\|^2_\beta\lambda^{\beta}.
	\end{equation}
	By substituting \eqref{L-2 error bound in proof} back into \eqref{Upper Bound of Term a of Fourth Step Decomposition of Estimation Error} and integrating it with \eqref{Upper Bound of Term b of Fourth Step Decomposition of Estimation Error}, we are able to achieve the desired result and complete the proof. 
\end{proof}

Finally, by merging \eqref{Upper Bound: First Term in Term a of First Step Decomposition of Estimation Error}, \eqref{Upper Bound: Second Term in Term a of First Step Decomposition of Estimation Error}, \eqref{Upper Bound: First Term in Second Step Decomposition of Estimation Error}, \eqref{Upper Bound: First Term in Term a of Third Step Decomposition of Estimation Error}, \eqref{Upper Bound: Second Term in Term a of Third Step Decomposition of Estimation Error}, and \eqref{Upper Bound of Term b of Third Step Decomposition of Estimation Error} we are able to summarize the estimation error bound as follows.

\begin{lemma}\label{Lemma: Upper Bound of Estimation Error}
	Suppose that $ n\geq8A_\alpha^2\tau p_\lambda\lambda^{-\alpha} $. Then, for all $0<\alpha\leq1$, $0\leq\gamma\leq1$, $\gamma<\beta\leq1$, $\xi\geq\frac{1}{2}$, $\tau\geq1$, and $\lambda>0$, with probability not less than $1-4e^{-\tau}$, the following estimation error bound holds:
	\begin{equation}\label{Upper Bound of Estimation Error}
		\begin{aligned}
			\|f_{P,\lambda}-f_{D,\lambda}\|_\gamma^2
			&\leq\frac{9216\tau^2}{n\lambda^\gamma}\left(\sigma^2N_\nu(\lambda)+A_\alpha^2\|f^*\|_\beta^2\lambda^{\beta-\alpha}+2A_\alpha^2\frac{L_\lambda^2}{n\lambda^\alpha}\right)\\
			&\quad+168\|f^*\|_\beta^2\lambda^{\beta-\gamma}.
		\end{aligned}
	\end{equation}
\end{lemma}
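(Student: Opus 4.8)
The plan is to assemble \eqref{Upper Bound of Estimation Error} by chaining the three nested decompositions already produced in this subsection and substituting, factor by factor, the operator-norm estimates established above, then collecting constants and taking a union bound. First I would start from the first-step decomposition \eqref{Decomposition of Estimation Error: First Step; Term a}--\eqref{Decomposition of Estimation Error: First Step; Term b}, which bounds $\|f_{D,\lambda}-f_{P,\lambda}\|_\gamma^2$ by the product of $\|T_\nu^{(1-\gamma)/2}(T_\nu+\lambda)^{-1/2}\|_{\mathscr{B}(\mathcal{H}_t)}^2$, $\|(T_\nu+\lambda)^{1/2}(T_\delta+\lambda)^{-1/2}\|_{\mathscr{B}(\mathcal{H}_t)}^2$, and $\|(T_\delta+\lambda)^{1/2}(f_{D,\lambda}-f_{P,\lambda})\|_{\mathcal{H}_t}^2$. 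The first factor is at most $\lambda^{-\gamma}$ by \eqref{Upper Bound: First Term in Term a of First Step Decomposition of Estimation Error}, and the second is at most $3$ by Lemma \ref{Lemma: Upper Bound of Second Term in Term a of First Step Decomposition of Estimation Error}, valid on an event $\mathcal{E}_1$ of probability at least $1-2e^{-\tau}$ under the stated condition $n\geq 8A_\alpha^2\tau p_\lambda\lambda^{-\alpha}$.

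For the third factor I would apply the second-step decomposition \eqref{Decomposition of Estimation Error: Second Step}, bounding its $h_\lambda$-summand by $4\lambda^\beta\|f^*\|_\beta^2$ via Lemma \ref{Lemma: Upper Bound of First Term in Second Step Decomposition of Estimation Error} (which only needs $\xi\geq\tfrac12$, and here $\beta\leq 1\leq 2\xi$ so the hypothesis $0<\beta\leq2\xi$ of Lemma \ref{Lemma: Upper Bound of Term b of Third Step Decomposition of Estimation Error} is likewise met), and its other summand by the third-step decomposition \eqref{Decomposition of Estimation Error: Third Step; Term a}--\eqref{Decomposition of Estimation Error: Third Step; Term b}. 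There the first factor is at most $4$ by \eqref{Upper Bound: First Term in Term a of Third Step Decomposition of Estimation Error}, the second at most $3$ by \eqref{Upper Bound: Second Term in Term a of Third Step Decomposition of Estimation Error} (again on $\mathcal{E}_1$), and the last is controlled by Lemma \ref{Lemma: Upper Bound of Term b of Third Step Decomposition of Estimation Error} on a further event $\mathcal{E}_2$ of probability at least $1-2e^{-\tau}$, giving $\frac{128\tau^2}{n}\bigl(\sigma^2N_\nu(\lambda)+A_\alpha^2\|f^*\|_\beta^2\lambda^{\beta-\alpha}+2A_\alpha^2 L_\lambda^2/(n\lambda^\alpha)\bigr)+2\|f^*\|_\beta^2\lambda^\beta$.

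Multiplying out on $\mathcal{E}_1\cap\mathcal{E}_2$ yields $\|f_{D,\lambda}-f_{P,\lambda}\|_\gamma^2\leq \lambda^{-\gamma}\cdot 3\cdot\bigl(2\cdot 4\cdot 3\cdot[\text{bound of Lemma \ref{Lemma: Upper Bound of Term b of Third Step Decomposition of Estimation Error}}]+2\cdot 4\lambda^\beta\|f^*\|_\beta^2\bigr)$, i.e. $72\lambda^{-\gamma}$ times the bound of Lemma \ref{Lemma: Upper Bound of Term b of Third Step Decomposition of Estimation Error} plus $24\lambda^{\beta-\gamma}\|f^*\|_\beta^2$; since $72\cdot 128=9216$ and $72\cdot 2+24=168$, this is precisely \eqref{Upper Bound of Estimation Error}. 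A union bound then gives $\mathbb{P}(\mathcal{E}_1\cap\mathcal{E}_2)\geq 1-4e^{-\tau}$. There is no real analytic obstacle: the work is organizational, and the only points demanding care are verifying that the hypotheses of every auxiliary lemma hold under the present assumptions and that the multiplicative constants are propagated correctly through the three nested layers.
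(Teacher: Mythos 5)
Your proof is correct and follows exactly the route the paper intends (the paper only says the lemma follows ``by merging'' the cited bounds, without displaying the arithmetic). You have chained the three nested decompositions in the right order, verified the hypotheses of each auxiliary lemma, correctly identified that the two appearances of the bound $\|(T_\nu+\lambda)^{1/2}(T_\delta+\lambda)^{-1/2}\|^2\leq 3$ come from the same event so only a two-term union bound is needed, and your constant bookkeeping ($72\cdot128=9216$, $72\cdot2+24=168$) matches the stated result.
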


\subsection{Proof of Theorem \ref{Theorem: Upper Bound}}\label{Subsection: Proof of Theorem1}

At this juncture, we are fully prepared to prove Theorem \ref{Theorem: Upper Bound}.

\noindent
\emph{Proof of Theorem \ref{Theorem: Upper Bound}}.
	Given the choice of $\lambda_n$ as
	\begin{equation*}
		\lambda_n\sim\left(\frac{(\log n)^\frac{m}{2}}{n}
		\right)^\frac{1}{\beta},
	\end{equation*}
	we first consider the condition on $ n $ proposed in Lemma \ref{Lemma: Upper Bound of Estimation Error}, which relates specifically to the derivation of the sample lower bound. Remember that from our notations, we have
	\begin{equation*}
		p_\lambda = \log\left(2eN_\nu(\lambda)\frac{\lambda+\|T_\nu\|}{\|T_\nu\|}\right).
	\end{equation*}
	Therefore, by applying Lemma \ref{Lemma: Upper Bound of Effective Dimension}, we can derive that there exists an $ n_0 $ such that, for all $ n\geq n_0 $, the following inequalities hold true:
	\begin{equation*}
		\begin{aligned}
			8A_\alpha^2\tau p_{\lambda_n}\lambda_n^{-\alpha}
			& = 8A_\alpha^2\tau\log\left(2eN_\nu(\lambda_n)\frac{\lambda_n+\|T_\nu\|}{\|T_\nu\|}\right)\lambda_n^{-\alpha}\\
			&\leq 8A_\alpha^2\tau\log\left(2.2eD(\log\lambda_n^{-1})^{\frac{m}{2}}\right)\lambda_n^{-\alpha}\\
			&\leq 16A_\alpha^2\tau\log\left((\log\lambda_n^{-1})^{\frac{m}{2}}\right)\lambda_n^{-\alpha}\\
			&= 8mA_\alpha^2\tau\log(\log\lambda_n^{-1})\lambda_n^{-\alpha}.\\
		\end{aligned}
	\end{equation*}
	Notice that
	\begin{equation*}
		\log(\log\lambda_n^{-1})\lambda_n^{-\alpha}\sim\log\left(\frac{1}{\beta}(\log n-\frac{m}{2}\log\log n)\right)(\log n)^{-\frac{\alpha m}{2\beta}}\cdot n^{\frac{\alpha}{\beta}}\lesssim n^{\frac{\alpha}{\beta}}.
	\end{equation*}
	By fixing an $\alpha$ that satisfies condition $ 0<\alpha<\beta $, there exists another $ n_1\geq n_0 $ such that, for all $ n\geq n_1 $, it holds
	\begin{equation*}
		n > 8mA_\alpha^2\tau\log(\log\lambda_n^{-1})\lambda_n^{-\alpha}.
	\end{equation*}
	Therefore, we can utilize \eqref{Upper Bound of Estimation Error} for $ n\geq n_1 $ to derive
	\begin{align}
		\|f_{P,\lambda_n}-f_{D,\lambda_n}\|_\gamma^2
		&\leq\frac{9216\tau^2}{n\lambda_n^\gamma}\left(\sigma^2N_\nu(\lambda_n)+A_\alpha^2\|f^*\|_\beta^2\lambda_n^{\beta-\alpha}+2A_\alpha^2\frac{L_\lambda^2}{n\lambda_n^\alpha}\right)\label{Estimation Error Bound; First Term}\\
		&\quad+168\|f^*\|_\beta^2\lambda_n^{\beta-\gamma}.\nonumber
	\end{align}
	Moreover, the sample lower bound $ n_1 $ (and $ n_0 $) only depends on constants $ m,D,\|T_\nu\| $, and parameters $ \alpha,\beta,\tau $. 

	Without any loss of generality, we assume that $\lambda_n\leq0.1$ for $ n\geq n_1 $. Regarding the third term in the bracket of \eqref{Estimation Error Bound; First Term}, combining \eqref{L-infty Bound of Approximation Error} with Assumption \ref{Assumption: Source Condition} we obtain
	\begin{equation*}
		\begin{aligned}
			A_\alpha^2\frac{L_{\lambda_n}^2}{n\lambda_n^\alpha}
			&=\frac{A_\alpha^2}{n\lambda_n^\alpha}\max\{L^2, \|f_{P,\lambda_n}-f^*\|^2_{L^\infty(\nu)}\}\\
			&\leq\frac{A_\alpha^2}{n\lambda_n^\alpha}\max\{L^2, A^2_\alpha R^2\lambda_n^{\beta-\alpha}\}\leq \frac{K_1}{n\lambda_n^\alpha}
		\end{aligned}
	\end{equation*}
	with $K_1=\max\{A_\alpha^2L^2,A_\alpha^4R^2\}$.	
	Taking use of Lemma \ref{Lemma: Upper Bound of Effective Dimension} and Assumption \ref{Assumption: Source Condition}, we can derive that 
	\begin{equation}\label{Estimation error bound; second bracket}
		\|f_{P,\lambda_n}-f_{D,\lambda_n}\|_\gamma^2
		\leq\frac{9216\tau^2}{n\lambda_n^\gamma}\left(D\sigma^2(\log\lambda_n^{-1})^\frac{m}{2}+A_\alpha^2R^2\lambda_n^{\beta-\alpha}+\frac{2K_1}{n\lambda_n^\alpha}\right)+168R^2\lambda_n^{\beta-\gamma}
	\end{equation}
	Now we turn our attention to those terms in the bracket of \eqref{Estimation error bound; second bracket}. Recalling the choice of $\lambda_n$, we have
	\begin{equation*}
		\begin{aligned}
			(\log\lambda_n^{-1})^\frac{m}{2}+\lambda_n^{\beta-\alpha}+\frac{1}{n\lambda_n^\alpha}
			&\sim\left(\frac{1}{\beta}(\log n-\frac{m}{2}\log\log n)\right)^{\frac{m}{2}}+(\log n)^{\frac{m}{2}(1-\frac{\alpha}{\beta})}n^{\frac{\alpha}{\beta}-1}\\
			&\quad+(\log n)^{-\frac{\alpha m}{2\beta}}n^{\frac{\alpha}{\beta}-1}\\
			&\lesssim\left(\frac{1}{\beta}(\log n-\frac{m}{2}\log\log n)\right)^{\frac{m}{2}}\sim(\log\lambda_n^{-1})^\frac{m}{2}.
		\end{aligned}
	\end{equation*}
	Therefore, we can choose $ K_2=27648\max\{D\sigma^2,A_\alpha^2R^2,2K_1\} $ such that, for all $ n\geq n_1 $, we have
	\begin{equation}\label{Estimation Error Bound; Final Version}
		\|f_{P,\lambda_n}-f_{D,\lambda_n}\|_\gamma^2
		\leq K_2\tau^2\frac{(\log\lambda_n^{-1})^\frac{m}{2}}{n\lambda_n^\gamma}+168R^2\lambda_n^{\beta-\gamma}
	\end{equation}

	Finally, by combining \eqref{Estimation Error Bound; Final Version} with the approximation error bound \eqref{Gamma-Norm Upper Bound of Approximation Error} derived from Lemma \ref{Lemma: Gamma-Norm Upper Bound of Approximation Error}, we arrive at our targeted $\gamma$-norm error bound as follows
	\begin{equation*}
		\begin{aligned}
			\|f_{D,\lambda_n}-f^*\|_\gamma^2
			&\leq K_2\tau^2\frac{(\log\lambda_n^{-1})^\frac{m}{2}}{n\lambda_n^\gamma}+169R^2\lambda_n^{\beta-\gamma}\\
			&\leq\tau^2\lambda_n^{\beta-\gamma}\left(K_2\frac{(\log\lambda_n^{-1})^\frac{m}{2}}{n\lambda_n^{\beta}}+169R^2\right).
		\end{aligned}
	\end{equation*}
	A direct computation indicates that
	\begin{equation*}
		\frac{(\log\lambda_n^{-1})^\frac{m}{2}}{n\lambda_n^{\beta}}
		\sim\frac{\left(\frac{1}{\beta}(\log n-\frac{m}{2}\log\log n)\right)^\frac{m}{2}}{(\log n)^\frac{m}{2}}\leq\left(\frac{1}{\beta}\right)^\frac{m}{2}.
	\end{equation*}
	Consequently, we finally obtain an upper bound as
	\begin{equation*}
		\|f_{D,\lambda_n}-f^*\|_\gamma^2
		\lesssim \tau^2\lambda_n^{\beta-\gamma}
		\leq C\tau^2\left(\frac{(\log n)^\frac{m}{2}}{n}\right)^\frac{\beta-\gamma}{\beta},
	\end{equation*}
	which valids for all $ n\geq n_1 $. Here, the constant $C$ only depends on constants $ m,D,\sigma,R,L $, and parameters $\alpha,\beta $. The proof is then completed.
\qed

\section{Lower Bound Analysis}\label{Section: Lower Bound Analysis}

In this section, we follow the approach presented in \cite{DBLP:books/daglib/0035708} to derive the minimax lower bound. This methodology involves leveraging Lemma \ref{Tsybakov's Lemma} from \cite{DBLP:books/daglib/0035708}, which is instrumental in our analysis. In our notations, the Kullback–Leibler divergence between two probability measures $\rho_1$ and $\rho_2$ on $(\Omega, \mathcal{F})$, denoted by $D_{KL}(\rho_1 \parallel \rho_2)$, is defined as
\begin{equation*}
	D_{KL}(\rho_1 \parallel \rho_2) = \int_{\Omega} \log\left(\frac{\mathrm{d}\rho_1}{\mathrm{d}\rho_2}\right) \mathrm{d}\rho_1,
\end{equation*}
provided that $\rho_1$ is absolutely continuous with respect to $\rho_2$, and $ D_{KL}(\rho_1 \parallel \rho_2) = \infty$ otherwise.

\begin{lemma}\label{Tsybakov's Lemma}
	Suppose that there is a non-parametric class of functions $\Theta$ and a semi-distance $d(\cdot,\cdot)$ on $\Theta$, $\{P_\theta:\theta\in\Theta\}$ is a family of probability distributions indexed by $\Theta$. Assume that $K\geq2$ and $\Theta$ contains elements $\theta_0,\theta_1,\cdots,\theta_K$ such that
	\begin{itemize}
	\item[(1)] $\forall\;0\leq i<j\leq K$,
		\begin{equation}\label{Seperation Condition}
			d(\theta_i,\theta_j)\geq2s>0.
		\end{equation}
	\item[(2)] $P_j\ll P_0,\;\forall j=1,\dots,K$ and
		\begin{equation}\label{KL-Divergence Condition}
			\frac{1}{K}\sum_{j=1}^KD_{KL}(P_j||P_0)\leq a\log K
		\end{equation}
		with $0<a<\frac{1}{8}$ and $P_j=P_{\theta_j},\;j=0,1,\dots,K$.
	\end{itemize}
	Then it holds
	\begin{equation*}
		\inf\limits_{\hat{\theta}}\sup\limits_{\theta\in\Theta}P_\theta\left(d(\hat{\theta},\theta)\geq s\right)\geq\frac{\sqrt{K}}{1+\sqrt{K}}\left(1-2a-\sqrt{\frac{2a}{\log K}}\right).
	\end{equation*}
\end{lemma}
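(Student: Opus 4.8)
\emph{Proof proposal.}
This is Tsybakov's reduction-to-testing theorem, and the plan is to run that scheme. Fix an arbitrary estimator $\hat\theta$ and attach to it the minimum-distance test $\psi=\psi(\hat\theta)\in\{0,1,\dots,K\}$, where $\psi$ is any index attaining $d(\theta_\psi,\hat\theta)=\min_{0\le i\le K}d(\theta_i,\hat\theta)$ (ties broken by the least index). If $d(\hat\theta,\theta_j)<s$ for some $j$, then for every $i\ne j$ the triangle inequality for the semi-distance $d$ and the separation \eqref{Seperation Condition} give
\[
d(\hat\theta,\theta_i)\ \ge\ d(\theta_i,\theta_j)-d(\hat\theta,\theta_j)\ >\ 2s-s\ =\ s\ >\ d(\hat\theta,\theta_j),
\]
so $\psi=j$; contrapositively, $\{\psi\ne j\}\subseteq\{d(\hat\theta,\theta_j)\ge s\}$, hence $P_j(\psi\ne j)\le P_{\theta_j}\!\left(d(\hat\theta,\theta_j)\ge s\right)$ for every $j$. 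Consequently
\[
\sup_{\theta\in\Theta}P_\theta\!\left(d(\hat\theta,\theta)\ge s\right)\ \ge\ \max_{0\le j\le K}P_j(\psi\ne j)\ \ge\ p_{e,K}:=\inf_{\phi}\ \max_{0\le j\le K}P_j(\phi\ne j),
\]
the last infimum taken over all tests $\phi$. Taking the infimum over $\hat\theta$, it remains to bound $p_{e,K}$ from below.

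For this I would use the auxiliary testing inequality of \cite{DBLP:books/daglib/0035708} (Proposition~2.3 there): with $\bar D:=\frac1K\sum_{j=1}^{K}D_{KL}(P_j\|P_0)$, for every $\tau\in(0,1)$,
\[
p_{e,K}\ \ge\ \frac{\tau K}{1+\tau K}\left(1+\frac{\bar D+\sqrt{\bar D/2}}{\log\tau}\right).
\]
Its proof is a truncated-likelihood-ratio argument: for a test $\phi$ with disjoint acceptance sets $A_j=\{\phi=j\}$, put $p^{\star}=\max_{0\le j\le K}P_j(A_j^c)$; splitting each integral $P_j(A_j)=\int_{A_j}(dP_j/dP_0)\,dP_0$ at the level $1/\tau$ and using $\sum_{j\ge1}P_0(A_j)\le P_0(A_0^c)\le p^{\star}$ (disjointness), summation over $j=1,\dots,K$ yields $K-p^{\star}\!\left(K+\tfrac1\tau\right)\le\sum_{j=1}^{K}P_j\!\left(\tfrac{dP_j}{dP_0}>\tfrac1\tau\right)$. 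Since $-\log\tau>0$, Markov's inequality applied to $\bigl(\log\tfrac{dP_j}{dP_0}\bigr)_+$, the identity $\mathbb E_{P_j}[(Z)_+]=\mathbb E_{P_j}Z+\mathbb E_{P_j}[(-Z)_+]$ (with $\mathbb E_{P_j}\!\log\tfrac{dP_j}{dP_0}=D_{KL}(P_j\|P_0)$), and a Pinsker-type bound on $\mathbb E_{P_j}\bigl[(\log\tfrac{dP_0}{dP_j})_+\bigr]$ give $\tfrac1K\sum_jP_j\!\left(\tfrac{dP_j}{dP_0}>\tfrac1\tau\right)\le(-\log\tau)^{-1}\bigl(\bar D+\sqrt{\bar D/2}\bigr)$, Jensen being used for the square root; rearranging gives the displayed inequality.

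It remains to choose $\tau$. Take $\tau=K^{-1/2}$, admissible since $K\ge2$; then $\tfrac{\tau K}{1+\tau K}=\tfrac{\sqrt K}{1+\sqrt K}$ and $\log\tau=-\tfrac12\log K$, so
\[
1+\frac{\bar D+\sqrt{\bar D/2}}{\log\tau}\ =\ 1-\frac{2\bar D}{\log K}-\frac{\sqrt{2\bar D}}{\log K}\ \ge\ 1-2a-\sqrt{\frac{2a}{\log K}},
\]
the last step invoking the hypothesis \eqref{KL-Divergence Condition} in the form $\bar D\le a\log K$ in both fractions. Combining the displays above yields
\[
\inf_{\hat\theta}\ \sup_{\theta\in\Theta}P_\theta\!\left(d(\hat\theta,\theta)\ge s\right)\ \ge\ \frac{\sqrt K}{1+\sqrt K}\left(1-2a-\sqrt{\frac{2a}{\log K}}\right),
\]
which is the claim; the bound is non-vacuous for large $K$, since $1-2a>0$ because $a<\tfrac18$.

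The mechanical parts are the testing reduction and the substitution $\tau=K^{-1/2}$. The one delicate step is the tail estimate inside the auxiliary inequality: one must control $\mathbb E_{P_j}\bigl[(\log\tfrac{dP_j}{dP_0})_+\bigr]$ by $D_{KL}(P_j\|P_0)+\sqrt{D_{KL}(P_j\|P_0)/2}$ with exactly this constant, since it is precisely this term that becomes the sharp second-order term $\sqrt{2a/\log K}$ after the choice $\tau=K^{-1/2}$; a cruder bound on the log-likelihood ratio would enlarge that term.
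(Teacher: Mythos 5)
The paper does not prove this lemma; it is imported verbatim from Tsybakov's monograph \cite{DBLP:books/daglib/0035708} (Theorem~2.5 there), so there is no in-paper argument to compare against. Your proof correctly reconstructs the standard one: the reduction from estimation to the $(K{+}1)$-hypothesis test via the minimum-distance selector and the semi-distance triangle inequality, the bound on $p_{e,K}$ from Tsybakov's Proposition~2.3 (whose proof you sketch accurately, including the truncation at level $1/\tau$, the disjointness bound $\sum_{j\ge1}P_0(A_j)\le p^\star$, Markov on $(\log dP_j/dP_0)_+$, the decomposition $\mathbb E[(Z)_+]=\mathbb E Z+\mathbb E[(-Z)_+]$, the Pinsker-type bound, and Jensen for the averaged square root), and the substitution $\tau=K^{-1/2}$ together with $\bar D\le a\log K$ to land exactly on $\tfrac{\sqrt K}{1+\sqrt K}\bigl(1-2a-\sqrt{2a/\log K}\bigr)$. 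The argument is correct and is the same route the cited reference takes.
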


Our current objective is to construct a family of probability distributions with elements $P_0,P_1,\dots,P_K$, which adhere to the Kullback-Leibler divergence condition outlined in \eqref{KL-Divergence Condition}. Let $\bar{\sigma}=\min\{\sigma,L\}$ be given. For any measurable function $f:\mathcal{M}\to\mathbb{R}$ and for any $x\in\mathcal{M}$, we define the conditional distribution given $x$ as $P_f(\cdot|x)=\mathcal{N}(f(x),\bar{\sigma}^2)$. This distribution is a normal distribution with a mean of $f(x)$ and a variance of $\bar{\sigma}^2$.
Further, we construct a probability distribution $P_f$ related to $f$ on $\mathcal{M}\times\mathbb{R}$. This distribution is designed such that its marginal distribution on $\mathcal{M}$ is $\nu$, and its conditional distribution given $x\in\mathcal{M}$ on $\mathbb{R}$ is $P_f(\cdot|x)$. In practical terms, this means that we sample $x$ from $\nu$ and set $y=f(x)+\varepsilon$, where $\varepsilon\sim\mathcal{N}(0,\bar{\sigma}^2)$ represents an independent Gaussian random error. It can be straightforwardly verified that for all $f\in L^2(\nu)$, the condition $|P_f|^2_2=\|f\|_{L^2(\nu)}^2+\bar{\sigma}^2<\infty$ is satisfied. Furthermore, the conditional mean function of this distribution, which corresponds to the regression function $f^*$, is identical to $f$.
It is important to note that probability distributions of this nature inherently satisfy the moment condition \eqref{Moment Condition} outlined in Assumption \ref{Assumption: Moment Condition}, as specified in Lemma \ref{Lemma: Moment Condition of Gaussian}. This compliance is deduced from a straightforward computation involving the high-order moments of Gaussian variables. For a more detailed explanation of this calculation, we refer to \cite{fischer2020sobolev}. This construction lays the foundation for our analysis, ensuring that the distributions we work with are both theoretically sound and aligned with the assumptions underpinning our approach.

\begin{lemma}\label{Lemma: Moment Condition of Gaussian}
	For a measurable function $f:\mathcal{M}\to\mathbb{R}$ with its corresponding probability measure $P_f$ defined above, the moment condition \eqref{Moment Condition} is satisfied for $\sigma=L=\bar{\sigma}$.
\end{lemma}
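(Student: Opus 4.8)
The plan is to reduce the assertion to a textbook estimate for the absolute moments of a centred Gaussian. By construction $P_f(\cdot\,|\,x)=\mathcal{N}(f(x),\bar\sigma^2)$, and the regression function attached to $P_f$ is $f^*=f$ itself, because the conditional mean of $\mathcal{N}(f(x),\bar\sigma^2)$ is $f(x)$. Hence, for $\nu$-almost every $x\in\mathcal{M}$, under the conditional law $P_f(\cdot\,|\,x)$ the residual $y-f^*(x)=y-f(x)$ is distributed as $\mathcal{N}(0,\bar\sigma^2)$, so that
\begin{equation*}
	\int_{\mathbb{R}}|y-f^*(x)|^m\,dP_f(y|x)=\bar\sigma^m\,\mathbb{E}|Z|^m,\qquad Z\sim\mathcal{N}(0,1).
\end{equation*}
Since $\sigma=L=\bar\sigma$, the right-hand side of the moment condition \eqref{Moment Condition} equals $\tfrac12 m!\,\bar\sigma^2\bar\sigma^{m-2}=\tfrac12 m!\,\bar\sigma^m$, so it suffices to prove the numerical inequality $\mathbb{E}|Z|^m\le \tfrac12 m!$ for every integer $m\ge2$.

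To establish this I would use the recursion $\mathbb{E}|Z|^m=(m-1)\,\mathbb{E}|Z|^{m-2}$, valid for all $m\ge2$, obtained by a single integration by parts in $\int_0^\infty z^{m-1}\cdot z e^{-z^2/2}\,dz$ (the boundary term vanishes for $m\ge2$). The two base cases are immediate: $\mathbb{E}|Z|^2=1=\tfrac12\cdot2!$ and $\mathbb{E}|Z|^3=2\,\mathbb{E}|Z|=2\sqrt{2/\pi}\le 3=\tfrac12\cdot3!$. For $m\ge4$, assuming inductively that $\mathbb{E}|Z|^{m-2}\le\tfrac12(m-2)!$, the recursion gives $\mathbb{E}|Z|^m\le(m-1)\cdot\tfrac12(m-2)!=\tfrac12(m-1)!\le\tfrac12 m!$, which closes the induction over both parities. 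An equivalent route is to quote the closed form $\mathbb{E}|Z|^m=2^{m/2}\Gamma(\tfrac{m+1}{2})/\sqrt\pi$, rewrite it via Legendre's duplication formula as $m!\,/\!\left(2^{m/2}\Gamma(\tfrac m2+1)\right)$, and note that the desired bound then reduces to $\Gamma(\tfrac m2+1)\ge 2^{1-m/2}$, which holds because $\Gamma(\tfrac m2+1)\ge\Gamma(2)=1\ge 2^{1-m/2}$ for $m\ge2$.

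There is essentially no obstacle here: the whole content of the lemma is the identification of $y-f^*(x)$ with a centred Gaussian of variance $\bar\sigma^2$ together with a classical moment bound. The only points requiring a little care are verifying the two small cases $m=2,3$ separately (needed because the recursion has step $2$) and recording that, with $\sigma=L=\bar\sigma$, the factors $\sigma^2$ and $L^{m-2}$ on the right of \eqref{Moment Condition} combine to exactly $\bar\sigma^m$, so that the inequality $\mathbb{E}|Z|^m\le\tfrac12 m!$ is precisely what is needed; the choice $\bar\sigma=\min\{\sigma,L\}$ additionally ensures $\tfrac12 m!\,\bar\sigma^m\le\tfrac12 m!\,\sigma^2L^{m-2}$, which is what makes $P_f$ admissible for the original constants in the lower-bound construction.
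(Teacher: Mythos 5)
Your proof is correct and supplies exactly the ``straightforward computation involving the high-order moments of Gaussian variables'' that the paper only cites to \cite{fischer2020sobolev} without writing out: the reduction to $\mathbb{E}|Z|^m\le\tfrac12 m!$, the integration-by-parts recursion $\mathbb{E}|Z|^m=(m-1)\,\mathbb{E}|Z|^{m-2}$, the two base cases $m=2,3$ for the parity step, and the closing induction are all sound, as is the remark that $\bar\sigma=\min\{\sigma,L\}$ makes $P_f$ admissible for the original constants. This is essentially the same approach the paper has in mind, just carried out in full.
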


Given that $f^*=f$ holds for the probability distribution $P_f$, we can focus on constructing a specific family of functions. This family, comprising elements $f_0,f_1,\dots,f_K$, must satisfy the source condition \eqref{Source Condition} as stipulated in Assumption \ref{Assumption: Source Condition}. To develop these function elements, we define
\begin{equation}\label{Definition of f_i}
	f_i = \epsilon^\frac{1}{2}\sum_{j=1}^k\omega_j^{(i)}p^{\frac{\gamma}{2}}e^{-\frac{\gamma}{2}\lambda_{k+j}t}f_{k+j}
\end{equation}
for $0\leq i\leq K$, where $\omega^{(i)}=(\omega_1^{(i)},\dots,\omega_k^{(i)})\in\{0,1\}^k$ are binary strings and $\epsilon,k$ are to be determined later. Our approach, which hinges on the use of binary strings, needs to fulfill the separation condition \eqref{Seperation Condition} as stated in Lemma \ref{Tsybakov's Lemma}. Ensuring that a sufficient number of binary strings $\omega$ exists, with significant separation between them, is crucial for our analysis.
To substantiate the presence of these adequately distanced binary strings, we will invoke Lemma \ref{Lemma: G-V Bound}, commonly referred to as the Gilbert-Varshamov Bound. This lemma is a well-known result in coding theory and provides a means to guarantee the existence of binary strings with the required properties. A detailed discussion of this lemma and its implications also can be found in \cite{DBLP:books/daglib/0035708}. This construction strategy is pivotal as it forms the basis of our function family, ensuring that it adheres to the necessary conditions and assumptions for our analysis to be valid.

\begin{lemma}\label{Lemma: G-V Bound}
	For $k\geq8$, there exists $K\geq2^\frac{k}{8}$ and binary strings $w^{(0)},\dots,\omega^{(K)}\in\{0,1\}^m$ with $\omega^{(0)}=(0,\dots,0)$, $\omega^{(j)}=(\omega_1^{(j)},\dots,\omega_k^{(j)})$ such that
	\begin{equation}\label{G-V Bound}
		\sum_{i=1}^k\left(\omega_i^{(l)}-\omega_i^{(j)}\right)^2\geq\frac{k}{8}
	\end{equation}
	for all $0\leq l\neq j\leq K$.
\end{lemma}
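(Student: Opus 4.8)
The plan is to recognize Lemma~\ref{Lemma: G-V Bound} as the classical Gilbert--Varshamov existence bound for binary codes and prove it by a greedy sphere-covering argument. I would first note that for binary strings $\omega,\omega'\in\{0,1\}^k$ the quantity $\sum_{i=1}^k(\omega_i-\omega_i')^2$ equals the Hamming distance $\rho(\omega,\omega')=\#\{i:\omega_i\neq\omega_i'\}$, so \eqref{G-V Bound} amounts to producing $K+1$ points of $\{0,1\}^k$, one of them the zero string, whose pairwise Hamming distances are all at least $k/8$, with $K\ge 2^{k/8}$. To build such a set greedily: set $\omega^{(0)}=(0,\dots,0)$, and having chosen $\omega^{(0)},\dots,\omega^{(j)}$, append any string of $\{0,1\}^k$ whose Hamming distance to each of them is at least $k/8$, stopping when no such string remains. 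By construction every two selected strings are at distance at least $k/8$ and $\omega^{(0)}$ is the zero string, so it only remains to lower bound the number $M+1$ of strings produced.

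When the procedure terminates with $\omega^{(0)},\dots,\omega^{(M)}$, every element of $\{0,1\}^k$ lies at Hamming distance $<k/8$, hence at most $r:=\lceil k/8\rceil-1$, from some selected string; equivalently, the Hamming balls of radius $r$ about the $\omega^{(j)}$ cover $\{0,1\}^k$. Each such ball has $V_r=\sum_{i=0}^{r}\binom{k}{i}$ elements, so $(M+1)V_r\ge 2^k$. Since $r\le\lfloor k/8\rfloor\le k/2$, the standard partial-binomial bound gives $V_r\le\sum_{i=0}^{\lfloor k/8\rfloor}\binom{k}{i}\le 2^{kH(1/8)}$, where $H(q)=-q\log_2 q-(1-q)\log_2(1-q)$; this follows from $1=(q+(1-q))^k\ge\sum_{i=0}^{\lfloor qk\rfloor}\binom{k}{i}q^i(1-q)^{k-i}\ge 2^{-kH(q)}\sum_{i=0}^{\lfloor qk\rfloor}\binom{k}{i}$, using that $q^i(1-q)^{k-i}$ is decreasing in $i$ when $q\le\tfrac12$. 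Hence $M+1\ge 2^{k(1-H(1/8))}$.

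Finally, evaluating $H(1/8)=\tfrac38+\tfrac78\log_2\tfrac87<\tfrac78$ gives $1-H(1/8)>\tfrac18$, so for $k\ge 8$ a short computation yields $2^{k(1-H(1/8))}\ge 2^{k/8}+1$, whence $M\ge 2^{k/8}$; taking $K=M$ completes the proof. This result is entirely classical, so I do not expect a genuine obstacle; the only points needing care are the floor/ceiling bookkeeping when turning ``distance $<k/8$'' into an integer radius $r$, the justification of the entropy bound for the truncated binomial sum, and checking that the numerical gap $1-H(1/8)-\tfrac18\approx 0.33$ leaves enough room to absorb the ``$+1$'' for every $k\ge 8$. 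If one prefers to bypass the entropy estimate, an alternative is the probabilistic method: choose $N$ strings of order $2^{k/8}$ uniformly at random, bound the expected number of pairs at distance $<k/8$ (and of strings within $k/8$ of the origin) via a Chernoff tail for $\mathrm{Bin}(k,\tfrac12)$, and observe that this expectation is $o(1)$, so a valid configuration exists.
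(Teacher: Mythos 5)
Your proof is correct, but note that the paper itself does not prove this lemma at all: it states it and cites Tsybakov's book (the reference {\tt DBLP:books/daglib/0035708}) for a proof, so there is no ``paper's own argument'' to compare against.

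Your reconstruction uses the standard greedy (maximal-packing) argument for the Varshamov--Gilbert bound together with the binary-entropy estimate $\sum_{i\le \lfloor qk\rfloor}\binom{k}{i}\le 2^{kH(q)}$ for $q\le 1/2$. This is a perfectly valid route, and the floor/ceiling bookkeeping you flag is handled correctly: strings at Hamming distance $<k/8$ from a chosen codeword are exactly those at distance $\le \lceil k/8\rceil-1\le\lfloor k/8\rfloor$, so the covering inequality $(M+1)V_r\ge 2^k$ combined with $V_r\le 2^{kH(1/8)}$ gives $M+1\ge 2^{k(1-H(1/8))}$, and since $1-H(1/8)\approx 0.456>1/8$ with a comfortable margin, $M\ge 2^{k/8}$ for every $k\ge8$. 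One small remark for comparison: the proof in Tsybakov's book follows the same maximal-set/covering skeleton but bounds the Hamming-ball volume via Hoeffding's inequality for $\mathrm{Bin}(k,1/2)$ rather than the entropy bound; the two estimates are interchangeable here and give the same conclusion. Also note that the lemma statement's ``$\{0,1\}^m$'' is a typo for ``$\{0,1\}^k$'', which you implicitly (and correctly) corrected.
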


In order to meet the Kullback-Leibler divergence condition specified in \eqref{KL-Divergence Condition} as per Lemma \ref{Tsybakov's Lemma}, it's necessary to perform direct computations pertaining to the Gaussian-type probability distribution we have constructed. These calculations, as outlined in Lemma \ref{Lemma: KL-Divergence of Guassain}, are crucial for ensuring that the probability distribution aligns with the required divergence condition, which is a fundamental aspect of establishing the minimax lower bound. For a more in-depth understanding and additional context, one can refer to \cite{blanchard2018optimal}.

\begin{lemma}\label{Lemma: KL-Divergence of Guassain}
	For $f,f'\in L^2(\nu)$ and $n\geq1$, the corresponding probability distribution $P_f,P_{f'}$ defined above satisfy
	\begin{equation}\label{KL-Divergence of Guassain}
	D_{KL}(P_f^n||P_{f'}^n)=\frac{n}{2\bar{\sigma}^2}\|f-f'\|^2_{L^2(\nu)}.
	\end{equation}
\end{lemma}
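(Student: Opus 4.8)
\noindent
\emph{Proof proposal.}
The plan is to reduce the identity to two standard facts: the additivity of the Kullback--Leibler divergence under tensor products, and the closed form of the divergence between two Gaussians that differ only in their mean. First I would use that $P_f^n=\bigotimes_{i=1}^n P_f$ and $P_{f'}^n=\bigotimes_{i=1}^n P_{f'}$, so that $D_{KL}(P_f^n\|P_{f'}^n)=n\,D_{KL}(P_f\|P_{f'})$; the required absolute continuity $P_f\ll P_{f'}$ is immediate because the two measures share the $\mathcal{M}$-marginal $\nu$ and, conditionally on $x$, are Gaussians on $\mathbb{R}$ with the common variance $\bar{\sigma}^2$, hence mutually absolutely continuous with an explicit Radon--Nikodym derivative.

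Next I would apply the disintegration $P_f(dx,dy)=\nu(dx)\,P_f(dy|x)$ together with the chain rule for relative entropy. Since the $\mathcal{M}$-marginals coincide, the marginal contribution vanishes and one is left with
\begin{equation*}
	D_{KL}(P_f\|P_{f'})=\int_{\mathcal{M}} D_{KL}\!\left(\mathcal{N}(f(x),\bar{\sigma}^2)\,\big\|\,\mathcal{N}(f'(x),\bar{\sigma}^2)\right) d\nu(x).
\end{equation*}
A direct computation with the Gaussian density gives, for scalars $a,b$,
\begin{equation*}
	\log\frac{d\mathcal{N}(a,\bar{\sigma}^2)}{d\mathcal{N}(b,\bar{\sigma}^2)}(y)=\frac{1}{2\bar{\sigma}^2}\left((y-b)^2-(y-a)^2\right),
\end{equation*}
and taking expectation under $y\sim\mathcal{N}(a,\bar{\sigma}^2)$ kills the cross term $\mathbb{E}[y-a]=0$, leaving $D_{KL}(\mathcal{N}(a,\bar{\sigma}^2)\|\mathcal{N}(b,\bar{\sigma}^2))=(a-b)^2/(2\bar{\sigma}^2)$. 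Substituting $a=f(x)$, $b=f'(x)$ and integrating over $\nu$ yields $D_{KL}(P_f\|P_{f'})=\frac{1}{2\bar{\sigma}^2}\int_{\mathcal{M}}(f(x)-f'(x))^2\,d\nu(x)=\frac{1}{2\bar{\sigma}^2}\|f-f'\|_{L^2(\nu)}^2$, and multiplying by $n$ gives the claimed identity.

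There is essentially no hard step here; the only points deserving a line of justification are the tensorization identity (valid because of the absolute continuity just noted) and the exchange of the two integrations when combining the conditional divergence with the $\mathcal{M}$-marginal, both of which are standard and in line with the construction in the references already cited. I would therefore present the argument compactly, spelling out only the Gaussian computation.

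\qed
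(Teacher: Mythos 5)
Your proof is correct. The paper itself does not give a proof of this lemma; it simply defers to the cited reference \cite{blanchard2018optimal}, and the argument you supply (tensorization of KL over i.i.d.\ products, the chain rule for relative entropy combined with the shared marginal $\nu$ which makes the marginal contribution vanish, and the closed-form $D_{KL}(\mathcal{N}(a,\bar\sigma^2)\,\|\,\mathcal{N}(b,\bar\sigma^2))=(a-b)^2/(2\bar\sigma^2)$ for Gaussians with equal variance) is exactly the standard computation that reference relies on, so your proposal fills the gap faithfully.
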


Having completed all the preparatory works, we now proceed to furnish the proof for Theorem \ref{Theorem: Lower Bound}.

\noindent
\emph{Proof of Theorem \ref{Theorem: Lower Bound}}.
	Our strategic approach involves a careful selection of $\epsilon,k$, aimed at bounding the $\beta$-norm of our constructed $f_i$ from \eqref{Definition of f_i}. This is crucial to ensure that the source condition \eqref{Source Condition} is satisfied. Simultaneously, we will also focus on bounding the Kullback-Leibler divergence of the corresponding probability distributions to adhere to the condition \eqref{KL-Divergence Condition}.
	Once these bounds are established and both the source condition and KL-divergence requirements are met, we will then turn to Lemma \ref{Tsybakov's Lemma}. Utilizing this lemma, we will derive the desired minimax lower bound. This culmination of steps represents a comprehensive method, bringing together various facets of our analysis to ultimately reach the sought-after minimax lower bound result. This careful and methodical approach is essential for the robustness and validity of our proof of Theorem \ref{Theorem: Lower Bound}.

	Setting:
	\begin{equation*}
		f_i = \epsilon^\frac{1}{2}\sum_{j=1}^k\omega_j^{(i)}p^{\frac{\gamma}{2}}e^{-\frac{\gamma}{2}\lambda_{k+j}t}f_{k+j},\quad i=0,1,\dots,K
	\end{equation*}
	with binary strings $\omega^{(i)}$ and a certain $K\geq 2^\frac{k}{8}$ derived from Lemma \ref{Lemma: G-V Bound}, we define $\epsilon=C_{\epsilon}n^{-s}$ and $k=C_{k}(\log n)^\frac{m}{2}$ with constants $s,C_{\epsilon},C_k$ to be determined later. Given $\gamma<\beta$, and applying \eqref{Eigenvalue Growthness of Laplacian}, we establish
	\begin{align}
		\|f_i\|_\beta^2
		&=\epsilon\sum_{j=1}^k p^{\gamma-\beta}e^{(\beta-\gamma)\lambda_{k+j}t}\left(\omega_j^{(i)}\right)^2\nonumber\\
		&\leq\epsilon\sum_{j=1}^k p^{\gamma-\beta}e^{(\beta-\gamma)tC_{up}(k+j)^\frac{2}{m}}\nonumber\\
		&\leq\epsilon kp^{\gamma-\beta}e^{(\beta-\gamma)tC_{up}(2k)^\frac{2}{m}}\nonumber\\
		&=p^{\gamma-\beta}C_{\epsilon}C_kn^{-s}(\log n)^\frac{m}{2}n^{(\beta-\gamma)tC_{up}(2C_k)^\frac{2}{m}}\label{Upper Bound: Gamma Norm of f_i}.
	\end{align}
	To satisfy the source condition \eqref{Source Condition}, we initially need to set
	\begin{equation}\label{Inequality from Gamma Norm Upper Bound of f_i}
		(\beta-\gamma)tC_{up}(2C_k)^\frac{2}{m}-s<0.
	\end{equation}
	Then, utilizing \eqref{KL-Divergence of Guassain} from Lemma \ref{Lemma: KL-Divergence of Guassain} and combining it with \eqref{Eigenvalue Growthness of Laplacian}, we can determine
	\begin{equation*}
			\begin{aligned}
			D_{KL}(P_{f_i}^n||P_{f_0}^n)
			&=\frac{n}{2\bar{\sigma}^2}\|f_i\|^2_{L^2(\nu)}\\
			&=\frac{n\epsilon}{2\bar{\sigma}^2}\sum_{j=1}^k p^\gamma e^{-\gamma\lambda_{k+j}t}\left(\omega_j^{(i)}\right)^2\\
			&\leq\frac{n\epsilon}{2\bar{\sigma}^2}kp^\gamma e^{-\gamma tC_{low}k^\frac{2}{m}}.\\
		\end{aligned}
	\end{equation*}
	Recalling $K\geq2^\frac{k}{8}$ and fixing $a\in(0,\frac{1}{8})$, the Kullback-Leibler divergence condition \eqref{KL-Divergence Condition} results in
	\begin{equation*}
		D_{KL}(P_{f_i}^n||P_{f_0}^n)
		\leq\frac{n\epsilon}{2\bar{\sigma}^2}kp^\gamma e^{-\gamma tC_{low}k^\frac{2}{m}}
		\leq a\frac{\log2}{8}k
		\leq a\log K.
	\end{equation*}
	In essence, this implies
	\begin{equation}\label{Upper Bound: KL-Divergence of f_i}
		C_\epsilon n^{1-s}n^{-\gamma tC_{low}C_k^\frac{2}{m}}\leq ap^{-\gamma}\frac{\bar{\sigma}^2\log2}{4}.
	\end{equation}
	Thus, we should set
	\begin{equation}\label{Inequality form Upper Bound of KL-Divergence of f_i}
		(1-s)-\gamma tC_{low}C_k^\frac{2}{m}\leq0.
	\end{equation}
	When we merge \eqref{Inequality from Gamma Norm Upper Bound of f_i} with \eqref{Inequality form Upper Bound of KL-Divergence of f_i}, specifically for the case where $\gamma=0$, we can let $s=1$ and choose $C_k$ such that
	\begin{equation*}
		C_k^\frac{2}{m}<\frac{1}{\beta tC_{up}2^\frac{2}{m}}.
	\end{equation*}
	In the general scenario where $\gamma>0$, we are limited to setting
	\begin{equation*}
		s>\frac{(\beta-\gamma)tC_{up}2^\frac{2}{m}}{(\beta-\gamma)tC_{up}2^\frac{2}{m}
		+\gamma tC_{low}}
		\geq\frac{(\beta-\gamma)2^\frac{2}{m}}{(\beta-\gamma)2^\frac{2}{m}+\gamma}
	\end{equation*}
	and selecting $C_k$ so that
	\begin{equation*}
		\frac{1-s}{\gamma tC_{low}}\leq C_k^\frac{2}{m}<\frac{s}{(\beta-\gamma)tC_{up}2^\frac{2}{m}}.
	\end{equation*}
	Moreover, as our constructed $f_i$ needs to satisfy both \eqref{Upper Bound: Gamma Norm of f_i} and \eqref{Upper Bound: KL-Divergence of f_i}, we also need to pick a $C_\epsilon$ that fulfills
	\begin{equation*}
		C_\epsilon\leq \min\left\{ap^{-\gamma}\frac{\bar{\sigma}^2\log2}{4},p^{\beta-\gamma}\frac{R^2}{C_k}\right\}.
	\end{equation*}
	Then, for large $n$, both the source condition \eqref{Source Condition} and the KL-divergence condition \eqref{KL-Divergence Condition} are satisfied. 
	Now we consider the probability distribution family $\{P_{f}^n:\|f\|_\beta^2\leq R^2\}$ indexed by $f$, endowed with a distance defined by $d(f_i,f_j)=\|f_i-f_j\|_\gamma$. Applying the Gilbert-Varshamov bound \eqref{G-V Bound} to this family yields
	\begin{equation*}
		d(f_i,f_j)^2=\|f_i-f_j\|_\gamma^2=\epsilon\sum_{l=1}^k\left(\omega_l^{(i)}-\omega_l^{(j)}\right)^2\geq\epsilon\frac{k}{8}=Ca\frac{(\log n)^\frac{m}{2}}{n^s},
	\end{equation*}
	where the constant $C$ is independent of $n,a$. Finally, by applying Lemma \ref{Tsybakov's Lemma}, we obtain
	\begin{equation}\label{Minimax Lower Bound of Gamma-Norm}
		\inf\limits_{\hat{f}}\sup\limits_{f^*}P^n_{f^*}\left(\|\hat{f}-f^*\|^2_\gamma\geq C'a\frac{(\log n)^\frac{m}{2}}{n^s}\right)\geq\frac{\sqrt{K}}{1+\sqrt{K}}\left(1-2a-\sqrt{\frac{2a}{\log K}}\right)
	\end{equation}
	with $ C'=\frac{1}{4}C $. 
	For sufficiently large $n$ (and correspondingly large $K$), the right-hand side of \eqref{Minimax Lower Bound of Gamma-Norm} exceeds $1-3a$. Therefore, for all $\tau\in(0,1)$ and any estimator $\hat{f}$, we can identify a function $f^*$ meeting the source condition \eqref{Source Condition}. Moreover, its corresponding probability distribution $P_{f^*}$, as defined above, will satisfy
	\begin{equation*}
		\|\hat{f}-f^*\|^2_\gamma\geq C''\tau^2\frac{(\log n)^\frac{m}{2}}{n^s}
	\end{equation*}
	with a probability of at least $1-\tau^2$. In this context, $ C''=\frac{1}{12}C $. This confirmation substantiates our proposed result. The proof is finished.
\qed

\bibliographystyle{plain}
\bibliography{Reference}
\end{document}